\documentclass[11pt,letterpaper]{article}

\usepackage{booktabs}

\addtolength{\hoffset}{-0.75in} \addtolength{\voffset}{-0.75in}
\addtolength{\textwidth}{1.5in} \addtolength{\textheight}{1.5in}


\usepackage{latexsym}
\usepackage{amssymb,amsmath, bm,pgfplots,tikz,bbm}
\usepackage{graphicx}
\usepackage{marvosym}
\usepackage{multirow,float}
\usepackage{caption}
\captionsetup[subfigure]{width=0.9\linewidth} 
\usepackage{mathtools}

\usepackage{subcaption}
\usepackage{comment}

\usepackage{natbib}
\bibliographystyle{apalike}

\usepackage{color}
\usepackage[bookmarksopen=true, bookmarksnumbered=true,
pdfstartview=FitH, breaklinks=true, urlbordercolor={0 1 0}, citebordercolor={0 0 1}]{hyperref}

\usepackage{dcolumn}
\newcolumntype{.}{D{.}{.}{-1}}
\newcolumntype{d}[1]{D{.}{.}{#1}}

\usepackage{theorem}
\theoremstyle{plain}
\theoremheaderfont{\scshape}
\newtheorem{theorem}{Theorem}

\newtheorem{assumption}{Assumption}
\newtheorem{corollary}{Corollary}
\newtheorem{lemma}{Lemma}

\newcommand{\qed}{\hfill \ensuremath{\Box}}

\providecommand{\norm}[1]{\lVert#1\rVert}
\newenvironment{proof}{\vspace{1ex}\noindent{\bf Proof}\hspace{0.5em}}
{\hfill\qed\vspace{1ex}}
\usetikzlibrary{decorations.markings}
\usetikzlibrary{decorations.pathmorphing}
\usetikzlibrary{shapes.geometric, arrows}
\usetikzlibrary{arrows,decorations.pathmorphing,backgrounds,positioning,fit,matrix}
\usetikzlibrary{shapes,decorations,arrows,calc,arrows.meta,fit,positioning}
\tikzset{
	-Latex,auto,node distance =1 cm and 1 cm,semithick,
	state/.style ={circle, draw, minimum width = 0.7 cm},
	point/.style = {circle, draw, inner sep=0.04cm,fill,node contents={}},
	bidirected/.style={Latex-Latex,dashed},
	el/.style = {inner sep=2pt, align=left, sloped}
}
\usetikzlibrary{positioning}
\usetikzlibrary{fadings}
\usetikzlibrary{intersections}
\usepackage{kantlipsum}
\allowdisplaybreaks

\usepackage{rotating}

\usepackage{arydshln}
\usepackage{threeparttable}

\usepackage[compact]{titlesec}

\newcommand{\blind}{0}


\usepackage[ruled,linesnumbered,vlined]{algorithm2e}
\usepackage{pifont}

\newtheorem{definition}{Definition}

\setcounter{MaxMatrixCols}{12}

\newcommand\E{\mathbb{E}}
\newcommand\V{\mathbb{V}}

\usetikzlibrary{decorations.markings}
\usetikzlibrary{decorations.pathmorphing}
\usetikzlibrary{shapes.geometric, arrows}
\usetikzlibrary{arrows,decorations.pathmorphing,backgrounds,positioning,fit,matrix}
\usetikzlibrary{shapes,decorations,arrows,calc,arrows.meta,fit,positioning}
\tikzset{auto,node distance =1 cm and 1 cm,semithick,
	state/.style ={circle, draw, minimum width = 0.7 cm},
	point/.style = {circle, draw, inner sep=0.04cm,fill,node contents={}},
	bidirected/.style={Latex-Latex,dashed},
	el/.style = {inner sep=2pt, align=left, sloped}
}


\begin{document}

\newcommand\ud{\mathrm{d}}
\newcommand\dist{\buildrel\rm d\over\sim}
\newcommand\ind{\stackrel{\rm indep.}{\sim}}
\newcommand\iid{\stackrel{\rm i.i.d.}{\sim}}
\newcommand\logit{{\rm logit}}
\renewcommand\r{\right}
\renewcommand\l{\left}
\newcommand\cO{\mathcal{O}}
\newcommand\cY{\mathcal{Y}}
\newcommand\cL{\mathcal{L}}
\newcommand\cA{\mathcal{A}}
\newcommand\cB{\mathcal{B}}
\newcommand\cD{\mathcal{D}}
\newcommand\cE{\mathcal{E}}
\newcommand\cH{\mathcal{H}}
\newcommand\cU{\mathcal{U}}
\newcommand\cN{\mathcal{N}}
\newcommand\cT{\mathcal{T}}
\newcommand\cX{\mathcal{X}}
\newcommand\bA{\bm{A}}
\newcommand\bH{\bm{H}}
\newcommand\bB{\bm{B}}
\newcommand\bP{\bm{P}}
\newcommand\bQ{\bm{Q}}
\newcommand\bU{\bm{U}}
\newcommand\bD{\bm{D}}
\newcommand\bS{\bm{S}}
\newcommand\bx{\bm{x}}
\newcommand\bX{\bm{X}}
\newcommand\bTheta{\bm{\Theta}}
\newcommand\btheta{\bm{\theta}}
\newcommand\bV{\bm{V}}
\newcommand\bW{\bm{W}}
\newcommand\bM{\bm{M}}
\newcommand\bZ{\bm{Z}}
\newcommand\bY{\bm{Y}}
\newcommand\bt{\bm{t}}
\newcommand\bbeta{\bm{\beta}}
\newcommand\bpi{\bm{\pi}}
\newcommand\bdelta{\bm{\delta}}
\newcommand\bgamma{\bm{\gamma}}
\newcommand\balpha{\bm{\alpha}}
\newcommand\bmu{\bm{\mu}}
\newcommand\bone{\mathbf{1}}
\newcommand\bzero{\mathbf{0}}
\newcommand\tomega{\tilde\omega}
\newcommand{\argmax}{\operatornamewithlimits{argmax}} 

\DeclarePairedDelimiter\floor{\lfloor}{\rfloor}

\newcommand{\R}{\textsf{R}}
\newcommand{\Python}{\textsf{Python}}
\newcommand\spacingset[1]{\renewcommand{\baselinestretch}%
  {#1}\small\normalsize}

\spacingset{1}

\newcommand{\tit}{\bf Cramming Contextual Bandits for On-policy
  Statistical Evaluation} 


\if0\blind

{\title{\tit\thanks{
      The proposed methodology will be implemented
      through an open-source \Python{} and \R{} packages, {\sf CRAM}
    }
  }

  \author{Zeyang Jia\thanks{PhD Student, Department of Statistics,
      Harvard University, Cambridge, MA, 02138. Email:
      \href{mailto:zeyangjia@g.harvard.edu}{zeyangjia@g.harvard.edu}}\hspace{.75in}
    Kosuke Imai\thanks{Professor, Department of Government and
      Department of Statistics, Harvard University, Cambridge, MA
      02138. Phone: 617--384--6778, Email:
      \href{mailto:Imai@Harvard.Edu}{Imai@Harvard.Edu}, URL:
      \href{https://imai.fas.harvard.edu}{https://imai.fas.harvard.edu}}
    \hspace{.75in} Michael Lingzhi Li\thanks{Assistant Professor,
      Technology and Operations Management, Harvard Business School,
      Boston, MA 02163. Email:
      \href{mailto:mili@hbs.edu}{mili@hbs.edu}, URL:
      \href{https://www.michaellz.com}{https://www.michaellz.com}} } }
 
\fi 

\if1\blind
\title{\tit}
\fi

\date{
  \today}

\maketitle

\pdfbookmark[1]{Title Page}{Title Page}

\thispagestyle{empty}
\setcounter{page}{0}

\begin{abstract}
  We introduce the `cram' method as a general statistical framework
  for evaluating the final learned policy from a multi-armed
  contextual bandit algorithm, using the dataset generated by the same
  bandit algorithm.  The proposed {\it on-policy} evaluation
  methodology differs from most existing methods that focus on
  off-policy performance evaluation of contextual bandit algorithms.
  Cramming utilizes an entire bandit sequence through a single pass of
  data, leading to both statistically and computationally efficient
  evaluation.  We prove that if a bandit algorithm satisfies a certain
  stability condition, the resulting crammed evaluation estimator is
  consistent and asymptotically normal under mild regularity
  conditions. Furthermore, we show that this stability condition holds
  for commonly used linear contextual bandit algorithms, including
  $\epsilon$-greedy, Thompson Sampling, and Upper Confidence Bound
  algorithms. Using both synthetic and publicly available datasets, we
  compare the empirical performance of cramming with the
  state-of-the-art methods. The results demonstrate that the proposed
  cram method reduces the evaluation standard error by approximately
  40\% relative to off-policy evaluation methods while preserving
  unbiasedness and valid confidence interval coverage. 
  
\bigskip
\noindent {\bf Key Words:} adaptive experimentation, policy learning,
policy evaluation, sample splitting

\end{abstract}
\clearpage
\spacingset{1.5}

\section{Introduction}
\label{sec:introduction}

Recent years have witnessed an increasing use of bandit algorithms for
various sequential decision-making problems, including online
advertising \citep[e.g.,][]{li2010contextual}, mobile health
\citep[e.g.,][]{yom2017encouraging, nahum2018just}, and online
education \citep[e.g.,][]{rafferty2019statistical}. The deployment of
these adaptive algorithms is attractive because they are likely to
discover an optimal policy more quickly.  As a result, there now
exists a large literature that explores the conditions under which
different bandit algorithms achieve optimality \citep[e.g.,][]{rusmevichientong2010linearly,abbasi2011improved,bubeck2012regret,li2017provably}. 

Beyond these theoretical guarantees, however, it is essential for
practitioners to be able to efficiently \emph{evaluate} the real-world
empirical performance of bandit algorithms that are being deployed.
In particular, we would like to use the same data to both train a
bandit algorithm and evaluate its performance without the need to
collect a separate data set for evaluation.  In this paper, we
introduce the `cram' method as a general methodological framework for
such {\it on-policy} statistical evaluation of multi-armed contextual
bandit algorithms.  Specifically, the proposed method leverages data
adaptively collected by a bandit algorithm to empirically evaluate the
performance of its final learned policy.

Many scholars have developed {\it off-policy} evaluation methods of
bandit algorithms with adaptively collected data, which requires a
separate dataset for evaluation
\citep[e.g.][]{luedtke2016statistical,zhang2020inference,hadad2021confidence,bibaut2021post,bibaut2021risk,zhan2021off,zhang2021statistical,zhan2023policy}.
The key difference between cram and these existing methods is that the
former evaluates the performance of the final learned policy, which is
a data-dependent parameter, whereas the latter focuses on fixed
population parameters such as coefficients in regression models, the
average treatment effect, and the value of a fixed
policy. \citep[e.g.][]{luedtke2016statistical,zhang2020inference,hadad2021confidence,bibaut2021post,bibaut2021risk,zhan2021off,zhang2021statistical,zhan2023policy}.
Since a learned policy may be changing at every iteration, this
presents a technical challenge in establishing the asymptotic
properties of the cram method.

The name of the proposed methodology is inspired by an intensive
learning approach often used at cram schools, where students iterate a
process of learning new materials and taking practice tests to prepare
for the final exam that covers all materials.  Similarly, the cram
method updates a learned policy by applying a contextual bandit
algorithm to a new batch of data, and then evaluates the performance
of \emph{all} previously learned policies using this new batch of
data. Repeating this process through a single pass of data leads to
the evaluation of the final learned policy, thereby yielding both
computationally and statistically efficient policy evaluation.

\begin{figure}[t!] \spacingset{1}
  \includegraphics[width=\textwidth,clip]{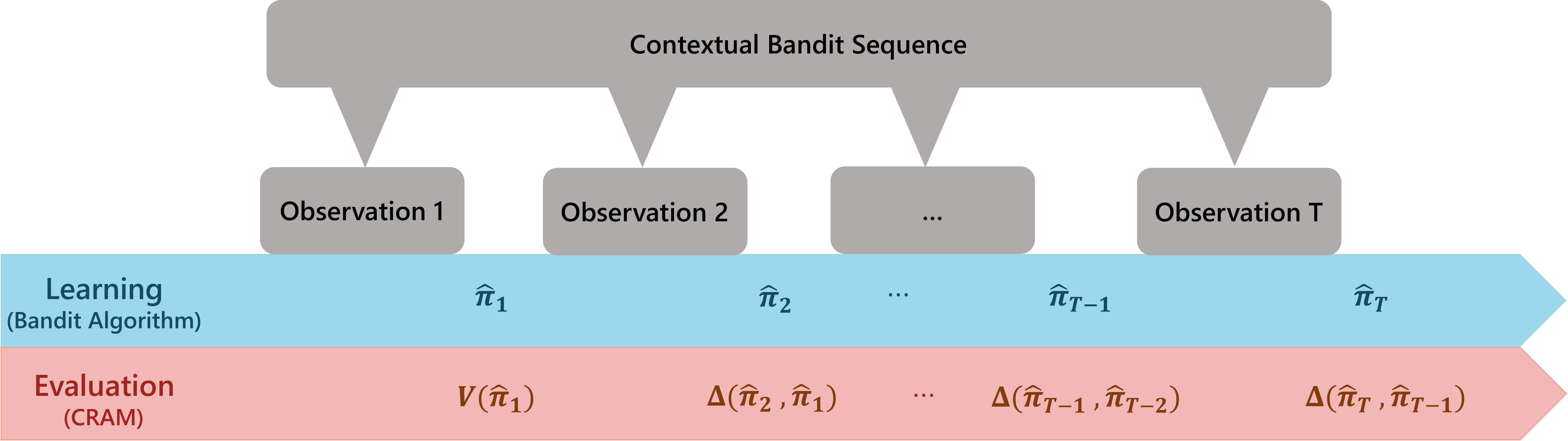}
  \caption{A schematic illustration of the cram method. A contextual
    bandit algorithm uses the first batch of data to obtain the first
    learned policy $\hat\pi_1$, and we estimate the value of this
    policy using the remaining $T-1$ observations, denoted by
    $V(\hat\pi_1)$. Next, the bandit algorithm updates this learned
    policy with the second batch, yielding the updated learned policy
    $\hat\pi_2$.  We then estimate the value difference between these
    two policies,
    $\Delta(\hat\pi_2, \hat\pi_1)=V(\hat\pi_2)-V(\hat\pi_1)$, using
    the remaining $T-2$ batches of data. Repeating this
    update-and-test process leads to the final learned policy
    $\hat\pi_{T-1}$, and the evaluation of final performance
    improvement $\Delta(\hat\pi_{T-1}, \hat\pi_{T-2})$ based on the
    last $T$th batch.  Finally, summing these performance difference
    estimates yields the estimated value of the final learned
    policy.}
  \label{fig:cram}
\end{figure}

Figure~\ref{fig:cram} presents a schematic illustration of the cram
method, which we formally introduce in
Section~\ref{sec:cram_method}. Suppose that we have a total of $T$
batches of data adaptively collected by a contextual bandit algorithm.
We require that the entire sequence of learned policies is stored for
evaluation, i.e., $\{\hat\pi_1, \hat\pi_2,\ldots,\hat\pi_T\}$.  Our
goal is to use the same data and evaluate the empirical performance of
the final learned policy produced by this bandit algorithm using the
$T$ batches of bandit data.  We begin by estimating the value of the
first learned policy $V(\hat\pi_1)$ using all the data other than the
first batch.  In the second iteration, we use the last $T-2$ batches
to estimate the value improvement due to this update
$\Delta(\hat\pi_2,\hat\pi_1)=V(\hat\pi_2)-V(\hat\pi_1)$. Repeating
this procedure additional $T-2$ times yields the value difference
$\Delta(\hat\pi_{T}, \hat\pi_{T-1})$. Summing these value differences
and the value of the first learned policy yield the value of the final
learned policy, i.e.,
$V(\hat\pi_T)=V(\hat\pi_1)+\sum_{t=2}^{T-1}
\Delta(\hat\pi_t,\hat\pi_{t-1})$.

In essence, for estimating every policy value difference
$\Delta(\hat\pi_t,\hat\pi_{t-1})$ at the $t$th iteration, cram
utilizes all future batches of data that are not yet used to train the
learned policy $\hat\pi_t$.  In this way, we turn an on-policy
evaluation problem into a telescoping sum over a sequence of
off-policy evaluation problems. Although there is no bandit data left
for estimating the final value difference
$\Delta(\hat\pi_T, \hat\pi_{T-1})$, we show in
Section~\ref{sec:estimation_inference} that under a certain stability
condition, this term is negligible, leading to the consistency and
asymptotic normality of the crammed evaluation estimator
$\widehat{V}(\hat\pi_T)$.  We further demonstrate that this stability
condition holds for commonly used contextual bandit algorithms such as
$\epsilon$-greedy, Thompson Sampling, and Upper Confidence Bound (UCB)
algorithms.

In Section~\ref{sec:synthetic}, we conduct extensive numerical
experiments based on both synthetic and real-world data and compare
the performance of cram with that of sample splitting (a training set
is used to learn a policy and a test set is used to estimate its
value).  We find that the cram method is more statistically efficient
than sample splitting across a wide range of settings with various
sample sizes, signal strengths, and bandit algorithms.  The policy
evaluation based on cram is also unbiased and yields confidence
intervals whose empirical coverage rates are close to their nominal
rates.

\subsection*{Related literature}

Many scholars have proposed the methods for {\it off-policy}
statistical evaluation with adaptively collected historical data
\citep[see][for a comprehensive review]{bibaut2025demystifying}. It
has been shown that when the data are adaptively collected, the direct
estimation methods such as linear regression for estimating the value
of policy are biased and asymptotically non-normal
\citep[e.g.,][]{nie2018adaptively,shin2019sample,wang2023subgroup}. To
address this problem, many use adaptive weighted Inverse Probability
Weighting (IPW) estimators for off-policy evaluation
\citep[e.g.,][]{luedtke2016statistical,hadad2021confidence,bibaut2021post,bibaut2021risk,zhang2020inference,zhang2021statistical,zhan2021off,zhan2023policy,shen2024doubly}.  

Building on this body of work, we develop a methodology for {\it
  on-policy} statistical evaluation methodology that simultaneously
trains and evaluates a bandit algorithm.  Unlike off-policy
evaluation, the goal of on-policy statistical evaluation is to
estimate the value of the learned policy at any point in time, which is a
data-dependent parameter.  As a result, on-policy evaluation directly
use the same data adaptively collected by a bandit algorithm to
evaluate its performance.  In contrast, the aforementioned off-policy
evaluation literature focuses on the estimation of fixed population
parameters. For example, \cite{zhang2020inference} and
\cite{hadad2021confidence} use the adaptive weighted IPW estimators to
estimate the average treatment effect.  \cite{zhang2021statistical}
considers statistical inference for M-estimators of parametric models
with adaptively collected data. \cite{bibaut2021post} and
\cite{zhan2021off} evaluate a pre-specified known policy using bandit
data.  All of these methods are designed for off-policy evaluation
whose estimands are fixed population quantities.

Importantly, our methodology estimates the value of the learned
policy at every point in time without assuming that different bandit runs converge to an
optimal policy or even an identical policy.  This contrasts with
existing methods that estimate the value of (unknown but fixed)
optimal policy using either i.i.d. data \citep{luedtke2016statistical}
or bandit data \citep{shen2024doubly}.  In particular,
\cite{shen2024doubly} estimates the value of optimal policy using
linear contextual bandit data while assuming the model is correctly
specified.  The cram method does not assume correct model
specification.  

Another related literature is a collection of recent papers that
develop anytime valid inference using adaptively collected data
\citep[e.g.,
][]{kaufmann2021mixture,howard2021time,bibaut2021sequential,bibaut2022near,waudby2022anytime,woong2023design,cook2024semiparametric}.
These methods use finite-sample probability bounds to establish
confidence sequences that are valid at any point during adaptive
experimentation.  While we leave the development of such anytime valid
inference with cram to future work, one important difference is that
similar to off-policy evaluation, most of existing methods focus on
the inference for a fixed population parameter rather than a
data-dependent parameter. One exception is \cite{waudby2022anytime}
which considers confidence sequence and intervals for the average
value of all the policies in the bandit sequence. In contrast, the
cram method estimates the value of the final learned policy in the
bandit sequence.

The cram method uses the same bandit data for both training and
evaluation to improve the efficiency of statistical inference.  This
idea is similar to the resampling methods including bootstrap
\citep[e.g.,][]{efron1992bootstrap,efron1997improvements} and
cross-validation
\citep[e.g.,][]{stone1974,blum1999,zhang1993,austern2020asymptotics,
  NEURIPS2020_bce9abf2,imai2023,bates2023}. However, these methods
typically require the exchangeability of data and are not directly
applicable to adaptively collected data.  In a recent article,
\cite{ghosh2024did} considers using bootstrap to generate many bandit
sequences for testing the performance of the RL algorithm.  However,
this method is computational expensive and its theoretical properties
are yet unknown.  In contrast, the cram method directly leverages the
sequential nature of bandit data and does not require an
exchangeability assumption.

\section{Problem formulation} 
\label{sec:problem_formulation}

Though our methodology and its theoretical properties readily
generalize to batched bandits, for notational simplicity, we focus on
$K$-arm contextual bandits with a total of $T$ time steps.  Let
$\bX_t\in\mathcal{X}$ denote the context variables at time
$t=1,2,\ldots,T$ where $\cX$ is the support of the context variables.
We use $A_t\in \cA=\{1,2,...,K\}$ to denote action taken at time $t$,
and $\{R_t(1), R_t(2),\ldots,R_t(K)\}$ to represent $K$ potential
rewards under different actions.  Note that we only observe one of
these potential rewards, and this realized reward is denoted by
$R_t = R_t(A_t)$.  Throughout this paper, we assume that the
underlying data generating process is stationary.
\begin{assumption}[Stationary Bandit] \spacingset{1}
  \label{ass:stationarity}
  The context and potential rewards
  $\{\bX_t,R_t(1),R_t(2),\ldots,R_t(K)\}$ are independently and
  identically distributed across time steps.
\end{assumption}

Let $\bH_t:= \{\bX_i,A_i,R_i\}_{i=1}^t$ be the realized history up to
time $t$.  We use $\hat\pi_t(\bx,a)$ to denote the learned policy
based on $\bH_{t}$, which represents the probability of selecting
action $a$ under context $\bx$ at the next time step, i.e.,
$\hat\pi_t(\bx, a) = \mathbb{P}(A_{t+1}=a\mid \bX_{t+1}=\bx,\bH_{t})$.
We use the ``hat'' notation to emphasize that the learned policy
$\hat\pi_t$ is dependent on the observed history $\bH_t$.  At each time
step $t$, we first observe a context $\bX_t$, followed by action $A_t$
that is selected according to the learned policy $\hat\pi_{t-1}$ based
on the previous history $\bH_{t-1}$. Finally, the reward under under
the selected action, i.e., $R_t=R_t(A_t)$ is observed, and the bandit
algorithm updates the learned policy to $\hat\pi_t$ based on the
current history $\bH_t$.

Under this stationarity assumption, we consider the {\it on-policy}
statistical evaluation of the learned policy $\hat\pi_{T}$ at time
$T$, using the entire data $\{(\bX_t,A_t,R_t)\}_{t=1}^T$ adaptively
collected by the bandit algorithm itself.  Although
Assumption~\ref{ass:stationarity} requires that the potential rewards
and contexts are i.i.d, the observed data
$\{(\bX_t,A_t,R_t)\}_{t=1}^T$ are not i.i.d. given that the arm
selection is done adaptively.  Our goal is to use this data for
estimating the value of the final learned policy denoted by
$V(\hat\pi_{T})$.  The general definition of the value of policy $\pi$
is given by,
\begin{align}
V(\pi) \ := \  \mathbb{E}_{R,\bX,A\sim \pi}\left[R(\pi(\bX,A))\right]
 \ = \ \sum_{a=1}^K \mathbb{E}_{\bX}\left[\pi(\bX,a)\mu_a(\bX)\right],\label{eq:value}
\end{align}
where $\mu_a(\bx) := \mathbb{E}\left[R_t(a)\mid \bX_t=\bx\right]$
represents the conditional mean of potential reward under action $a$
and context $\bx$.

\section{The cram method}
\label{sec:cram_method}

\begin{algorithm}[t!] \spacingset{1}
  \caption{The cram method for on-policy evaluation of a contextual bandit algorithm}\label{alg:0}
  \KwIn{Data $\{(\bX_t,A_t,R_t)\}_{t=1}^T$ collected by a contextual
    bandit algorithm; Sequence of learned policies
    $\{\hat\pi_1,\ldots,\hat\pi_T\}$ obtained via the bandit
    algorithm.}  \KwOut{Estimated value of the final learned policy,
    $\widehat{V}(\hat\pi_{T})$} Estimate the value of the initial
  learned policy $\hat\pi_1$ using the remaining bandit data
  $\{\bX_t, A_t, R_t\}_{t=2}^T$ and store the estimate as
  $\widehat{V}(\hat\pi_1)$\;
  \For{$t=2$ \KwTo $T-1$}{ Estimate the policy value difference
    between $\hat\pi_t$ and $\hat\pi_{t-1}$, i.e.,
    $\Delta(\hat\pi_t, \hat\pi_{t-1})=V(\hat\pi_t)-V(\hat\pi_{t-1})$,
    using the data $\{\bX_i,A_i,R_i\}_{i=t+1}^T$ and store the
    resulting estimate as $\widehat{\Delta}(\hat\pi_t,\hat\pi_{t-1})$;
  } Estimate the value of the final learned policy
  $\hat\pi_{T}$ as:
  \begin{align*}
    \widehat{V}(\hat\pi_{T}) \approx \widehat{V}(\hat\pi_{T-1}) := \widehat{V}(\hat\pi_1)+\sum_{t=2}^{T-1}\widehat{\Delta}(\hat\pi_t,\hat\pi_{t-1}).
  \end{align*}
\end{algorithm}


Algorithm~\ref{alg:0} outlines the cram method for on-policy
evaluation with contextual bandit data.  The key idea of cramming is
to evaluate the policy value difference after every update of bandit
algorithm, using all the remaining bandit data.  Specifically, at each
time $t$, we consider the policy value difference between the
previously learned policy $\hat\pi_{t-1}$ and the updated policy
$\hat\pi_t$, denoted by $\Delta(\hat\pi_t,\hat\pi_{t-1})$, using the
remaining $T-t$ observations.  Then, summing these policy value
improvements lead to the value of the final learned policy,
\begin{align}
  V(\hat\pi_{T}) 
\ = \ V(\hat\pi_1)+\sum_{t=2}^{T} \Delta(\hat\pi_t,\hat\pi_{t-1}).
\end{align}

Let $\widehat\Delta(\hat\pi_t, \hat\pi_{t-1})$ denote the estimated
policy value difference at time $t$.  At the final time step $t=T$,
there is no bandit data left for estimating this policy value
difference.  Thus, the crammed policy value estimator uses the
following approximation,
\begin{align}
  \widehat{V}(\hat\pi_{T})\ \approx\ \widehat{V}(\hat\pi_{T-1}) \ = \ \widehat{V}(\hat\pi_1)+\sum_{t=2}^{T-1} \widehat{\Delta}(\hat\pi_t;\hat\pi_{t-1}).
\end{align}
In the next section, we impose a relatively weak stability condition
on the bandit algorithm so that this final policy difference term
$\Delta(\hat\pi_t, \hat\pi_{t-1})$ is indeed theoretically
negligible. 

\section{Statistical inference after cramming}
\label{sec:estimation_inference}

In this section, we introduce the crammed policy value estimator
$\widehat{V}(\hat\pi_{T-1})$ based on the inverse probability
weighting (IPW) and derive its asymptotic properties. We establish
that under mild regularity conditions, the proposed estimator is
$L_1$-consistent and asymptotically normal so long as a bandit
algorithm satisfies a certain stability condition. We then show that
this stability condition holds for commonly used bandit algorithms,
including $\epsilon$-greedy, Thompson Sampling, and UCB algorithms.

\subsection{Crammed policy evaluation estimator}

We consider the following IPW-based crammed policy evaluation
estimator.
\begin{definition}[The crammed IPW policy evaluation estimator]
  \label{def:crammed_estimator} \spacingset{1} The crammed IPW policy
  evaluation estimator of $V(\hat\pi_{T-1})$ is given by,
  $$
  \widehat{V}(\hat\pi_{T-1}) \ = \ \widehat{V}(\hat\pi_1) + \sum_{t=2}^{T-1} \widehat{\Delta}(\hat\pi_t, \hat\pi_{t-1})$$
where for $t = 2,\ldots,T-1$ and $j = t+1, t+2,\ldots,T$,
$$
  \begin{aligned}
\widehat V(\hat\pi_1) & \ = \ \frac{1}{T-1}\sum_{j=2}^T
\widehat{\Gamma}_{1j}, \ \quad \widehat{\Gamma}_{1j} \ = \
\sum_{a=1}^K \frac{
  \mathbb{I}(A_j=a)R_j}{\hat\pi_{j-1}(\bX_j,a)}\cdot\hat\pi_1(\bX_j,a), \\
\widehat{\Delta}(\pi_t, \pi_{t-1}) & \ = \ \frac{1}{T-t}
\sum_{j=t+1}^T \widehat{\Gamma}_{tj}, \ \text{and} \quad
\widehat{\Gamma}_{tj} \ = \ \sum_{a=1}^K \frac{
    \mathbb{I}(A_j=a)R_j}{\hat\pi_{j-1}(\bX_j,a)}\cdot(\hat\pi_t(\bX_j,a)-\hat\pi_{t-1}(\bX_j,a)).
\end{aligned}
$$
\end{definition}
In the above definition, $\widehat\Gamma_{tj}$ represents the IPW
estimator of $\Delta(\hat\pi_t,\hat\pi_{t-1})$ based on the $j$th
observation.  We then average $\widehat\Gamma_{tj}$ over all remaining
$T-t$ observations for $t+1\le j\le T$ to obtain the estimator
$\widehat{\Delta}(\hat\pi_t, \hat\pi_{t-1})$ at time $t$.  Summing
these policy value difference estimators for $t=2,\dots,T-1$ and
adding $\widehat V(\hat\pi_1)$ result in the crammed policy value
estimator $\widehat{V}(\hat\pi_{T-1})$.  While we consider the IPW
estimator for simplicity, we can also use the standard doubly robust
augmented IPW (AIPW) estimator by using,
$$
\widehat\Gamma_{tj}= \sum_{a=1}^K\left\{
  \frac{\mathbb{I}(A_j=a)}{\pi_{j-1}(\bX_j,a)} (R_j-\hat\mu_a(\bX_j))
  + \hat\mu_a(\bx)\right\}(\hat\pi_t(\bX_j,a)-\hat\pi_{t-1}(\bX_j,a)),
$$
where $\hat\mu_a(\bX_j)$ is the estimated
conditional mean function of the potential reward $R_j(a)$ given the
context $\bx$.

Definition~\ref{def:crammed_estimator} has the intuitive
interpretation mentioned earlier; the crammed policy value estimator
evaluates each incremental policy update of the bandit algorithm at
every time step with an IPW estimator based on the remaining data.
Unfortunately, the estimated policy value difference
$\widehat\Delta(\hat\pi_t;\hat\pi_{t-1})$ is not independent of each
other, complicating the asymptotic analysis of the crammed policy
value estimator.

To address this problem, we rewrite the crammed policy evaluation
estimator in the following equivalent but alternative form,
\begin{equation}
\widehat{V}(\hat\pi_{T-1}) \ = \ \sum_{j=2}^T  \widehat\Gamma_j(T), \label{eq:Gamma_j_definition} 
\end{equation}
where 
\begin{align}
  \widehat\Gamma_{j}(T)&\ = \ \
                         \sum_{t=1}^{j-1}\frac{1}{T-t}\widehat\Gamma_{tj}
  \nonumber \\
  & \ = \ \sum_{a=1}^K \frac{ \mathbb{I}(A_j=a)R_j}{\hat\pi_{j-1}(\bX_j,a)}\cdot\frac{\hat\pi_1(\bX_j,a) }{T-1}+ \sum_{t=2}^{j-1} \sum_{a=1}^K \frac{ \mathbb{I}(A_j=a)R_j}{\hat\pi_{j-1}(\bX_j,a)}\cdot\frac{(\hat\pi_t(\bX_j,a)-\hat\pi_{t-1}(\bX_j,a))}{T-t} \nonumber\\
  &\ = \ \sum_{a=1}^K \frac{ \mathbb{I}(A_j=a)R_j}{\hat\pi_{j-1}(\bX_j,a)}\cdot\left(\frac{\hat\pi_1(\bX_j,a)}{T-1}+\sum_{t=2}^{j-1}\frac{\hat\pi_t(\bX_j,a)-\hat\pi_{t-1}(\bX_j,a)}{T-t}\right) \label{eq:Gamma_j_definition2}
\end{align}
This alternative expression shows that at each time step $j$, we
estimate a weighted sum of all previous incremental policy value
differences up to time $j-1$.  The estimators of these policy value
differences, i.e.,
$\widehat{\Gamma}_{1j}, \widehat{\Gamma}_{2j}, \ldots,
\widehat{\Gamma}_{j-1,j}$, are weighted such that an earlier
incremental policy value difference has a smaller weight.  The advantage of
this alternative formulation is its clear martingale structure of
$\{\widehat\Gamma_j(T)\}_{j=2}^T$.  We now leverage this structure to
conduct a theoretical analysis of the crammed policy evaluation
estimator.

\subsection{Assumptions on a bandit algorithm}

To derive the asymptotic properties of the crammed policy evaluation
estimator introduced above, we impose two assumptions on the policy
learning algorithm.  The first is the key assumption of the cram
method that requires a bandit algorithm to stabilize at a certain rate
over time.
\begin{assumption}[Stability] \spacingset{1}
  \label{ass:learning_rate} For a given bandit algorithm and time step
  $t$, define the maximal average $L_1$ distance between $\hat\pi_t$ and
  $\hat\pi_{t-1}$ as,
  \begin{align*}
    Q_t :=
    \max_{a\in\{1,2,...,K\}}\int_{\mathbf{x}\in\mathcal{X}} |\hat\pi_t(\mathbf{x},a)-\hat\pi_{t-1}(\mathbf{x},a)| d F_{\mathbf{X}}(\mathbf{x})
  \end{align*}
  where $F_{\bX}$ is the CDF of the context $\bX$.  The bandit
  algorithm satisfies the following stabilization rate condition,
      \begin{equation*}
        \lim_{t\rightarrow\infty}\mathbb{E}\left[t^{1+\delta}Q_t\right] = 0, \label{ass:learning_rate2}
      \end{equation*}
      for any $\delta > 0$.
\end{assumption} 
We emphasize that $Q_t$ defined above is a random variable because the
learned policies, i.e., $\hat\pi_t$ and $\hat\pi_{t-1}$, depend on the
bandit data. 

The stabilization rate $t^{1+\delta}$ in
Assumption~\ref{ass:learning_rate} naturally arises from the fact that
for every iteration of cramming, we lose one observation used for
evaluation.  Therefore, the policy value difference must decrease at a
faster rate to guarantee the convergence of the crammed policy
evaluation estimator.  If the learned policy sequence
$\{\hat\pi_t\}_{t=1}^\infty$ generated by a bandit algorithm satisfies
the above stabilization condition, it has a limit in the $L_1$ metric
almost surely.  We emphasize that the learned policy sequence does not
have to converge to an optimal policy or any particular policy, which
is constant across different realizations of bandit data.  In other
words, different bandit runs can have different limit policies so long
as the stability condition holds. This could happen, for example, if
different bandit runs converge to different locally optimally
policies.

Similar stability conditions appear in the off-policy evaluation
literature mentioned in Section~\ref{sec:introduction}.  For example,
\cite{zhan2021off} requires a stability condition that can be
expressed in our notation as,
$$
\sup_{\bx,a} \left| \frac{\hat\pi_{t}^{-1}(\bx,a)}{\mathbb{E}[\hat\pi_{t}^{-1}(\bx,w)]} - 1 \right| \rightarrow 0, \quad a.s.
$$
This essentially requires the learned policy to converge to its
expectation over different bandit runs, uniformly over all context
$\bx$ and action $a$.  In contrast, we neither require uniform
convergence nor assume different bandit runs converge to the same
policy. \cite{hadad2021confidence} uses weights similar to those of
\cite{zhan2021off} without a stability assumption, but this is because
they consider non-contextual bandits.  While \cite{bibaut2021post} and
\cite{zhang2021statistical} do not require a stability condition, they
assume that the reward model can either be parametrically specified or
be well estimated nonparametrically. \cite{zhang2020inference} assumes
the bandit algorithm eventually select a single arm non-randomly,
which can be seen as a type of stability condition. Lastly,
\cite{luedtke2016statistical} consider an estimator of optimal policy
value in an i.i.d. setting.  They do not explicitly make a stability
assumption, but require the conditional variance of the estimator in
each sequential step to be estimated well.

Finally, we impose the following clipping condition on the learned
policy sequence generated by a bandit algorithm.
\begin{assumption}[Clipping] \spacingset{1}
  \label{ass:clip_rate} There exists a constant
  $\eta\in \left[0,\
    \min\left(\frac{1}{2},\frac{\delta}{1+\delta}\right)\right]$ with
  $\delta > 0$ such that for any given time step $t$, the action
  selection probability
  $\hat\pi_t(\bx, a) := \mathbb{P}\left(A_{t+1}=a\mid \bX_{t+1}=\bx,
    \bH_t\right)$ satisfies
  $$\hat\pi_t(\bx, a) \ge  \frac{c}{t^{\eta}}$$
  for all $a \in \cA$ and $\bx \in \bX$  
 where $c > 0$ is a constant.
\end{assumption}

The clipping condition deals with the fact that the variance of IPW
estimator explodes as the action selection probabilities approaches
zero.  A similar condition is commonly assumed in the off-policy
evaluation literature. For example, \cite{zhang2020inference} presents
the following uniform clipping rate as a sufficient condition for their
results, $c\le\hat\pi_t(\bx,a) \le 1-c $ for some $0<c<\frac{1}{2}$.
Similarly, \cite{zhang2021statistical} assumes the existence of a
constant stationary policy sequence $\{\pi_t^{sta}\}_{t=0}^{T-1}$ such
that for all context $\bx$ and action $a$,
$$
0<\rho_{\min} \le \frac{\pi_{t}^{sta}(\bx,a)}{\hat\pi_{t}(\bx,a)} \le \rho_{\max}<\infty \ \ a.s.
$$
If the stationary policy is unknown, this is equivalent to assuming
that the action selection probabilities are bounded away from zero. In
contrast, our clipping rate assumption allows action selection
probabilities to approach zero.

Furthermore, for off-policy evaluation of a fixed policy with bandit
data, \cite{zhan2021off} and \cite{bibaut2021post} make an important
progress by only assuming the action selection probabilities are
bounded by $t^{-\alpha}$ where $0\le\alpha<\frac{1}{2}$.
\cite{shen2024doubly} use the same $t^{-1/2}$ clipping rate condition,
but assumes the correctly specified linear contextual bandit
algorithms such as $\epsilon$-greedy, UCB, and Thompson Sampling.
Lastly, unlike other off-policy evaluation methods discussed here,
\cite{luedtke2016statistical} considers an i.i.d setting and hence
does not require a clipping rate assumption.

We have a slightly stronger clipping rate than the existing rate of
$t^{-1/2}$ in the off-policy evaluation literature.  This is primarily
because cram evaluates the incremental policy value improvements and
we must control their evaluation. Indeed, the off-policy evaluation
can be thought of a special case of cram, in which a policy never
updates, i.e., $\delta=\infty$. In this case, the cram method will
have the same clipping rate condition as the best available result for
off-policy evaluation.  This explains why the clipping rate directly
relates to the stability condition.

\subsection{Consistency and asymptotic normality} 

We now establish the consistency and asymptotic normality of the
crammed policy evaluation estimator introduced in
Definition~\ref{def:crammed_estimator}.  We require the following mild
regularity conditions about potential rewards.
\begin{assumption}[Bounded conditional expectation and conditional variance]
  \label{ass:bounded} \spacingset{1} Both the conditional expectation 
  and conditional variance of the potential reward, i.e.,
  $\mu_a(\bx):=\E[R(a) \mid \bX = \bx]$ and
  $\sigma_a^2(\bx):=\V(R(a) \mid \bX = \bx)$ for $a=1,2,...,K$,
  respectively, are uniformly bounded over the context space $\cX$:
    $$\sup_{\bx\in\cX} |\mu_a(\bx)| < \mu_U<\infty, \quad
    0<\sigma_L^2<\inf_{\bx\in\cX}\sigma_a^2(\bx)\le\sup_{\bx\in\cX}
    \sigma_a^2(\bx) < \sigma_U^2< \infty.$$
\end{assumption}
\begin{assumption}[Moment condition]
  \label{ass:fourth_moment} \spacingset{1} Each potential reward has
  a finite fourth moment:
  $$\E[R(a)^4]\le K_4 < \infty,  \quad \text{for} \ a = 1,2,...,K.$$
\end{assumption}
These regularity conditions are expected to hold in many bandit
settings and are commonly made in the off-policy evaluation
literature.  For example, \cite{hadad2021confidence} and
\cite{zhan2021off} assume the identical conditions.

We now present the main theorems that establish the asymptotic
properties of the crammed policy evaluation estimator, as the number
of time steps $T$ goes to infinity.  We first focus on the asymptotic
behavior of $\widehat{V}(\hat\pi_{T-1})$ and then later show that the
difference between $V(\hat\pi_{T})$ and $V(\hat\pi_{T-1})$ is
asymptotically negligible.
\begin{theorem}[consistency]
  \label{thm:L1Consistency} \spacingset{1} Suppose that a sequence of
  learned policies $\{\hat\pi_t\}_{t=1}^T$ satisfies
  Assumption~\ref{ass:learning_rate}.  Then, under
  Assumptions~\ref{ass:stationarity},~\ref{ass:clip_rate},~\ref{ass:bounded},~and~\ref{ass:fourth_moment},
  we have,
        $$\bigr|\widehat V(\hat\pi_{T-1}) - V(\hat\pi_{T-1})\bigr|\ \xrightarrow{P}\ 0 \ \ \text{as}\ \  T\rightarrow\infty.$$
\end{theorem}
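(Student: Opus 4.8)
The plan is to split the estimation error $\widehat\Delta(\hat\pi_T;\pi_0)-\Delta(\hat\pi_T;\pi_0)$ into a martingale ``noise'' part and a ``truncation'' part, control the noise part in $L_2$ (hence in $L_1$ by Jensen's inequality), and control the truncation part directly from the stabilization rate of Assumption~\ref{ass:learning_rate}. Throughout, write $\mathcal{F}_j:=\sigma(\cB_1,\dots,\cB_j)$, $\psi_i:=Y_iD_i/e(\bX_i)-Y_i(1-D_i)/(1-e(\bX_i))$, and $\tau(\bx):=\mu_1(\bx)-\mu_0(\bx)$. Two elementary facts do the heavy lifting. First, each batch-level IPW term is conditionally unbiased: for $t<j$, the policies $\hat\pi_t,\hat\pi_{t-1}$ are $\mathcal{F}_{j-1}$-measurable, $\cB_j$ is independent of $\mathcal{F}_{j-1}$, and consistency together with unconfoundedness and overlap (Assumption~\ref{ass:causal_inference}) give $\E[\psi\mid\bX=\bx]=\tau(\bx)$, so $\E[\widehat\Gamma_{tj}\mid\mathcal{F}_{j-1}]=\E_{\bX}[\tau(\bX)(\hat\pi_t(\bX)-\hat\pi_{t-1}(\bX))]=\Delta(\hat\pi_t;\hat\pi_{t-1})$. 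Second, overlap and the uniform bounds on $\mu_d$ and $\sigma_d^2$ in Assumption~\ref{ass:bounded} provide a deterministic constant $C_\psi<\infty$ with $\E[\psi^2\mid\bX=\bx]\le C_\psi$ for all $\bx$.

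I then work with the sequential representation $\widehat\Delta(\hat\pi_T;\pi_0)=\sum_{j=2}^T\widehat\Gamma_j(T)$, where $\widehat\Gamma_j(T)=\frac1B\sum_{i\in\cB_j}\psi_i\,w_j(\bX_i)$ with the $\mathcal{F}_{j-1}$-measurable weight $w_j(\bx):=\sum_{t=1}^{j-1}(\hat\pi_t(\bx)-\hat\pi_{t-1}(\bx))/(T-t)$. By the first fact, $\E[\widehat\Gamma_j(T)\mid\mathcal{F}_{j-1}]=\sum_{t=1}^{j-1}\Delta(\hat\pi_t;\hat\pi_{t-1})/(T-t)$; summing over $j$ and interchanging the two sums collapses this to $\sum_{t=1}^{T-1}\Delta(\hat\pi_t;\hat\pi_{t-1})=\Delta(\hat\pi_{T-1};\pi_0)$ by telescoping. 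Hence $\xi_j:=\widehat\Gamma_j(T)-\E[\widehat\Gamma_j(T)\mid\mathcal{F}_{j-1}]$ is a square-integrable martingale-difference sequence (square-integrability follows from Assumption~\ref{ass:fourth_moment}), and
$$\widehat\Delta(\hat\pi_T;\pi_0)-\Delta(\hat\pi_T;\pi_0)\;=\;\sum_{j=2}^{T}\xi_j\;-\;\Delta(\hat\pi_T;\hat\pi_{T-1}).$$
The truncation term is easy: Cauchy--Schwarz over $\bX$ and $|\hat\pi_T-\hat\pi_{T-1}|\le1$ give $|\Delta(\hat\pi_T;\hat\pi_{T-1})|\le\|\tau\|_{L_2(F_{\bX})}\sqrt{Q_T}$ almost surely, with $\|\tau\|_{L_2(F_{\bX})}<\infty$ by Assumption~\ref{ass:bounded}, so by Assumption~\ref{ass:learning_rate}, $\E[|\Delta(\hat\pi_T;\hat\pi_{T-1})|]\le\|\tau\|_{L_2(F_{\bX})}\sqrt{\E[Q_T]}=O(T^{-(1+\delta)/2})\to0$.

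For the noise part I use $\E[|\sum_{j=2}^T\xi_j|]\le(\sum_{j=2}^T\E[\xi_j^2])^{1/2}$ by martingale orthogonality. Since conditionally on $\mathcal{F}_{j-1}$ the quantity $\widehat\Gamma_j(T)$ is an average of $B$ i.i.d. summands, $\E[\xi_j^2]=\E[\V(\widehat\Gamma_j(T)\mid\mathcal{F}_{j-1})]\le\frac1B\E\bigl[\E[\psi^2\mid\bX]\,w_j(\bX)^2\bigr]\le\frac{C_\psi}{B}\E\bigl[\|w_j\|_{L_2(F_{\bX})}^2\bigr]$. The triangle inequality in $L_2(F_{\bX})$, the bound $\|\hat\pi_t-\hat\pi_{t-1}\|_{L_2(F_{\bX})}^2\le Q_t$, and Assumption~\ref{ass:learning_rate} (which with $0\le Q_t\le1$ gives $Q_t\le K_0'\,t^{-1-\delta}$ almost surely for all $t$, where $K_0'$ absorbs the finitely many $t<R_1$) yield $\|w_j\|_{L_2(F_{\bX})}\le\sqrt{K_0'}\sum_{t=1}^{j-1}t^{-\beta}/(T-t)$ with $\beta:=(1+\delta)/2>1/2$. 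Therefore
$$\E\left[\left(\sum_{j=2}^T\xi_j\right)^{2}\right]\;\le\;\frac{C_\psi K_0'}{B}\sum_{j=2}^{T}\left(\sum_{t=1}^{j-1}\frac{t^{-\beta}}{T-t}\right)^{2}.$$

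The remaining --- and, I expect, the main --- obstacle is to prove this deterministic double sum vanishes as $T\to\infty$. The inner sum $I_j:=\sum_{t=1}^{j-1}t^{-\beta}/(T-t)$ is nondecreasing in $j$, so $\sum_{j=2}^T I_j^{2}\le(T-1)\,I_T^{2}$; splitting $I_T$ at $t\approx T/2$, the small-$t$ block is $O(T^{-\min(\beta,1)})$ (bound $1/(T-t)\le2/T$ and sum $t^{-\beta}$) and the large-$t$ block is $O(T^{-\beta}\log T)$ (bound $t^{-\beta}\le(T/2)^{-\beta}$ and sum the harmonic tail), so $(T-1)I_T^{2}=O(T^{1-2\beta}\log^{2}T)=O(T^{-\delta}\log^{2}T)\to0$ since $\delta>0$. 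This is precisely where the $t^{1+\delta}$ (rather than merely $t^{1}$) rate in Assumption~\ref{ass:learning_rate} is needed. Combining with the truncation bound gives $\E\bigl[\,|\widehat\Delta(\hat\pi_T;\pi_0)-\Delta(\hat\pi_T;\pi_0)|\,\bigr]\to0$. Aside from this summation bookkeeping, the argument is a careful but essentially routine sequence of conditioning steps.
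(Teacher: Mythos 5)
Your proof is correct, and its skeleton matches the paper's: peel off the unestimable final increment $\Delta(\hat\pi_T;\hat\pi_{T-1})$ by the triangle inequality, show it vanishes from the stabilization rate, and control the remainder in $L_2$ via martingale orthogonality, $\E[(\sum_j\xi_j)^2]=\sum_j\E[\V(\widehat\Gamma_j(T)\mid\cH_{j-1})]$. Where you genuinely diverge is in bounding the summed conditional variances. The paper expands $\V(\widehat\Gamma_j(T)\mid\cH_{j-1})$ exactly into diagonal terms $A_{tt}-B_t^2$ and cross terms $A_{t_1t_2}-B_{t_1}B_{t_2}$ (its Lemma~\ref{lemma:conditional_exp_var}), and the crucial step is that the cross terms telescope --- $\sum_{t_2>t_1}(\hat\pi_{t_2}-\hat\pi_{t_2-1})=\hat\pi_{t_2}-\hat\pi_{t_1}\in[-1,1]$ --- so only the $L_1$ size of the $t_1$-increment survives (Lemma~\ref{lemma:aibi_rate}), after which a dedicated summation lemma finishes. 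You instead bound $\V(\widehat\Gamma_j(T)\mid\cH_{j-1})\le (C_\psi/B)\,\lVert w_j\rVert_{L_2(F_{\bX})}^2$ and apply the $L_2$ triangle inequality to $w_j$, converting the $L_1$ stabilization rate $Q_t\lesssim t^{-1-\delta}$ into the $L_2$ rate $t^{-(1+\delta)/2}$ via the elementary fact that $[0,1]$-valued differences satisfy $\lVert\cdot\rVert_{L_2}^2\le\lVert\cdot\rVert_{L_1}$. This avoids the cross-term cancellation machinery entirely and is more self-contained; the price is a cruder rate ($O(T^{-\delta}\log^2 T)$ up to the $\min(\beta,1)$ caveat you should state for $\delta>1$, versus the paper's sharper bound), which is immaterial for consistency but would not suffice for the exact variance characterization needed in Theorem~\ref{thm:Asymptotic_Normality}. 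Your handling of the truncation term via Cauchy--Schwarz, giving $O(T^{-(1+\delta)/2})$ rather than the paper's $O(T^{-1-\delta})$ from $\sup_{\bx}|\tau(\bx)|\cdot Q_T$, is likewise weaker but entirely adequate here.
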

Proof is given in Appendix~\ref{thm:L1Consistency_proof}. 
\begin{theorem}[Asymptotic normality] \spacingset{1}
  \label{thm:Asymptotic_Normality} Suppose that a sequence of policies
  $\{\hat\pi_t\}_{t=1}^T$ satisfies
  Assumption~\ref{ass:learning_rate}.  Then, under
  Assumptions~\ref{ass:stationarity},~\ref{ass:clip_rate},~\ref{ass:bounded},~and~\ref{ass:fourth_moment},
  we have,
    $$\sqrt{T-1}\cdot\frac{\widehat{V}(\hat\pi_{T-1}) - V(\hat\pi_{T-1})}{v_T}\ \stackrel{d}{\longrightarrow}\ \mathcal{N}(0,1).$$
    The asymptotic variance is given by,
    $$v_T^2 \ := \ (T-1)\sum_{j=2}^T \V(\widehat\Gamma_j(T)\mid \bH_{j-1}).$$
\end{theorem}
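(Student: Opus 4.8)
The plan is to rewrite the centered estimator, via the sequential representation $\widehat{\Delta}(\hat\pi_T;\pi_0)=\sum_{j=2}^{T}\widehat{\Gamma}_j(T)$, as a sum of martingale differences with respect to the batch filtration $\cH_j$, so that $v_T$ is exactly the square root of the associated predictable quadratic variation, and then to invoke a self-normalized martingale central limit theorem. Write $\psi_i:=Y_iD_i/e(\bX_i)-Y_i(1-D_i)/(1-e(\bX_i))$ and $w_{j-1}(\bx):=\sum_{t=1}^{j-1}(T-t)^{-1}\{\hat\pi_t(\bx)-\hat\pi_{t-1}(\bx)\}$, so that $\widehat{\Gamma}_j(T)=B^{-1}\sum_{i\in\cB_j}\psi_i\,w_{j-1}(\bX_i)$ is $\cH_j$-measurable. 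Because $\cB_j$ is independent of $\cH_{j-1}$ and every $\hat\pi_t$ with $t\le j-1$ is $\cH_{j-1}$-measurable, Assumption~\ref{ass:causal_inference} makes the IPW weights conditionally unbiased, giving $\E[\widehat{\Gamma}_j(T)\mid\cH_{j-1}]=\sum_{t=1}^{j-1}(T-t)^{-1}\Delta(\hat\pi_t;\hat\pi_{t-1})$. Reversing the order of summation yields the telescoping identity $\sum_{j=2}^{T}\E[\widehat{\Gamma}_j(T)\mid\cH_{j-1}]=\sum_{t=1}^{T-1}\Delta(\hat\pi_t;\hat\pi_{t-1})=\Delta(\hat\pi_T;\pi_0)-\Delta(\hat\pi_T;\hat\pi_{T-1})$, hence
\[
\widehat{\Delta}(\hat\pi_T;\pi_0)-\Delta(\hat\pi_T;\pi_0)=\sum_{j=2}^{T}\xi_j-\Delta(\hat\pi_T;\hat\pi_{T-1}),\qquad \xi_j:=\widehat{\Gamma}_j(T)-\E[\widehat{\Gamma}_j(T)\mid\cH_{j-1}],
\]
where $\{\xi_j\}_{j=2}^T$ is a martingale difference array for $\{\cH_j\}$ with $\sum_{j=2}^{T}\V(\xi_j\mid\cH_{j-1})=\sum_{j=2}^T\V(\widehat{\Gamma}_j(T)\mid\cH_{j-1})=v_T^2/T$.

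Next I would dispose of the dropped term and show that $v_T$ does not degenerate. For the former, $|\Delta(\hat\pi_T;\hat\pi_{T-1})|\le\E_{\bX}[|\mu_1(\bX)-\mu_0(\bX)|\,|\hat\pi_T(\bX)-\hat\pi_{T-1}(\bX)|]\le C\,Q_T\le CK_0T^{-1-\delta}$ by Assumptions~\ref{ass:learning_rate} and~\ref{ass:bounded}, so $\sqrt T\,\Delta(\hat\pi_T;\hat\pi_{T-1})\to0$ a.s.\ and this term is negligible after division by $v_T$. For the latter, the law of total variance gives $\V(\widehat{\Gamma}_j(T)\mid\cH_{j-1})\ge B^{-1}\E_{\bX}[w_{j-1}(\bX)^2\,\V(\psi\mid\bX)]\ge(2c_0'/B)\,\E_{\bX}[w_{j-1}(\bX)^2]$, where $c_0':=\inf_{\bx}\min_{d}\sigma_d^2(\bx)>0$ by Assumption~\ref{ass:bounded} and overlap (using $\V(\psi\mid\bx)\ge\sigma_1^2(\bx)/e(\bx)+\sigma_0^2(\bx)/(1-e(\bx))$). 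By Proposition~\ref{prop:Policy_limit} and the rate $Q_t\le K_0t^{-1-\delta}$ one checks that $T\,w_{j-1}\to\hat\pi_\infty-\pi_0$ in $L_1(F_{\bX})$ uniformly over $j\ge\epsilon T$, so Assumption~\ref{ass:Limit_diff} forces $\E_{\bX}[|w_{j-1}(\bX)|]>M_1/(2T)$ for all such $j$ once $T$ is large; Jensen's inequality then yields $v_T^2\ge(2c_0'T/B)\sum_{j\ge\epsilon T}\E_{\bX}[|w_{j-1}(\bX)|]^2\ge c_0'(1-\epsilon)M_1^2/(2B)$, so $\liminf_T v_T^2>0$ almost surely.

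Finally I would apply a martingale CLT to $\sqrt T\sum_{j=2}^{T}\xi_j/v_T$; since $v_T^2/T$ is precisely the predictable variation of $\{\xi_j\}$, this is a self-normalized MDS CLT (equivalently, obtain a mixing/stable limit and then self-normalize by $v_T$), and the remaining content is the conditional Lindeberg condition which, modulo the standard handling of the non-predictable normalizer, reads $\tfrac{T}{v_T^2}\sum_{j=2}^{T}\E[\xi_j^2\,\mathbf{1}\{|\xi_j|>\epsilon v_T/\sqrt T\}\mid\cH_{j-1}]\to_p0$ for every $\epsilon>0$. This is the step I expect to be hardest. Because $\xi_j$ is an average of $B$ i.i.d., conditionally mean-zero terms with finite fourth moment (Assumption~\ref{ass:fourth_moment} and overlap give $\E[\psi^4]<\infty$, and $\E[\psi^2\mid\bX]$ is bounded by Assumption~\ref{ass:bounded}), for most $j$ the event $\{|\xi_j|>\epsilon v_T/\sqrt T\}$ can occur only when some $|\psi_i|$ is of order $\sqrt T$ times its typical size, which has probability $O(T^{-2})$; quantifying this requires the sharp weight estimate $\|w_{j-1}\|_{\infty}\le 2/(T-j+1)$, obtained from summation by parts and the fact that the partial sums $\hat\pi_t(\bx)-\pi_0(\bx)$ vary within an interval of length one, together with $\sum_{j}\|w_{j-1}\|_{\infty}^2\le 4\sum_{k}k^{-2}<\infty$. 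The genuinely delicate point is the handling of the last few batches, where $\|w_{j-1}\|_{\infty}$ need not be small: one must show that under Assumption~\ref{ass:learning_rate} no single batch carries a non-vanishing share of the total predictable variation, i.e.\ $\max_j\V(\widehat{\Gamma}_j(T)\mid\cH_{j-1})=o_p(v_T^2/T)$, which is where the weights $(T-t)^{-1}$, the stabilization rate $t^{-1-\delta}$, and boundedness of the learned policies must be played off against each other. Granting this, the conditional Lindeberg condition holds, the martingale CLT delivers $\sqrt T\sum_{j=2}^{T}\xi_j/v_T\to N(0,1)$ in distribution, and adding the negligible remainder from the previous paragraph completes the proof.
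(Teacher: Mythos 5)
Your martingale decomposition is correct and coincides with what the paper establishes in Lemma~\ref{lemma:conditional_exp_var}: writing $\widehat\Delta(\hat\pi_T;\pi_0)-\Delta(\hat\pi_T;\pi_0)=\sum_{j=2}^T\xi_j-\Delta(\hat\pi_T;\hat\pi_{T-1})$ with $\xi_j:=\widehat\Gamma_j(T)-\E[\widehat\Gamma_j(T)\mid\cH_{j-1}]$ a martingale difference array whose predictable quadratic variation is exactly $v_T^2/T$, and your treatment of the dropped term and of the non-degeneracy of $v_T$ parallels the paper's Condition~\eqref{cond:xibound} and Lemma~\ref{lemma:Policy_diff}. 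The route to normality, however, is genuinely different, and it is precisely at the step you yourself flag as hardest that the proposal has a real gap. The paper never verifies a conditional Lindeberg condition for the full array. Instead it cuts the double sum at $f(T)=\lfloor T^{1/5}\rfloor$: every term involving a policy $\hat\pi_t$ with $t>f(T)$ is shown to be $o_p(T^{-1/2})$ by a direct variance/Chebyshev computation (Conditions~\eqref{cond:term1} and~\eqref{cond:term2}), and the surviving block $\zeta^{(3)}_T$ is, conditional on $\cH_{f(T)}$, a sum of $T-f(T)$ i.i.d.\ random variables $\sum_{t\le f(T)}\widehat\Gamma_{tj}$ to which the Berry--Esseen theorem applies with a third-moment bound that never involves the large weights $1/(T-t)$ for $t$ near $T$. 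Normality thus comes from a conditionally i.i.d.\ structure rather than a martingale CLT, and the random normalizer is tamed by showing $v_T^2-\xi(T)^2\stackrel{p}{\longrightarrow}0$ with $\xi(T)^2$ measurable with respect to the early $\sigma$-field $\cH_{f(T)}$.

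The gap in your version is twofold. First, the condition you propose to verify for the late batches, $\max_j\V(\widehat\Gamma_j(T)\mid\cH_{j-1})=o_p(v_T^2/T)$, is a consequence of the conditional Lindeberg condition but is not sufficient for it; Lindeberg requires tail control of each $\xi_j$, and the natural Lyapunov route runs into trouble for $j$ near $T$, where $\|w_{j-1}\|_\infty$ is bounded only by a constant, Assumption~\ref{ass:fourth_moment} supplies only unconditional fourth moments, and a bound of the form $\E[\xi_j^4\mid\cH_{j-1}]\lesssim\|w_{j-1}\|_\infty^2\,\E_{\bX}[w_{j-1}(\bX)^2]$ multiplied by $T^2$ produces contributions of order $T^{1-\delta}\log T$ from the last batches unless one exploits, as the paper does, that those batches' entire contribution to the estimator (not merely to its variance) is asymptotically negligible. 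Second, ``the standard handling of the non-predictable normalizer'' is doing real work: dividing by $v_T$, which is $\cH_{T-1}$-measurable, requires either a stable martingale CLT together with a proof that $v_T^2$ converges in probability to an almost surely positive random limit, or the paper's device of replacing $v_T^2$ by an asymptotically equivalent early-measurable quantity. Your architecture is plausibly salvageable along these lines, but as written the central limit step is asserted rather than proved, and the stated sufficient condition for it would not suffice.
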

Proof is given in Appendix~\ref{thm:Asymptotic_Normality_proof}.
Unlike the standard central limit theorem, both the estimand
$\widehat{V}(\hat\pi_{T-1})$ and the asymptotic variance $v_T^2$ are
random variables that are functions of the observed bandit data. Since
the goal of cramming is the on-policy evaluation of the final learned
policy, the asymptotic variance also depends on the realized data
sequences and policies.

It may be a surprise to find that the asymptotic normality only
requires the stability of a bandit algorithm in terms of the $L_1$
distance (Assumption~\ref{ass:learning_rate}) rather than $L_2$
distance.
This is mainly due to the cancellation of the policy differences that
occur in cramming as we sum over the $T-1$ iterations to arrive at the
final crammed evaluation estimator.


Next, we introduce the crammed variance estimator, which is consistent
for the asymptotic variance $v_T^2$ of the crammed policy evaluation
estimator.  
\begin{definition}[The crammed variance estimator]
  \spacingset{1} \label{def:crammed_variance} Define the following
  estimator of $\V(\widehat\Gamma_j(T)\mid H_{j-1})$ for $j<T$:
\begin{align*}
    \hat v^2_{Tj} \ : = \frac{1}{T-j+1}\sum_{k=j}^T(G_{Tjk} - \bar G_{Tj\cdot})^2 ,
\end{align*}
where
$$
\bar G_{Tj\cdot} \ = \ \frac{1}{T-j+1}\sum_{k=j}^T G_{Tjk},  \quad
 G_{Tjk} \ : =  \ \sum_{a=1}^K\frac{R_k\mathbb{I}(A_k=a)}{\pi_{k-1}(\bX_k, a)}\left(\frac{\hat\pi_1(\bX_k,a)}{T-1}+\sum_{t=2}^{j-1}\frac{\pi_t(\bX_k,a)-\pi_{t-1}(\bX_k,a)}{T-t}\right).
$$
Then, the crammed variance estimator is
defined as:
$$\hat v^2_{T}\ := \ (T-1)\sum_{j=2}^{T} \hat v_{Tj}^2,$$
where $\hat v_{TT}^2=0$.
\end{definition}
Note that since we do not have enough data to estimate the last
variance term $\V(\widehat\Gamma_T(T)\mid \bH_{T-1})$, we set the
corresponding variance estimator to zero, i.e., $\hat v_{TT}^2=0$. In
a batched bandit, however, this variance parameter can be estimated.

The next theorem shows that this crammed variance estimator is
consistent. 
\begin{theorem}[Consistency of the crammed variance estimator.]
  \label{thm:variance_estimator}\spacingset{1} Suppose that the
  conditions of Theorem~\ref{thm:Asymptotic_Normality} hold.  Then, as
  $T\rightarrow\infty$, we have:
  $$| \hat v_{T}^2 - v_T^2|\ \stackrel{p}{\longrightarrow}\ 0.$$
\end{theorem}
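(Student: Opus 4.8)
The plan is to write $\hat v_T^2-v_T^2$ as a prefactor times a sum of per-batch sample-variance fluctuations, each with conditional mean zero, show this has vanishing second moment, and conclude by Chebyshev. For the reduction, recall $\widehat\Gamma_j(T)=\tfrac1B\sum_{i\in\cB_j}\hat g_{Tj}(\bZ_i)$ and that, conditional on $\cH_{j-1}$, the policies $\hat\pi_1,\dots,\hat\pi_{j-1}$ (hence the function $\hat g_{Tj}$) are fixed while the observations in $\cB_j\cup\dots\cup\cB_T$ are i.i.d.\ from the population; so $\V(\widehat\Gamma_j(T)\mid\cH_{j-1})=\tfrac1B\V(\hat g_{Tj}(\bZ)\mid\cH_{j-1})$ and $v_T^2=\tfrac TB\sum_{j=2}^T\V(\hat g_{Tj}(\bZ)\mid\cH_{j-1})$. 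Since $\widehat{\mathbb V}(\hat g_{Tj})$ is the Bessel-corrected sample variance of $m_j:=B(T-j+1)$ i.i.d.\ draws of the $\cH_{j-1}$-measurable function $\hat g_{Tj}$, it satisfies $\E[\widehat{\mathbb V}(\hat g_{Tj})\mid\cH_{j-1}]=\V(\hat g_{Tj}(\bZ)\mid\cH_{j-1})$. Hence
$$\hat v_T^2-v_T^2 \ = \ \frac TB\sum_{j=2}^T D_{Tj},\qquad D_{Tj}:=\widehat{\mathbb V}(\hat g_{Tj})-\V(\hat g_{Tj}(\bZ)\mid\cH_{j-1}),\qquad \E[D_{Tj}\mid\cH_{j-1}]=0,$$
with $D_{Tj}$ measurable w.r.t.\ $\cH_{j-1}$ together with the tail batches $\cB_j,\dots,\cB_T$.

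Next I would establish a priori control of $\hat g_{Tj}(\bZ)=W(\bZ)\rho_{Tj}(\bX)$, where $W(\bZ):=\tfrac{YD}{e(\bX)}-\tfrac{Y(1-D)}{1-e(\bX)}$ and $\rho_{Tj}(\bx):=\sum_{t=1}^{j-1}\tfrac{\hat\pi_t(\bx)-\hat\pi_{t-1}(\bx)}{T-t}$. Summation by parts, using $\hat\pi_t(\bx)\in[0,1]$ and the monotonicity of $t\mapsto 1/(T-t)$, gives the deterministic bound $\sup_{\bx}|\rho_{Tj}(\bx)|\le 2/(T-j+1)$; Assumption~\ref{ass:learning_rate} (i.e.\ $Q_t\le K_0t^{-1-\delta}$ eventually), after splitting $\sum_t Q_t/(T-t)$ at $t=T/2$, gives $\E_{\bX}|\rho_{Tj}(\bX)|\lesssim 1/T$ uniformly in $2\le j\le T$; and Assumptions~\ref{ass:bounded}--\ref{ass:fourth_moment} with overlap give $\sup_{\bx}\E[W(\bZ)^2\mid\bX=\bx]<\infty$ and $\E[W(\bZ)^4]<\infty$. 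Combining these yields, conditional on $\cH_{j-1}$, the bounds $\V(\hat g_{Tj}(\bZ)\mid\cH_{j-1})\lesssim \tfrac1{T(T-j+1)}$ and a fourth--central-moment bound of order $\tfrac{M}{T(T-j+1)^3}+\tfrac{\epsilon(M)}{(T-j+1)^4}$ for any truncation level $M$ of $\E[W^4\mid\bX]$, with $\epsilon(M)\to0$. The standard formula for the variance of a sample variance then gives $\V(D_{Tj}\mid\cH_{j-1})\lesssim m_j^{-1}\E\big[(\hat g_{Tj}(\bZ)-\E[\hat g_{Tj}(\bZ)\mid\cH_{j-1}])^4\mid\cH_{j-1}\big]$.

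For the second moment, since $\E[\tfrac TB\sum_j D_{Tj}]=0$ it suffices to show $\E\big[(\tfrac TB\sum_j D_{Tj})^2\big]\to0$; the naive term-by-term (Jensen) bound is far too lossy against the $T/B$ prefactor, so the cross terms are essential. Fix $k<l$ and condition on $\cH_{l-1}$: then $D_{Tl}$ is a mean-zero function of the tail pool $\cB_l\cup\dots\cup\cB_T$ alone, while $D_{Tk}$, through its sample variance over $\cB_k\cup\dots\cup\cB_T$, is a polynomial of degree at most two in the empirical moments of $\hat g_{Tk}$ over that same tail pool, with coefficients depending only on $\cH_{l-1}$ (via $\hat\pi_1,\dots,\hat\pi_{k-1}$ and the batches $\cB_k,\dots,\cB_{l-1}$). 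The $\cH_{l-1}$-measurable part of $D_{Tk}$ contributes nothing to $\E[D_{Tk}D_{Tl}\mid\cH_{l-1}]$, and the remaining shared-sample cross-moments are controlled by the moment bounds of the previous step; the $1/(T-t)$ weighting and the telescoping in $\rho_{Tk},\rho_{Tl}$ then force $|\E[D_{Tk}D_{Tl}]|$ to decay polynomially in both $T-k$ and $T-l$ fast enough that $\tfrac{T^2}{B^2}\sum_{k,l}|\E[D_{Tk}D_{Tl}]|\to0$. Chebyshev's inequality then gives $\hat v_T^2-v_T^2\stackrel{p}{\longrightarrow}0$.

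The hard part is this last step. The per-batch conditional variances $\V(D_{Tj}\mid\cH_{j-1})$, summed against the $T^2/B^2$ prefactor, do \emph{not} vanish on their own, so the argument genuinely depends on exploiting how $D_{Tk}$ and $D_{Tl}$ are coupled through the batches they share; moreover, because $\E[W(\bZ)^4\mid\bX]$ is only integrable rather than uniformly bounded, this coupling analysis must be carried out after truncating the outcome, which considerably complicates the bookkeeping of the cross-moments.
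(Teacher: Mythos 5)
Your reduction is correct and matches the paper's starting point: $\hat v_T^2 - v_T^2 = \tfrac{T}{B}\sum_{j=2}^T\bigl(\widehat{\mathbb V}(\hat g_{Tj}) - \V_{\bZ}(\hat g_{Tj}(\bZ))\bigr)$ with each summand conditionally centered given $\cH_{j-1}$, and your moment bounds for $\hat g_{Tj}$ (telescoping sup bound on $\rho_{Tj}$, $L_1$ bound from Assumption~\ref{ass:learning_rate}) are the right ingredients. But the plan to conclude by bounding $\E\bigl[(\tfrac TB\sum_j D_{Tj})^2\bigr]$, with the cross terms $\sum_{k,l}|\E[D_{Tk}D_{Tl}]|$ doing the work, cannot succeed. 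The obstruction is the late batches. For $j$ near $T$, $\widehat{\mathbb V}(\hat g_{Tj})$ is a sample variance over only $O(B)$ observations of a variable whose sup bound is $O(|Y|)$ rather than $O(|Y|/T)$; under the stated assumptions the policy increment $\hat\pi_{j-1}-\hat\pi_{j-2}$ may be of size one on a set of measure $\asymp j^{-1-\delta}$, in which case $\tfrac{T^2}{B^2}\E[D_{TT}^2]$ alone is of order $T^{1-\delta}$ and diverges. Since you propose to control $\tfrac{T^2}{B^2}\sum_{k,l}|\E[D_{Tk}D_{Tl}]|$, which dominates this nonnegative diagonal term, the bound you are aiming for is false; indeed $\hat v_T^2 - v_T^2$ converges to zero only in probability, not in $L^2$, so any Chebyshev-on-the-second-moment argument is structurally ruled out. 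Your own observation that the diagonal does not vanish was the warning sign: no amount of cross-term bookkeeping in absolute value can rescue it.

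The paper's proof sidesteps this entirely by splitting the sum at $j=\floor{hT}$ for a fixed $h\in(0,1)$. For $j\le\floor{hT}$ one has $T-j+1\ge(1-h)T$, so $|\hat g_{Tj}(\bZ)|\le \tfrac{2|Y|}{c(1-h)T}$ pointwise and the crude term-by-term bound $\V(\widehat{\mathbb V}(\hat g_{Tj})\mid\cH_{j-1})\lesssim T^{-5}$ already beats the $T\cdot T$ prefactors (Lemma~\ref{lemma:variance_estimator_firsthalf}) --- no cross terms and no use of the stabilization condition are needed there. For $j>\floor{hT}$ the paper does not compare the sample variance to the true variance at all: it shows that $\tfrac TB\sum_{j>\floor{hT}}\V_{\bZ}(\hat g_{Tj}(\bZ))\to0$ using Assumption~\ref{ass:learning_rate} through Lemmas~\ref{lemma:aibi_rate} and \ref{lemma:math_order}, and then disposes of $\tfrac TB\sum_{j>\floor{hT}}\widehat{\mathbb V}(\hat g_{Tj})$ by nonnegativity, the inequality $\E[\widehat{\mathbb V}(\hat g_{Tj})\mid\cH_{j-1}]\le\V_{\bZ}(\hat g_{Tj}(\bZ))$, and Markov's inequality (Lemma~\ref{lemma:variance_negligible}). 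If you want to salvage your write-up, replace the cross-term program with this two-regime split; the early-regime part of your argument essentially already works.
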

Proof is given in Appendix~\ref{thm:variance_estimator_proof}.
Theorem~\ref{thm:variance_estimator} permits the construction of an
asymptotically valid confidence interval.  This result is stated as
the following corollary.
\begin{corollary}[Asymptotic confidence intervals]
  \label{cor:ci} \spacingset{1} Suppose that the conditions of
  Theorem~\ref{thm:Asymptotic_Normality} hold. Then, as
  $T\rightarrow \infty$, we have:
  $$  \sqrt{T-1}\cdot\frac{\widehat{V}(\hat\pi_{T-1}) - V(\hat\pi_{T-1})}{\hat v_T}\ \stackrel{d}{\longrightarrow}\ \mathcal{N}(0,1)$$
\end{corollary}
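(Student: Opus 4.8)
The plan is to derive Corollary~\ref{cor:ci} from Theorem~\ref{thm:Asymptotic_Normality} and Theorem~\ref{thm:variance_estimator} by a Slutsky argument; the only non-routine ingredient is a uniform-in-$T$ lower bound on the random variance $v_T^2$, which is where Assumption~\ref{ass:Limit_diff} enters (and which is also implicitly needed for the ratio in Theorem~\ref{thm:Asymptotic_Normality} to be well posed).

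On the event $\{\hat v_T>0\}$ I would write
$$
\sqrt{T}\cdot\frac{\widehat{\Delta}(\hat\pi_T;\pi_0)-\Delta(\hat\pi_T;\pi_0)}{\hat v_T}
\ = \ \left(\sqrt{T}\cdot\frac{\widehat{\Delta}(\hat\pi_T;\pi_0)-\Delta(\hat\pi_T;\pi_0)}{v_T}\right)\cdot\frac{v_T}{\hat v_T}.
$$
By Theorem~\ref{thm:Asymptotic_Normality} the first factor converges in distribution to $N(0,1)$, so by Slutsky's theorem it suffices to show $\hat v_T^2/v_T^2\stackrel{p}{\longrightarrow}1$. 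Given a constant $c>0$ with $v_T^2\ge c$ eventually almost surely (established below), Theorem~\ref{thm:variance_estimator} yields $|\hat v_T^2/v_T^2-1|\le c^{-1}|\hat v_T^2-v_T^2|\stackrel{p}{\longrightarrow}0$ on the event $\{v_T^2\ge c\}$, whose probability tends to $1$; this also forces $\hat v_T^2\ge c/2$ and hence $\hat v_T>0$ with probability $\to1$, so the displayed identity holds on an event of probability $\to1$ and the value of the statistic on its complement (set to $0$, say) does not affect the limiting law. Hence $v_T/\hat v_T\stackrel{p}{\longrightarrow}1$, and Slutsky's theorem delivers the claimed $N(0,1)$ limit.

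It remains to bound $v_T^2=T\sum_{j=2}^T\V(\widehat\Gamma_j(T)\mid\cH_{j-1})$ from below. Since $\cB_j$ is independent of $\cH_{j-1}$ and holds $B$ i.i.d.\ observations, $\widehat\Gamma_j(T)=\frac1B\sum_{i\in\cB_j}S_i\,w_j(\bX_i)$ with $S_i:=\frac{Y_iD_i}{e(\bX_i)}-\frac{Y_i(1-D_i)}{1-e(\bX_i)}$ and $w_j(\bx):=\sum_{t=1}^{j-1}(T-t)^{-1}(\hat\pi_t(\bx)-\hat\pi_{t-1}(\bx))$ being $\cH_{j-1}$-measurable, so a law-of-total-variance step gives $\V(\widehat\Gamma_j(T)\mid\cH_{j-1})\ge\frac1B\E_{\bX}\!\left[w_j(\bX)^2\,\V(S\mid\bX)\right]\ge\frac{\underline\sigma^2}{B}\E_{\bX}\!\left[w_j(\bX)^2\right]$, where a direct computation shows $\V(S\mid\bX)$ equals $\frac{\sigma_1^2(\bX)}{e(\bX)}+\frac{\sigma_0^2(\bX)}{1-e(\bX)}$ plus a perfect square and is therefore $\ge\underline\sigma^2:=\inf_{\bx}\sigma_1^2(\bx)+\inf_{\bx}\sigma_0^2(\bx)>0$ by Assumptions~\ref{ass:causal_inference} and~\ref{ass:bounded}. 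Writing $w_j(\bx)=\frac1T\big(\hat\pi_{j-1}(\bx)-\pi_0(\bx)\big)+\sum_{t=1}^{j-1}\big((T-t)^{-1}-T^{-1}\big)(\hat\pi_t(\bx)-\hat\pi_{t-1}(\bx))$, Assumption~\ref{ass:learning_rate} makes $\sum_{t}tQ_t=o(T)$ and hence the $L_1(\bX)$ norm of the second piece $o(1/T)$ uniformly over $j\le\lfloor T/2\rfloor$, while Proposition~\ref{prop:Policy_limit} and Assumption~\ref{ass:Limit_diff} give $\E_{\bX}|\hat\pi_{j-1}(\bX)-\pi_0(\bX)|>M_1/2$ for all large $j$ almost surely; by Jensen, $\E_{\bX}[w_j(\bX)^2]\ge(\E_{\bX}|w_j(\bX)|)^2\ge M_1^2/(16T^2)$ for all such $j$ once $T$ is large. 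Summing over the $\Theta(T)$ indices $j$ in a range $\{R,\dots,\lfloor T/2\rfloor\}$ and multiplying by $T$ gives $\liminf_T v_T^2\ge c$ almost surely for a positive constant $c$ depending only on $B$, $M_1$, and $\underline\sigma^2$.

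The main obstacle is this lower bound on $v_T^2$: one has to check that the telescoping cancellation shrinking $\widehat\Gamma_j(T)$ to order $1/T$ does not simultaneously annihilate its conditional variance, and one has to upgrade the $L_1$ non-degeneracy asserted by Assumption~\ref{ass:Limit_diff} to the $L_2$-type quantity $\E_{\bX}[w_j(\bX)^2]$ appearing inside $v_T^2$ — both handled by the displayed split of $w_j$ together with Jensen's inequality. Everything else is a mechanical application of Slutsky's theorem.
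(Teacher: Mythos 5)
Your proof is correct, and its outer skeleton (Slutsky's theorem combined with Theorem~\ref{thm:variance_estimator}, reduced to showing $v_T/\hat v_T\stackrel{p}{\to}1$, which in turn needs a lower bound on $v_T^2$) is exactly the paper's. The one place you genuinely diverge is the lower bound itself. The paper gets $\lim_{T\to\infty}\mathbb{P}(v_T^2<c_0/2)=0$ essentially for free by citing Equation~\eqref{temp10} from the proof of Theorem~\ref{thm:Asymptotic_Normality}, which rests on Condition~\eqref{cond:xibound} (the almost-sure lower bound $\xi(T)\ge c_0$ for the conditional variance at the $f(T)$ cut) together with $|v_T^2-\xi(T)^2|\stackrel{p}{\to}0$ from Condition~\eqref{cond:xiconv}. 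You instead re-derive the bound from scratch: you lower-bound each $\V(\widehat\Gamma_j(T)\mid\cH_{j-1})$ by $\tfrac{1}{B}\E_{\bX}[w_j(\bX)^2\,\V(S\mid\bX)]$, observe that $\V(S\mid\bX)\ge \sigma_1^2(\bX)/e(\bX)+\sigma_0^2(\bX)/(1-e(\bX))$ because the remainder is a perfect square (the same Cauchy--Schwarz fact the paper uses inside its proof of \eqref{cond:xibound}), telescope $w_j$ into $\tfrac1T(\hat\pi_{j-1}-\pi_0)$ plus an $o(1/T)$ correction controlled by Assumption~\ref{ass:learning_rate}, and invoke Jensen plus Lemma~\ref{lemma:Policy_diff} to get $\E_{\bX}[w_j(\bX)^2]\gtrsim M_1^2/T^2$ over $\Theta(T)$ indices $j$. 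This is sound, and in fact yields the slightly stronger almost-sure statement $\liminf_T v_T^2\ge c$ rather than only convergence in probability; its cost is that you redo, in a different packaging, work the paper has already banked in the proof of asymptotic normality. If you were writing this up within the paper, the economical route is to cite \eqref{temp10} directly, but your self-contained argument is a legitimate and arguably more transparent alternative since it exhibits explicitly why the telescoping that shrinks $\widehat\Gamma_j(T)$ to order $1/T$ does not kill the aggregate variance.
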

Proof is given in Appendix~\ref{cor:ci_proof}.

Lastly, we show that the final policy value difference,
$\Delta(\hat\pi_T, \hat\pi_{T-1})$ is asymptotically negligible. In
this way, the consistency and asymptotic normality results in
Theorems~\ref{thm:L1Consistency}~and~\ref{thm:Asymptotic_Normality}
still hold if we replace the estimand $V(\hat\pi_{T-1})$ with
$V(\hat\pi_T)$.
\begin{corollary}[Asymptotically negligible final policy difference]
  \label{cor:final_policy} \spacingset{1} Suppose that the conditions of
  Theorem~\ref{thm:Asymptotic_Normality} hold. Then, as
  $T\rightarrow \infty$, 
\begin{align}
  (T-1)\mathbb{E}\left[\left|\Delta(\hat\pi_T;\hat\pi_{T-1})\right|\right]\ &\rightarrow\ 0 \nonumber
\end{align}
Therefore, we have,
$$\left|\widehat V(\hat\pi_{T-1}) - V(\hat\pi_{T})\right|\
 \xrightarrow{P}\ 0, \quad
  \sqrt{T-1}\cdot\frac{\widehat{V}(\hat\pi_{T-1}) - V(\hat\pi_{T})}{\hat v_T}\ \stackrel{d}{\longrightarrow}\ \mathcal{N}(0,1).$$
\end{corollary}
Proof is given in Appendix~\ref{cor:final_policy_proof}. 

\subsection{Common linear bandit algorithms}
\label{subsec:stability_bandit}

The above asymptotic results critically depend on the stabilization
rate condition (Assumption~\ref{ass:learning_rate}).  This assumption
essentially requires a bandit algorithm to learn fast enough so that
the learned policy sequence converges at a faster-than-linear
rate. Here, we show that this stability assumption holds for commonly
used linear contextual bandit algorithms such as $\epsilon$-greedy,
Thompson Sampling, and UCB.  

We consider the following standard linear contextual bandit
formulation that is common in the literature.  For example,
\cite{shen2024doubly} maintain an essentially identical set of
assumptions stated below.\footnote{The only difference is that
  \cite{shen2024doubly} assumes a separation condition in probability.
  Although we could also assume that the gap between optimal arm and
  other arms is bounded in probability, we maintain uniform separation
  for simplicity.}  We emphasize that these assumptions are not
required by the cram method itself.
\begin{align*}
R(\bx, a) \ = \ \bx^\top\btheta(a) + \epsilon \ = \ \Phi(\bx,a)^\top \bTheta + \epsilon,
\end{align*}
for each arm $a\in \cA=\{1,2,...,K\}$ and
$\bx \in \mathcal{X}\subset \mathbb{R}^d$ where
$\Phi(\bx,a)=[\bx I(a=1)\ \bx I(a=2)\ \cdots\ \bx I(a=K)]^\top$,
$\Theta = [\btheta(1)\ \btheta(2)\ \cdots\ \btheta(K)]^\top$, and the
error term follows a normal distribution with a constant variance
$\epsilon \iid \mathcal{N}(0, \sigma^2)$.  
\begin{assumption}[Bounded context]
  \label{ass:xbounded_bandit} \spacingset{1}
  There exists a constant $L > 0$ such that for any context
  $\bx\in\mathcal{X}$, we have $\norm{\bx} \le L$.
\end{assumption}
\begin{assumption}[Positive definite] 
  \label{ass:positivedefinite} \spacingset{1}
  The matrix $\mathbb{E}\left[\bX\bX^\top \right]$ is positive definite.
\end{assumption}
\begin{assumption}[Separation] \spacingset{1}
  \label{ass:uniform_gap} The policy value difference between the optimal arm and the second
  best arm is uniformly bounded away from zero,
  $$
  \Delta(\bx) = \inf_{\bx\in\mathcal{X}} \left[\Phi(\bx,a^*(\bx))^\top \bTheta - \max_{a\not =
  a^*(\bx)}\Phi(\bx,a)^\top \bTheta\right] > 0,
$$
where $a^*(\bx) = \argmax_{a=1,2,...,K}\Phi(\bx,a)^\top \bTheta$
denotes the optimal arm.
\end{assumption}
\begin{assumption}[Clipping] \spacingset{1}
  \label{ass:clip_bandit} There exists a deterministic sequence $c_t$ such that for any $t, \bx, a$,
  \begin{align}
  \hat\pi_t(\bx,a) \in [c_t, 1-(K-1)c_t].
  \end{align}
  where $c_t \ge \frac{c}{t^\zeta}$ and $0\le\zeta<\frac{1}{2}$
\end{assumption}

We require
Assumptions~\ref{ass:xbounded_bandit}~and~\ref{ass:positivedefinite}
to avoid a degenerate case, in which different context variables are
linearly dependent.  This guarantees that a linear bandit algorithm
converges to the optimal policy.  These assumptions are also made by
\cite{zhang2020inference} and are common when deriving a regret bound
on linear contextual bandits
\citep[e.g.,][]{rusmevichientong2010linearly,abbasi2011improved}.
Assumption~\ref{ass:uniform_gap} implies that there is a non-zero gap
between the optimal arm and the second best arm in any context.  This
ensures that a bandit algorithm stabilizes without continuing to
switch between arms.  Lastly, Assumption~\ref{ass:clip_bandit} is
implies by Assumption~\ref{ass:clip_rate} but is stated for
completeness. 

Now, we are ready to show that these assumptions of linear contextual
bandit algorithms imply our stability condition.  We emphasize that
these are sufficient conditions, meaning that the violation of these
assumptions does not necessarily imply the stabilization condition
(Assumption~\ref{ass:learning_rate}) fails to hold.  For example, even if
Assumption~\ref{ass:uniform_gap} is violated, a contextual bandit
algorithm that converges to constant probabilities of selecting multiple
optimal arms still satisfies our stability condition.
\begin{theorem}[Stability condition under linear contextual bandit algorithms]
  \label{thm:linear_contextual_bandit} \spacingset{1} Suppose
  $\{(\bX_t,A_t,R_t)\}_{t=1}^T$ is a sequence of bandit data collected
  from $\epsilon$-greedy, UCB, or Thompson Sampling algorithm.  If
  Assumptions~\ref{ass:xbounded_bandit},~\ref{ass:positivedefinite},~\ref{ass:uniform_gap}
  and~\ref{ass:clip_bandit} hold, the resulting policy sequence
  $\{\hat\pi_t\}_{t=0}^\infty$ satisfies
  Assumption~\ref{ass:learning_rate}.
\end{theorem}
Proof of the theorem is given in
Appendix~\ref{thm:linear_contextual_bandit_proof}.

\section{Numerical experiments}
\label{sec:synthetic}

In this section, we conduct numerical experiments; one based on
synthetic data sets and the other based on real-world data
sets. Across our experiments, the cram method achieves roughly 40\%
reduction in the evaluation root mean square error (RMSE) on average,
when compared to the 80--20\% sample split off-policy evaluation. It
also has comparable bias to the sample splitting methods, and valid
empirical coverage of 95\% confidence interval.

\subsection{Synthetic data}
\label{sec:ss_method}

We consider a contextual bandit setting with four arms.  At each time
step, we generate a three-dimensional vector of context variables
$\bX_t = [X_{t1}\ X_{t2}\ X_{t3}]^\top$ where
$X_{ti} \iid \text{Unif}(-1,1)$ for all $t$ and $i\in\{2,3\}$ and the
first element corresponds to the intercept term, i.e., $X_{t1}=1$ for
all $t$. We specify the outcome model as a linear contextual bandit
model, $R_t(a)=\bX_t^\top\bbeta_{a}+\epsilon$, where
$\epsilon\iid N(0,1)$, $\bbeta_1 = [ \beta\ 0\ 0]^\top$,
$\bbeta_2 = [ 0\ \beta\ 0]^\top$, and
$\bbeta_3 = [ 0\ 0\ \beta]^\top$.  We consider three different signal
strength: no signal ($\beta = 0$), weak signal ($\beta = 0.5$), strong
signal ($\beta = 2$). We apply linear $\epsilon$-greedy, UCB, and
Thompson Sampling algorithms to these synthetic data sets, and clip
the action selection probability at the rate of $0.025t^{-\eta}$. We
fix a batch size to $20$ throughout these experiments, but vary the
sample size of $T$ from $20$ to $160$, with clipping rate
$\eta=0, 0.5$, and the signal strength $\beta=0,0.5,2$.

\begin{algorithm}[t!] \spacingset{1}
  \caption{Sample splitting method with off-policy evaluation using bandit data}\label{alg:1}
  \KwIn{Data $\{(\bX_t,A_t,R_t)\}_{t=1}^T$ collected by a contextual
    bandit algorithm; Sequence of learned policies
    $\{\hat\pi_1,\ldots,\hat\pi_T\}$ obtained via the bandit
    algorithm. Proportion of the training set $p_{\text{train}}$}
  \KwOut{Estimated value of the policy learned using the first
    $[p_{\text{\text{train}}}T]$ observations,
    $\widehat{V}^{SS}(\hat\pi_{[p_{\text{train}}T]})$}
  \For{$t=[p_{\text{train}}T]+1$ \KwTo $T$}{ 
    Estimate the value of policy $\hat\pi_{[p_{\text{train}}T]}$ as:
    $$
    \widehat V_t^{SS} = \sum_{a=1}^K\left(\frac{\mathbb{I}(A_t=a)(R_t - \hat\mu_{t-1}(\bX_t,a))}{\hat\pi_{t-1}(\bX_t,a)} + \hat\mu_{t-1}(\bX_t,A_t)\right)\hat\pi_{[p_{\text{train}}T]}(\bX_t,A_t),
    $$
    where the $\hat\mu_{t-1}(\bx,a)$ is the estimated conditional
    expectation of potential reward $R_{t-1}(a)$ given the context
    $\bX_{t-1}=\bx$ } Estimate the value of the final learned policy
  $\hat\pi_{T}$ as:
  \begin{align*}
    \widehat V^{SS} = \sum_{t=[Tp_{\text{train}}]+1}^T \frac{W_t\widehat V_t^{SS}}{\sum_{t=[p_{\text{train}}T]+1}^T W_t},
  \end{align*}
  where $W_t$ is the adaptive weight defined in~\cite{zhan2021off}.
\end{algorithm}

We compare the cram method with the standard sample splitting, which
is commonly used for policy evaluation with adaptively collected
data. For cram, we evaluate the value of policy $\hat\pi_{T-1}$ using
the entire data sequence (see Algorithm~\ref{alg:0}). For sample
splitting, we consider both 60--40\% and 80--20\% train-test splits
and use the test data to conduct off-policy evaluation of the policy
learned from the training data (see Algorithm~\ref{alg:1}). We apply
the state-of-the-art off-policy evaluation method developed by
\cite{zhan2021off} with four different adaptive weights;
non-contextual variance stabilization (NS), non-contextual variance
minimization (NO), contextual variance stabilization (CS), and
contextual variance minimization (CO) weights.  In the simulation, we
find that these different weights achieve almost identical results.
This is mainly due to the fact that in our settings bandit algorithms
are relatively stable when the 80\% of the data are used for training.
Thus, we only present the results based on NS weights. For fair
comparison, we use the AIPW estimator for both cram and
sample-splitting where we use the bandit estimates of the conditional
reward function at time $t-1$ to compute the augmentation term at time
$t$.  We run 2000 Monte Carlo replications under each simulation
setup, and compare the two methods in terms of bias, RMSE, and the
empirical coverage of the 95\% confidence intervals.

\begin{figure}[t!]
  \centering \spacingset{1}
  \begin{subfigure}[b]{0.49\textwidth}
      \centering
      \includegraphics[width=\textwidth]{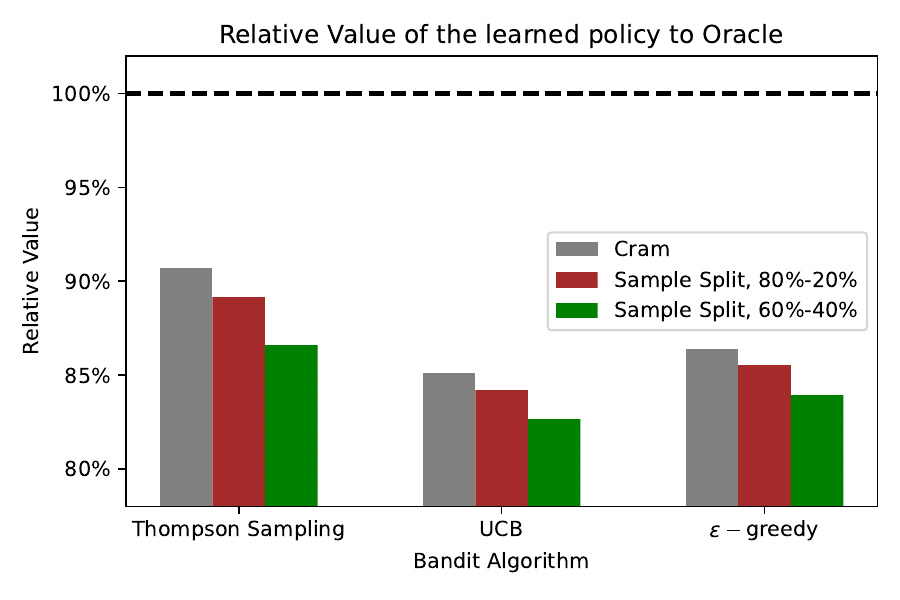}
      \caption{Relative Value}
      \label{fig:Method_Value}
  \end{subfigure}
  \begin{subfigure}[b]{0.49\textwidth}
      \centering
      \includegraphics[width=\textwidth]{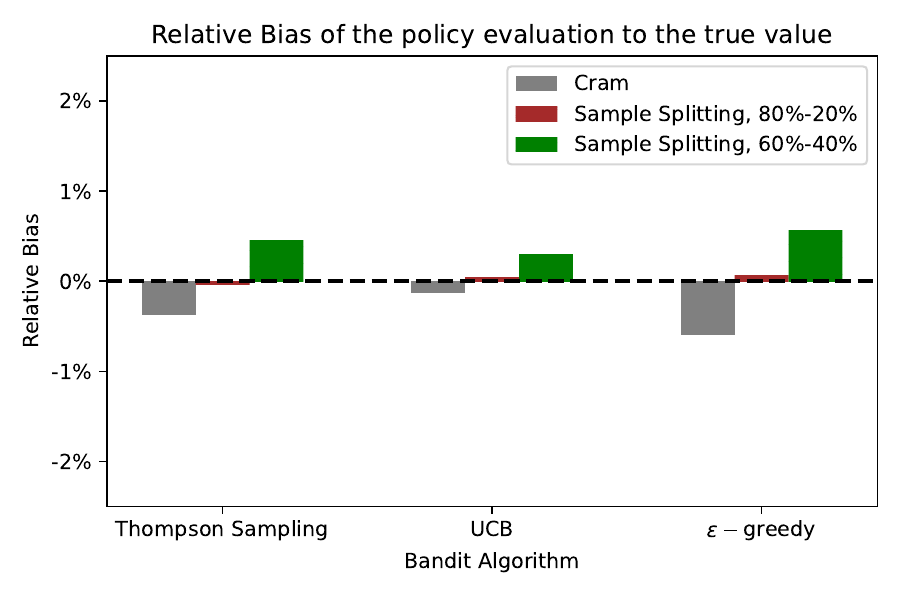}
      \caption{Relative Bias}
      \label{fig:Method_Bias}
  \end{subfigure}
  \begin{subfigure}[b]{0.49\textwidth}
      \centering
      \includegraphics[width=\textwidth]{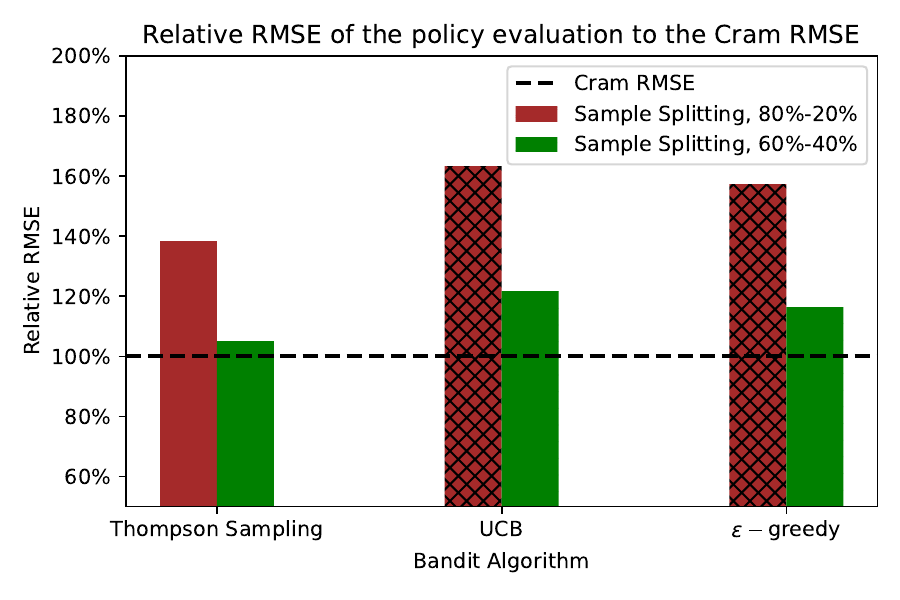}
      \caption{Relative RMSE}
      \label{fig:Method_RMSE}
  \end{subfigure}
  \begin{subfigure}[b]{0.49\textwidth}
    \centering
    \includegraphics[width=\textwidth]{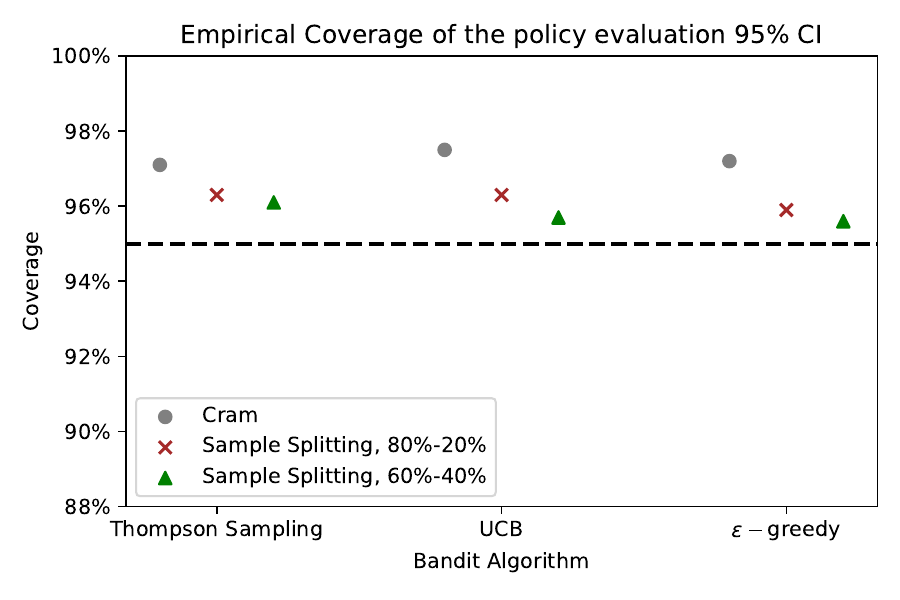}
    \caption{Coverage of 95\% CI, $\beta=2$}
    \label{fig:Method_Coverage}
\end{subfigure}
\caption{Learning and evaluation performance of cram and sample
  splitting for synthetic data.  For sample splitting, we use
  non-contextual variance stabilization weights of \cite{zhan2021off}.
  The parameters are set to $T=100$ (sample size), $\eta=0.5$
  (clipping rate), and $\beta=0.5$ (signal strength). See
  Appendix~\ref{app:additional_simulation} for additional parameter
  settings. For sample splitting, we consider 80--20\% and 60--40\%
  train-and-test splits.  The figure shows policy value relative to
  the oracle (a. top left), bias relative to true policy value (b. top
  right), RMSE relative to cram (c. bottom left), and empirical
  coverage of 95\% confidence intervals (d. bottom right). }
  \label{fig:Method-Synthetic}
\end{figure} 
 
Figure~\ref{fig:Method-Synthetic} shows the results that compare the
performance of cram with sample splitting based on NS weights and
$T=100,\beta=0.5,\eta=0.5$. Appendix~\ref{app:additional_simulation}
presents additional experimental results with different parameter
values.  These results are consistent with those presented here.  For
all three bandit algorithms, the final learned policy based on cram
has a higher value than those based on the sample splitting methods.
This makes sense because cram uses the entire data to learn a policy
while sample splitting reserves a subset of the data for evaluation.
For both cram and sample splitting methods, the estimated values of
learned policies are unbiased (Figure~\ref{fig:Method_Bias}).  In
addition, the cram method has consistently smaller RMSE than sample
splitting though the improvement is much greater for the 80--20\%
split because the latter has a small evaluation data set
(Figure~\ref{fig:Method_RMSE}).  Finally, the coverage of the 95\%
confidence intervals based on the cram method is somewhat greater than
the nominal level though sample splitting also exhibits some
overcoverage. This is mainly due to the fact that our variance
estimator tends to overestimate the true variance in finite samples,
as the consistency relies on the policy sequence to stabilize over
time. With a stronger signal size, the bandit algorithm stabilizes
faster so our variance estimator is more accurate, making the coverage
closer to the 95\% nominal level.  (see
Appendix~\ref{app:additional_simulation} for details).
 
\subsection{Real-world data}

Next, we follow the prior work
\citep[e.g.,][]{dudik2011doubly,dimakopoulou2017estimation,zhan2021off}
and use multi-class classification datasets to generate bandit
data. The idea is to transform a sequential classification problem
into a bandit problem, by taking features as contexts and potential
classes as potential arms. At each time step $t$, we randomly sample,
with replacement, an observation consisting of features and class
label from a classification dataset. Then, we use the set of sampled
features as the context $\bX_t$, and a bandit algorithm is used to
select an arm $A_t$. The reward of and the potential outcomes under
different arms are set as $R_t = 1 + \epsilon_t$ if the bandit
algorithm chooses the same class as the observed class label, and
$R_t = \epsilon_t$ otherwise, where $\epsilon\iid \mathcal{N}(0,1)$.
We repeat this process to obtain the bandit data
$\{\bX_t,A_t,R_t\}_{t=1}^T$.

We consider 86 public datasets from OpenML
\citep{vanschoren2014openml}, which vary in the number of classes, the
number of features, and the number of observations.  We use the linear
UCB algorithm to collect data, and fix batch size $B=20$ and length
$T=100$. We compare the evaluation bias and MSE between the cram and
sample splitting (80--20\% train-test split) methods.

\begin{figure}[t!]
  \centering \spacingset{1}
  \begin{subfigure}[b]{0.49\textwidth}
    \centering
    \includegraphics[width=\textwidth]{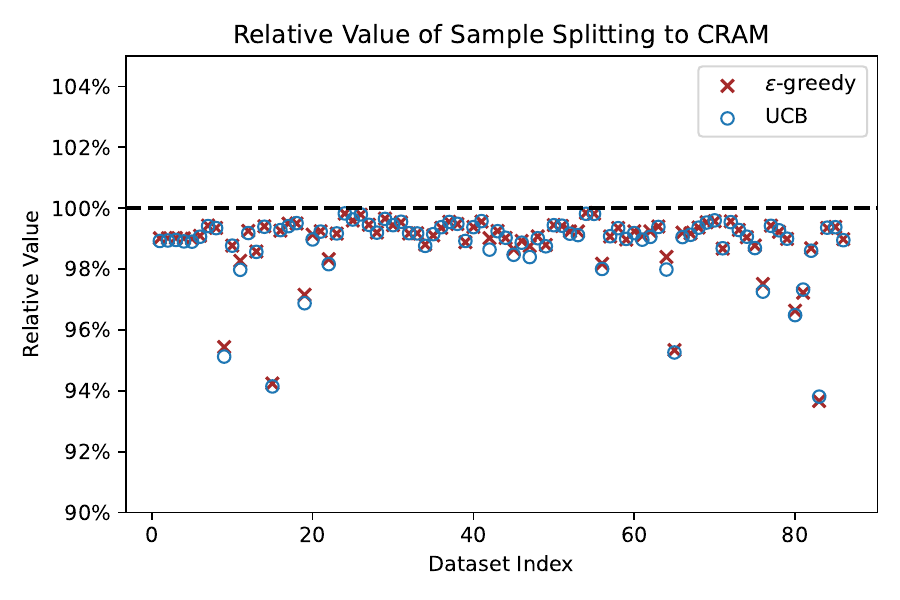}
    \caption{Value relative to cram}
    \label{fig:Epsilon_value_dataset}
\end{subfigure}
  \begin{subfigure}[b]{0.49\textwidth}
      \centering
      \includegraphics[width=\textwidth]{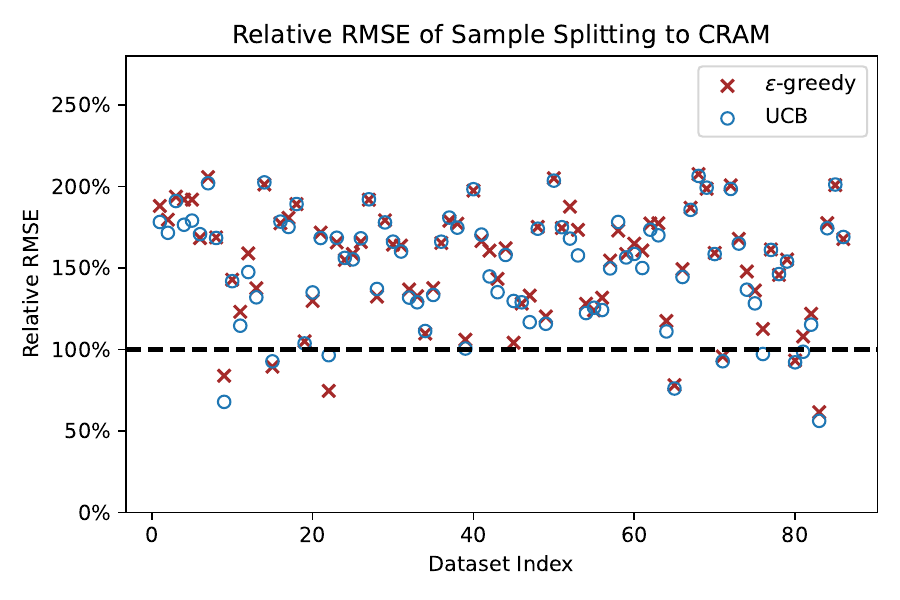}
      \caption{RMSE relative to cram}
      \label{fig:Epsilon_rmse_dataset}
  \end{subfigure}
  \begin{subfigure}[b]{0.49\textwidth}
      \centering
      \includegraphics[width=\textwidth]{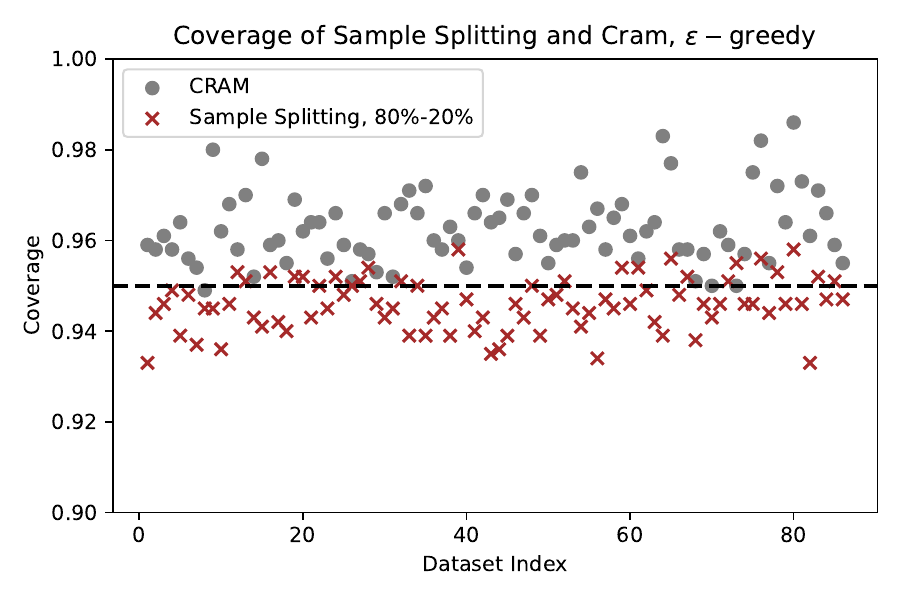}
      \caption{Empirical coverage of 95\% CI, $\epsilon-$greedy}
      \label{fig:Epsilon_coverage_dataset}
  \end{subfigure} 
  \begin{subfigure}[b]{0.49\textwidth}
    \centering
    \includegraphics[width=\textwidth]{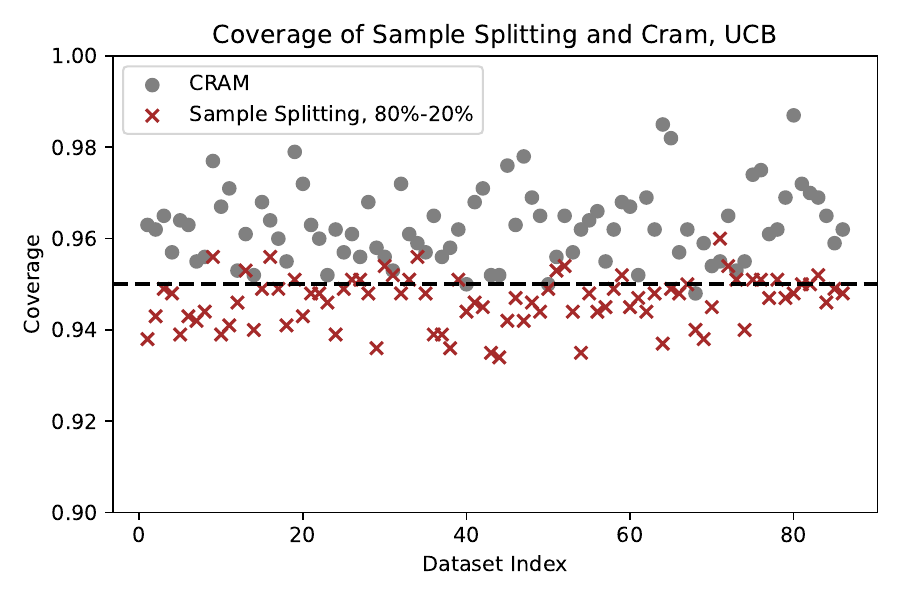}
    \caption{Empirical coverage of 95\% CI, UCB}
    \label{fig:UCB_coverage_dataset}
\end{subfigure} 
\caption{Learning and evaluation performance of cram and sample
  splitting for real-world data.  For sample splitting, we use
  non-contextual variance stabilization weights of \cite{zhan2021off},
  and we consider 80--20\% train-and-test split. We use AWAIPW-NS
  weights as an example and the results for other weights are similar.
  The figure shows policy value based on sample splitting relative to
  cram (a. top left), RMSE of sample splitting relative to cram
  (b. top right), and empirical coverage of 95\% confidence intervals
  for $\epsilon$ greedy (c. bottom left) and UCB (d. bottom right). }
  \label{fig:dataset}  
\end{figure} 
  
Figure~\ref{fig:dataset} summarizes the results across all 86 datasets
with $\epsilon$-greedy and UCB algorithms. We set the decaying rate
parameter $\eta=0.5$.  As shown in
Figure~\ref{fig:Epsilon_value_dataset}, across all 86 settings, the
final learned policies based on cram outperform the corresponding
policies learned with 80--20\% sample splitting.  The cram method also
achieves smaller evaluation RMSE in most datasets
(Figure~\ref{fig:Epsilon_rmse_dataset}). There are few data sets where
sample splitting tends to have a smaller RMSE than the cram method,
but this is mainly due to the low signal-to-noise ratio in these
settings. In such cases, the bandit algorithms learn little and as a
result do not stabilize. In
Figures~\ref{fig:Epsilon_coverage_dataset}~and~\ref{fig:UCB_coverage_dataset},
we show the empirical coverage of the 95\% confidence interval for
$\epsilon$-greedy and UCB.  As is the case in the synthetic data, the
cram method has a slightly conservative coverage.

\section{Concluding remarks}
\label{sec:conclusion}

We develop the cram method as a general methodological framework for
the on-policy evaluation of contextual bandit algorithms.  We believe
that this new methodology contributes to the growing literature on
statistical evaluation of bandit algorithms, which has largely focused
upon off-policy evaluation.  We establish that cramming any general
bandit algorithm, which satisfies a certain stability condition,
yields a consistent and asymptotically normal policy value estimator.
To illustrate the applicability of the cram method, we show that
commonly used linear contextual bandit algorithms satisfy the required
stability condition.  Applications to the synthetic and real-world
data demonstrate the potential power of the cram method.

Future research should consider various extensions of the cram
method.  While we focus on on-policy evaluation, the basic idea of
sequential evaluation can also be applied to off-policy learning and
evaluation with adaptively or non-adaptively collected data.  Another
area of application is active learning where data-efficient methods
like cram can play a critical role.  Finally, in all of these cases
including on-policy evaluation, it is of interest to consider
further improving the statistical efficiency of the cram method by
optimally weighting each step of evaluation.


\clearpage
\bibliography{reference}
\pagebreak

\appendix
\spacingset{1}

\setcounter{equation}{0}
\setcounter{figure}{0}
\setcounter{table}{0}
\setcounter{section}{0}
\renewcommand {\theequation} {S\arabic{equation}}
\renewcommand {\thefigure} {S\arabic{figure}}
\renewcommand {\thetable} {S\arabic{table}}
\renewcommand {\thesection} {S\arabic{section}}

\begin{center}
\LARGE {\bf Supplementary Appendix}
\end{center}

\section{Proof of Theorem~\ref{thm:L1Consistency}}
\label{thm:L1Consistency_proof}

\begin{proof}
  Recall Theorem~\ref{thm:Asymptotic_Normality}, which shows,
  \begin{align}
    \sqrt{T-1}\cdot\frac{\widehat{V}(\hat\pi_{T-1}) - V(\hat\pi_{T-1})}{v_T} \xrightarrow{d} \mathcal{N}(0,1) \nonumber.
  \end{align}
  By Condition~\ref{cond:vt_lower_bound} in the proof of
  Theorem~\ref{thm:Asymptotic_Normality}, there exists a constant
  $c_1>0$ such that
  $$
  \lim_{T\rightarrow\infty}\mathbb{P}\left(v_T \ge c_1\right) = 0
  $$
  Therefore, as $T\rightarrow\infty$, 
  $$
\frac{v_T}{\sqrt{T-1}} \xrightarrow{P} 0.
  $$
By Slutsky's theorem, we have,
\begin{align}
  \sqrt{T-1}\cdot\frac{\widehat{V}(\hat\pi_{T-1}) - V(\hat\pi_{T-1})}{v_T} \cdot \frac{v_T}{\sqrt{T-1}} \xrightarrow{d} 0 \nonumber
\end{align}
Therefore,
\begin{align}
\left|\widehat{V}(\hat\pi_{T-1}) - V(\hat\pi_{T-1})\right| \xrightarrow{P} 0 \nonumber.
\end{align}
\end{proof}

\section{Proof of Theorem~\ref{thm:Asymptotic_Normality}}
\label{thm:Asymptotic_Normality_proof}

\begin{proof}
We begin by simplifying notation and introducing the following
quantity,
\begin{align}
\tilde\pi_{T,j}(\bx,a):= \frac{\hat\pi_1(\bx,a)}{T-1}+\sum_{t=2}^{j-1}\frac{\hat\pi_t(\bx,a)-\hat\pi_{t-1}(\bx,a)}{T-t}\label{def:tilde} 
\end{align}
We also define $f(T):= T^{\frac{1}{3(1+\eta)}}$. Here, the choice of
$\frac{1}{3(1+\eta)}$ is arbitrary and can be replaced with any
function of $T$ that satisfies $f(T)\rightarrow\infty$ and
$f(T) = o\left(T^{\frac{1}{2(1+\eta)}}\right)$. 
 
Using the definition of the crammed policy value estimator, we can write,
\begin{align}
  \widehat V(\hat\pi_{T-1}) - V(\hat\pi_{T-1})
  =& \sum_{t=1}^{T-1}\sum_{j=t+1}^T \frac{1}{T-t}\widehat\Gamma_{t,j} - V(\hat\pi_{T-1}) \nonumber\\
  =&\sum_{t=1}^{T-1}\sum_{j=t+1}^T \frac{1}{T-t}\widehat\Gamma_{t,j} - \left(\sum_{j=2}^T\frac{1}{T-1}V(\hat\pi_1) + \sum_{t=2}^{T-1}\sum_{j=t+1}^T \frac{1}{T-t}\Delta(\hat\pi_t;\hat\pi_{t-1})\right) \nonumber
\end{align}
For notational simplicity, we denote $V(\hat\pi_1)$ by
$\Delta(\hat\pi_1, \hat\pi_0)$ and therefore
\begin{align}
  &\widehat V(\hat\pi_{T-1}) - V(\hat\pi_{T-1})\nonumber\\
  =&\sum_{t=1}^{T-1}\sum_{j=t+1}^T \frac{1}{T-t}\left(\widehat\Gamma_{t,j}-\Delta(\hat\pi_{t};\hat\pi_{t-1})\right)\nonumber\\
  =&\sum_{j=2}^T \sum_{t=1}^{j-1}\frac{1}{T-t}\left(\widehat\Gamma_{t,j}
     - \Delta(\hat\pi_t, \hat\pi_{t-1})\right)\nonumber\\
  =&\sum_{j=2}^T
     \left\{\sum_{a=1}^K\left(\frac{\mathbb{I}(A_j=a)R_j}{\hat\pi_{j-1}(\bX_j,a)}\cdot\tilde\pi_{T,j}(\bX_j,a)-\mathbb{E}\left[\mu_a(\bX_j)\tilde\pi_{T,j}(\bX_j,a)\mid
     \bH_{j-1}\right]\right)\right\} \nonumber
\end{align}

divide the above term into the
following four parts:
\begin{align}
&\sum_{j=2}^T \sum_{t=1}^{j-1}\frac{1}{T-t}\left(\widehat\Gamma_{t,j} - \Delta(\hat\pi_t,\hat\pi_{t-1})\right)\nonumber\\
=&\sum_{j=2}^{f(T)}\sum_{t=1}^{j-1}\frac{1}{T-t}\left(\widehat\Gamma_{t,j} - \Delta(\hat\pi_t,\hat\pi_{t-1})\right)+\sum_{j=f(T)+1}^{T}\sum_{t=1}^{j-1}\frac{1}{T-t}\left(\widehat\Gamma_{t,j} - \Delta(\hat\pi_t,\hat\pi_{t-1})\right)\nonumber\\
=&\sum_{j=2}^{f(T)}\sum_{t=1}^{j-1}\frac{1}{T-t}\left(\widehat\Gamma_{t,j} - \Delta(\hat\pi_t,\hat\pi_{t-1})\right) \label{01201}\\
&+\sum_{j=f(T)+1}^{T}\sum_{a=1}^{K}\left(\frac{\mathbb{I}(A_j=a)R_j}{T-1}-\mathbb{E}\left[\frac{\mathbb{I}(A_j=a)R_j}{T-1} \ \Bigr |  \ \bH_{j-1}\right]\right)\label{01202}\\
&+\sum_{j=f(T)+1}^T \sum_{a=1}^K\frac{\mathbb{I}(A_j=a)R_j}{\hat\pi_{j-1}(\bX_j,a)}\cdot\left(\tilde\pi_{T,j}(\bX_j,a)-\frac{\hat\pi_{j-1}(\bX_j,a)}{T-1}\right)\label{01203}\\
&-\sum_{j=f(T)+1}^T\sum_{a=1}^K\mathbb{E}\left[\mu_a(\bX_j)\left(\tilde\pi_{T,j}(\bX_j,a)-\frac{\hat\pi_{j-1}(\bX_j,a)}{T-1}\right)\  \Bigr   | \ \bH_{j-1}\right].\label{01204}
\end{align}
Therefore, to prove Theorem~\ref{thm:Asymptotic_Normality}, it
suffices to verify the following four conditions:
\begin{itemize}
  \item There exists a a constant $c_1>0$ such that 
  \begin{equation}
    \lim_{T\rightarrow\infty}\mathbb{P}\left(v_T \ge c_1\right) = 0 \tag{C1} \label{cond:vt_lower_bound}
  \end{equation}
  \item The term in Equation~\eqref{01201} is 
    $o_p\left(\frac{1}{\sqrt{T}}\right)$ such that
  \begin{equation}
    \sqrt{T-1}\sum_{j=2}^{f(T)}\sum_{t=1}^{j-1}\frac{1}{T-t}\left(\widehat\Gamma_{t,j} - \Delta(\hat\pi_t,\hat\pi_{t-1})\right) \xrightarrow{p} 0 \tag{C2} \label{cond:upperleft}
  \end{equation}
  \item The terms in Equations~\eqref{01203}~and~\eqref{01204} are
    $o_p\left(\frac{1}{\sqrt{T}}\right)$ such that
  \begin{align}
    &\sqrt{T-1}\sum_{j=f(T)+1}^T \sum_{a=1}^K\frac{\mathbb{I}(A_j=a)R_j}{\hat\pi_{j-1}(\bX_j,a)}\cdot\left(\tilde\pi_{T,j}(\bX_j,a)-\frac{\hat\pi_{j-1}(\bX_j,a)}{T-1}\right) \nonumber\\
    &\ \ \ \ -
      \sqrt{T-1}\sum_{j=f(T)+1}^T\sum_{a=1}^K\mathbb{E}\left[\mu_a(\bX_j)\left(\tilde\pi_{T,j}(\bX_j,a)-\frac{\hat\pi_{j-1}(\bX_j,a)}{T-1}\right)\
      \Bigr |\ \bH_{j-1}\right] \xrightarrow{p} 0 \tag{C3} \label{cond:approx_error1}
  \end{align}

\item The term in Equation~\eqref{01202} is asymptotic normal after
  proper scaling such that
  \begin{equation}
    \frac{\sqrt{T-1}}{v_T}\sum_{j=f(T)+1}^{T-1}\sum_{a=1}^{K}\left(\frac{\mathbb{I}(A_j=a)R_j}{T-1}-\mathbb{E}\left[\frac{\mathbb{I}(A_j=a)R_j}{T-1}\
        \Bigr|\ \bH_{j-1}\right]\right) \xrightarrow{d} \mathcal{N}(0,1) \tag{C4} \label{cond:cond_normal}
  \end{equation}
\end{itemize}

Under Conditions~\ref{cond:vt_lower_bound}--\ref{cond:cond_normal}, we
can prove Theorem~\ref{thm:Asymptotic_Normality} by using the above
four-way decomposition,
\begin{align}
  &\sqrt{T-1}\cdot\frac{\widehat{V}(\hat\pi_{T-1}) - V(\hat\pi_{T-1})}{v_T}\nonumber\\
  =&\frac{\sqrt{T-1}}{v_T}\sum_{j=2}^T \sum_{t=1}^{j-1}\frac{1}{T-t}\left(\widehat\Gamma_{t,j} - \Delta(\hat\pi_t,\hat\pi_{t-1})\right)\nonumber\\
=&\frac{1}{v_T}\cdot\sqrt{T-1}\sum_{j=2}^{f(T)}\sum_{t=1}^{j-1}\frac{1}{T-t}\left(\widehat\Gamma_{t,j} - \Delta(\hat\pi_t,\hat\pi_{t-1})\right) \nonumber\\
&+\frac{\sqrt{T-1}}{v_T}\sum_{j=f(T)+1}^{T-1}\sum_{a=1}^{K}\left(\frac{\mathbb{I}(A_j=a)R_j}{T-1}-\mathbb{E}\left[\frac{\mathbb{I}(A_j=a)R_j}{T-1}\  \Bigr|\ \bH_{j-1}\right]\right)\nonumber\\
&+\frac{1}{v_T}\cdot\sqrt{T-1}\sum_{j=f(T)+1}^T \sum_{a=1}^K\frac{\mathbb{I}(A_j=a)R_j}{\hat\pi_{j-1}(\bX_j,a)}\cdot\left(\tilde\pi_{T,j}(\bX_j,a)-\frac{\hat\pi_{j-1}(\bX_j,a)}{T-1}\right)\nonumber\\
&-\frac{1}{v_T}\cdot\sqrt{T-1}\sum_{j=f(T)+1}^T\sum_{a=1}^K\mathbb{E}\left[\mu_a(\bX_j)\left(\tilde\pi_{T,j}(\bX_j,a)-\frac{\hat\pi_{j-1}(\bX_j,a)}{T-1}\right)\ \Bigr |\ \bH_{j-1}\right].
\end{align}
By Slutsky's theorem, the above expression converges in distribution
to the standard normal distribution under
Conditions~\ref{cond:vt_lower_bound}--\ref{cond:cond_normal}.  We now
prove these required conditions.

\subsection{Proof of Condition~\ref{cond:vt_lower_bound}}
By the definition of $v_T$, we have,
\begin{align}
v_T^2 &= (T-1)\sum_{j=2}^T \V(\widehat\Gamma_j(T)\mid \bH_{j-1}) \nonumber\\
&= (T-1)\sum_{j=2}^T  \mathbb{E}\left[\mathbb{E}\left[\sum_{a=1}^K\frac{\tilde\pi_{T,j}(\bX_j,a)^2\cdot (\mu_a(\bX_j)^2+\sigma_a^2(\bX_j))}{\hat\pi_{j-1}(\bX_j,a)}\ \Bigr|  \ \bH_{j-1}\right] - V(\tilde\pi_{T,j})^2\right] \nonumber\\
&= (T-1)\sum_{j=2}^T  \mathbb{E}\left[\mathbb{E}_{\bX}\left[\sum_{a=1}^K\frac{\tilde\pi_{T,j}(\bX,a)^2\cdot (\mu_a(\bX)^2+\sigma_a^2(\bX))}{\hat\pi_{j-1}(\bX,a)}\right] - V(\tilde\pi_{T,j})^2\right] \nonumber
\end{align}
where
\begin{align}
V(\tilde\pi_{T,j}):= \mathbb{E}_{\bX}\left[\sum_{a=1}^K \tilde\pi_{T,j}(\bX,a)\mu_a(\bX)\right] \nonumber.
\end{align}
By Jensen's inequality and Cauchy-Schwarz inequality, we have
\begin{align}
  V(\tilde\pi_{T,j})^2 &= \mathbb{E}_{\bX}\left[\sum_{a=1}^K \tilde\pi_{T,j}(\bX,a)\mu_a(\bX)\right]^2 \nonumber\\
  &\le \mathbb{E}_{\bX}\left[\left(\sum_{a=1}^K \frac{\tilde\pi_{T,j}(\bX,a)\mu_a(\bX)}{\sqrt{\hat\pi_{j-1}(\bX,a)}}\cdot \sqrt{\hat\pi_{j-1}(\bX,a)}\right)^2\right] \nonumber\\
  &\le \mathbb{E}_{\bX}\left[\left(\sum_{a=1}^K \frac{\tilde\pi_{T,j}(\bX,a)^2\mu_a(\bX)^2}{\hat\pi_{j-1}(\bX,a)}\right)\cdot \left(\sum_{a=1}^K \hat\pi_{j-1}(\bX,a)\right)\right] \nonumber\\
  &\le \mathbb{E}_{\bX}\left[\sum_{a=1}^K \frac{\tilde\pi_{T,j}(\bX,a)^2\mu_a(\bX)^2}{\hat\pi_{j-1}(\bX,a)}\right]. \nonumber
\end{align}
Therefore, using Cauchy-Schwartz inequality again and
Lemma~\ref{lemma:centering}, we have,
\begin{align*}
v_T^2 \ge &(T-1)\sum_{j=2}^T
            \mathbb{E}\left[\mathbb{E}_{\bX}\left[\sum_{a=1}^K\frac{\tilde\pi_{T,j}(\bX,a)^2\sigma_a(\bX^2)}{\hat\pi_{j-1}(\bX,a)}\right]\right]\nonumber\\
\ge&(T-1)\sigma_L^2\sum_{j=2}^T\sum_{a=1}^K \mathbb{E}_{\bX}\left[\tilde\pi_{T,j}(\bX,a)^2\right]\\
\ge&\sigma_L^2(T-1)\cdot\frac{1}{K}\sum_{j=2}^T \mathbb{E}_{\bX}\left[\left(\sum_{a=1}^K\tilde\pi_{T,j}(\bX,a)\right)^2\right] \nonumber\\
\ge&\sigma_L^2(T-1)\cdot\frac{1}{K}\sum_{j=2}^T \frac{1}{(T-1)^2} = \frac{\sigma_L^2}{K}
\end{align*}

\subsection{Proof of Condition~\eqref{cond:upperleft}}

Note
\begin{align*}
  &\sqrt{T-1}\sum_{j=2}^{f(T)}\sum_{t=1}^{j-1}\frac{1}{T-t}\left(\widehat\Gamma_{t,j} - \Delta(\hat\pi_t,\hat\pi_{t-1})\right)\nonumber\\
=&\sqrt{T-1}\sum_{j=2}^{f(T)}\sum_{t=1}^{j-1}\frac{1}{T-t}\sum_{a=1}^K\left(\frac{\mathbb{I}(A_j=a)R_j}{\hat\pi_{j-1}(\bX_j,a)}\cdot(\hat\pi_t(\bX_j,a)-\hat\pi_{t-1}(\bX_j,a))-\mathbb{E}_{\bX}\left[\mu_a(\bX)(\hat\pi_t(\bX,a)-\hat\pi_{t-1}(\bX,a))\right]\right)\nonumber\\
=&\sqrt{T-1}\sum_{j=2}^{f(T)}\sum_{a=1}^K\left(\frac{\mathbb{I}(A_j=a)R_j}{\hat\pi_{j-1}(\bX_j,a)}\tilde\pi_{T,j}(\bX_j,a)-\mathbb{E}_{\bX}\left[\mu_a(\bX)\tilde\pi_{T,j}(\bX_j,a)\right]\right)
\end{align*}
Therefore, using Lemma~\ref{lemma:centering},
\begin{align}
&\mathbb{E}\left[\left|\sqrt{T-1}\sum_{j=2}^{f(T)}\sum_{t=1}^{j-1}\frac{1}{T-t}\left(\widehat\Gamma_{t,j} - \Delta(\hat\pi_t,\hat\pi_{t-1})\right)\right|\right]\nonumber\\
\le&~2K\sqrt{T-1}\sum_{j=2}^{f(T)}\sup_{1\le a\le
     K}\mathbb{E}\left[\frac{\mathbb{I}(A_j=a)|R_j|}{\hat\pi_{j-1}(\bX_j,a)}\cdot\left|\tilde\pi_{T,j}(\bX_j,a)\right|\right]\nonumber\\
\le&~2K\sqrt{T-1}\sum_{j=2}^{f(T)}\frac{2K_4^{\frac{1}{4}}(j-1)^\eta}{T-j+1}\nonumber\\
\le&~\frac{4KK_4^{\frac{1}{4}}\sqrt{T-1} f(T)^{1+\eta}}{T-f(T)} \rightarrow 0 \nonumber.
\end{align} 
Here the last two inequalities uses the clipping rate assumption~\ref{ass:clip_rate} and the $4$-th moment condition~\ref{ass:fourth_moment}.
Since the $L_1$ convergence implies convergence in probability, we
have shown that Condition~\ref{cond:upperleft} holds.

\subsection{Proof of Condition~\ref{cond:approx_error1}}

Notice
\begin{align}
  &\sqrt{T-1}\sum_{j=f(T)+1}^T
    \sum_{a=1}^K\left(\frac{\mathbb{I}(A_j=a)R_j}{\hat\pi_{j-1}(\bX_j,a)}\cdot\left(\tilde\pi_{T,j}(\bX_j,a)-\frac{\hat\pi_{j-1}(\bX_j,a)}{T-1}\right)\right.
    \nonumber \\
 & \hspace{2in}  -\left.
    \mathbb{E}\left[\mu_a(\bX_j)\left(\tilde\pi_{T,j}(\bX_j,a)-\frac{\hat\pi_{j-1}(\bX_j,a)}{T-1}\right)\
    \Bigr|\ \bH_{j-1}\right]\right)\nonumber
\end{align}
is a martingale difference sequence and 
\begin{align*}
&\mathbb{E}\left[\left\{\sqrt{T-1}\sum_{j=f(T)+1}^T \sum_{a=1}^K\left(\frac{\mathbb{I}(A_j=a)R_j}{\hat\pi_{j-1}(\bX_j,a)}\cdot\left(\tilde\pi_{T,j}(\bX_j,a)-\frac{\hat\pi_{j-1}(\bX_j,a)}{T-1}\right) \right.\right.\right.\nonumber\\
&\ \ \ \ \left.\left.\left.- \mathbb{E}\left[\mu_a(\bX_j)\left(\tilde\pi_{T,j}(\bX_j,a)-\frac{\hat\pi_{j-1}(\bX_j,a)}{T-1}\right)\ \Bigr  |\ \bH_{j-1}\right]\right)\right\}^2\right]\nonumber\\
=&(T-1)\sum_{j=f(T)+1}^T\sum_{a=1}^K \mathbb{E}\left[\frac{\mu_a(\bX_j)^2+\sigma_a^2(\bX_j)}{\hat\pi_{j-1}(\bX_j,a)}\left(\tilde\pi_{T,j}(\bX_j,a)-\frac{\hat\pi_{j-1}(\bX_j,a)}{T-1}\right)^2\right]\nonumber\\
&-(T-1)\sum_{j=f(T)+1}^T  \mathbb{E}\left[\mathbb{E}\left[\sum_{a=1}^K\mu_a(\bX_j)\left(\tilde\pi_{T,j}(\bX_j,a)-\frac{\hat\pi_{j-1}(\bX_j,a)}{T-1}\right) \  \Bigr  |  \  \bH_{j-1}\right]^2\right]  \nonumber\\
\le&(T-1)\sum_{j=f(T)+1}^T\sum_{a=1}^K \mathbb{E}\left[\frac{\mu_a(\bX_j)^2+\sigma_a^2(\bX_j)}{\hat\pi_{j-1}(\bX_j,a)}\left(\tilde\pi_{T,j}(\bX_j,a)-\frac{\hat\pi_{j-1}(\bX_j,a)}{T-1}\right)^2\right]\nonumber\\
\le&(T-1)^{1+\eta}(\mu_U^2+\sigma_U^2)\sum_{j=f(T)+1}^T\sum_{a=1}^K \mathbb{E}\left[\left(\tilde\pi_{T,j}(\bX_j,a)-\frac{\hat\pi_{j-1}(\bX_j,a)}{T-1}\right)^2\right],
\end{align*}
where  the last inequality is due to
Assumptions~\ref{ass:clip_rate}~and~\ref{ass:bounded}. By
Lemma~\ref{lemma:L2_approx}, we have,
\begin{align*}
\lim_{T\rightarrow\infty}(T-1)^{1+\eta}(\mu_U^2+\sigma_U^2)\sum_{j=2}^T\sum_{a=1}^K \mathbb{E}\left[\left(\tilde\pi_{T,j}(\bX_j,a)-\frac{\hat\pi_{j-1}(\bX_j,a)}{T-1}\right)^2\right] = 0.
\end{align*}
Since convergence in $L_2$ implies convergence in probability, we have
shown that Condition~\ref{cond:approx_error1} holds.

\subsection{Proof of Condition~\ref{cond:cond_normal}}

For notational simplicity, we define the following,
\begin{align}
\Lambda_{T,j}&:=
               \sum_{a=1}^{K}\left(\frac{\mathbb{I}(A_j=a)R_j}{T-1}-\mathbb{E}\left[\frac{\mathbb{I}(A_j=a)R_j}{T-1}\
               \Bigr|  \ \bH_{j-1}\right]\right) \label{eq:LambdaTj}\\
\sigma_{T,j}^2&:= \E\left[\Lambda_{T,j}^2\mid \bH_{j-1}\right] \label{eq:SigmaTj}\\
S_T^2&:= \sum_{j=f(T)+1}^{T-1}\mathbb{E}\left[\sigma_{T,j}^2\mid \bH_{f(T)}\right] \label{eq:ST}
\end{align}
Therefore, $\{\Lambda_{T,j}\}_{j\ge f(T)+1}$ is a martingale difference sequence with respect to the filtration $\{\bH_{j-1}\}_{j\ge f(T)+1}$. 
\begin{lemma}
  \label{lemma:martingale_berry}
  For any $\epsilon >0$, 
  \begin{align}
      \mathbb{P}\left(\sup_w
    \biggr|\mathbb{P}\left(\sum_{j=f(T)+1}^{T}\Lambda_{T,j}\le S_T w \
    \Bigr| \ \bH_{f(T)}\right)-\Phi(w)\biggr|>\epsilon \right)\rightarrow 0 \text{ as } T\rightarrow\infty \label{eq:conditional_convergence}.
  \end{align}
  where $\{\Lambda_{T,j}\}_{j=f(T)+1}^{T}$ is defined in
  Equation~\eqref{eq:LambdaTj}, $\Phi(w)$ is the cumulative
  distribution function (CDF) of the standard normal distribution, and
  the probability is taken over the randomness of $\sigma(\bH_{f(T)})$
  (i.e., the first $f(T)$ observations).
\end{lemma}
Proof is given in  Appendix~\ref{lemma:martingale_berry_proof}.

\begin{lemma}
  \label{lemma:xi_consistency}
  For any $\epsilon>0$,
  \begin{align}
      \lim_{T\rightarrow\infty}\mathbb{P}\left( \left|{v_T^2} - (T-1)S_T^2\right| >\epsilon\right) = 0 \nonumber.
  \end{align}
  Furthermore,
  \begin{align}
  \frac{\sqrt{T-1}S_T}{v_T} \xrightarrow{p} 1 \text{ as } T\rightarrow\infty \nonumber.
  \end{align}
\end{lemma}
Proof is given in Appendix~\ref{lemma:xi_consistency_proof}.

By Lemma~\ref{lemma:martingale_berry}, we have,
$$
  \sup_w \left|\mathbb{P}\left(\sum_{j=f(T)+1}^{T}\Lambda_{T,j}\le
    S_T w \  \Bigr| \ \bH_{f(T)}\right)-\Phi(w)\right| \xrightarrow{P} 0 \nonumber \text{ as } T\rightarrow\infty,
$$
and
$$
  \sup_w \left|\mathbb{P}\left(\sum_{j=f(T)+1}^{T}\Lambda_{T,j}\le
    S_T w \ \Bigr | \ \bH_{
    f(T)}\right)-\Phi(w)\right| \le 2 \nonumber.
$$
By the dominated convergence theorem,
\begin{equation}
\lim_{T\rightarrow\infty} \mathbb{E}\left[\sup_w
  \left|\mathbb{P}\left(\sum_{j=f(T)+1}^{T}\Lambda_{T,j}\le S_T w \
      \Bigr| \ \bH_{
  f(T)}\right)-\Phi(w)\right|\right] = 0 \label{eq:051802}.
\end{equation}
Notice
\begin{align}
  \sup_w \left|\mathbb{P}\left(\sum_{j=f(T)+1}^{T}\Lambda_{T,j}\le S_T
  w \right)-\Phi(w)\right| &=
                             \sup_w\left|\mathbb{E}\left[\mathbb{P}\left(\sum_{j=f(T)+1}^{T}\Lambda_{T,j}\le
                             S_T w \ \Bigr | \ \bH_{
    f(T)}\right)-\Phi(w)\right]\right|\nonumber\\
  &\le \mathbb{E}\left[\sup_w
    \left|\mathbb{P}\left(\sum_{j=f(T)+1}^{T}\Lambda_{T,j}\le S_T w \
    \Bigr | \ \bH_{
    f(T)}\right)-\Phi(w)\right|\right]\nonumber.
\end{align}
Therefore, Equation~\eqref{eq:051802} implies 
\begin{equation*}
\lim_{T\rightarrow\infty} \sup_w \left|\mathbb{P}\left(\sum_{j=f(T)+1}^{T}\Lambda_{T,j}\le S_T w \right)-\Phi(w)\right| = 0 \nonumber,
\end{equation*}
or equivalently,
$$
\frac{\sum_{j=f(T)+1}^{T}\Lambda_{T,j}}{S_T} \xrightarrow{d} N(0,1) \nonumber.
$$
Because of Lemma~\ref{lemma:xi_consistency}, as $T\rightarrow\infty$,
$$
\frac{\sqrt{T-1}S_T}{v_T} \xrightarrow{p} 1 \nonumber.
$$
Therefore, using Slutsky's theorem, we can show that
Condition~\ref{cond:cond_normal} holds,
\begin{align*}
  \frac{\sqrt{T-1}}{v_T}\sum_{j=f(T)+1}^{T-1}\sum_{a=1}^{K}\left(\frac{\mathbb{I}(A_j=a)R_j}{T-1}-\mathbb{E}\left[\frac{\mathbb{I}(A_j=a)R_j}{T-1}\
  \Bigr| \ \bH_{j-1}\right]\right) \xrightarrow{d} \mathcal{N}(0,1).
\end{align*}
\end{proof}

\section{Proof of Theorem~\ref{thm:variance_estimator}}
\label{thm:variance_estimator_proof}

\begin{proof}
By the definition of $v_T^2$, we can write,
\begin{align*}
v_T^2 &= (T-1)\sum_{j=2}^T \V(\widehat\Gamma_j(T)\mid \bH_{j-1}) \nonumber\\
&=(T-1)\sum_{j=2}^T \V\left(\sum_{a=1}^K \frac{\mathbb{I}(A_j=a)R_j}{\hat\pi_{j-1}(\bX_j,a)}\tilde\pi_{T,j}(\bX_j,a)  \ \Bigr| \ \bH_{j-1}\right) \nonumber\\
&= (T-1)\sum_{j=2}^T\sum_{a=1}^K  \mathbb{E}\left[\frac{(\mu_a(\bX_j)^2+\sigma_a^2(\bX_j))}{\hat\pi_{j-1}(\bX_j,a)}\tilde\pi_{T,j}(\bX_j,a)^2\  \Bigr| \  \bH_{j-1}\right]
  \\
  & \hspace{1in} - (T-1)\sum_{j=2}^T\left(\sum_{a=1}^K
    \mathbb{E}\left[\mu_a(\bX_j)\tilde\pi_{T,j}(\bX_j,a)\mid \bH_{j-1}\right]\right)^2. \nonumber
\end{align*}
In addition, we have,
\begin{align}
\hat v_T^2 &= T\sum_{j=2}^{T-1} \hat v_{Tj}^2 \nonumber\\
&=(T-1)\sum_{j=2}^{T}\frac{1}{T-j+1}\sum_{k=j}^T(G_{Tjk} - \bar G_{Tj\cdot})^2 \nonumber\\
&= (T-1)\sum_{j=2}^{T} \frac{\sum_{k=j}^T G_{Tjk}^2}{T-j+1} - (T-1)\sum_{j=2}^{T-1} \bar G_{Tj\cdot}^2. \nonumber
\end{align}
Therefore,
\begin{align}
|v_T^2 - \hat v_T^2| \le&(T-1)\left|\sum_{j=2}^{T}\left(\sum_{a=1}^K
                          \mathbb{E}\left[\frac{(\mu_a(\bX_j)^2+\sigma_a^2(\bX_j))}{\hat\pi_{j-1}(\bX_j,a)}\tilde\pi_{T,j}(\bX_j,a)^2
                          \ \Bigr| \ \bH_{j-1}\right]-\frac{\sum_{k=j}^T G_{Tjk}^2}{T-j+1}\right)\right| \label{01241}\\
&+(T-1)\left|\sum_{j=2}^T\left(\sum_{a=1}^K \mathbb{E}\left[\mu_a(\bX_j)\tilde\pi_{T,j}(\bX_j,a)   \mid   \bH_{j-1}\right]\right)^2-\bar G_{Tj\cdot}^2\right|.\label{01242}
\end{align}
We next show both terms in Equations~\eqref{01241} and~\eqref{01242} converge
to 0 in probability as $T\rightarrow\infty$.

\subsection{Convergence of the term in Equation~\eqref{01241}}

According to the definition of $G_{Tjk}$, we have,
\begin{align*}
  G_{Tjk} \ : &=  \ \sum_{a=1}^K\frac{R_k\mathbb{I}(A_k=a)}{\hat\pi_{k-1}(\bX_k, a)}\tilde\pi_{T,j}(\bX_k,a)\nonumber\\
  G_{Tjk}^2 \ : &=  \ \sum_{a=1}^K\frac{R_k^2\mathbb{I}(A_k=a)}{\hat\pi_{k-1}(\bX_k, a)^2}\tilde\pi_{T,j}(\bX_k,a)^2\nonumber\\
  &=  \ \sum_{a=1}^K\frac{R_k^2\mathbb{I}(A_k=a)}{\hat\pi_{k-1}(\bX_k, a)^2}\left(\tilde\pi_{T,j}(\bX_k,a)-\frac{\hat\pi_{k-1}(\bX_k,a)}{T-1}\right)^2\nonumber\\
  &\ \ \ \ +\frac{1}{T-1}\sum_{a=1}^K\frac{R_k^2\mathbb{I}(A_k=a)}{\hat\pi_{k-1}(\bX_k, a)}\left(\tilde\pi_{T,j}(\bX_k,a)-\frac{\hat\pi_{k-1}(\bX_k,a)}{T-1}\right) +\frac{1}{(T-1)^2}\sum_{a=1}^K R_k^2\mathbb{I}(A_k=a).
\end{align*}
In addition, we have,
\begin{align}
  &~\sum_{a=1}^K
    \mathbb{E}\left[\frac{\mu_a(\bX_j)^2+\sigma_a^2(\bX_j)}{\hat\pi_{j-1}(\bX_j,a)}\tilde\pi_{T,j}(\bX_j,a)^2\
    \Bigr| \ \bH_{j-1}\right] \nonumber\\
  =&~\sum_{a=1}^K
     \mathbb{E}\left[\frac{\mu_a(\bX_j)^2+\sigma_a^2(\bX_j)}{\hat\pi_{j-1}(\bX_j,a)}\left(\tilde\pi_{T,j}(\bX_j,a)-\frac{\hat\pi_{j-1}(\bX_j,a)}{T-1}\right)^2\
     \Bigr| \ \bH_{j-1}\right]\nonumber\\
  &~+\sum_{a=1}^K
    \mathbb{E}\left[\frac{\mu_a(X_j)^2+\sigma_a^2(X_j)}{T-1}\left(\tilde\pi_{T,j}(\bX_j,a)-\frac{\hat\pi_{j-1}(\bX_j,a)}{T-1}\right) \ \Bigr| \ \bH_{j-1}\right]\nonumber\\
  &~+\frac{1}{(T-1)^2}\sum_{a=1}^K
    \mathbb{E}\left[R_j^2\mathbb{I}(A_j=a) \mid \bH_{j-1}\right].\nonumber
\end{align}
Therefore, 
\begin{align}
&~(T-1)\left|\sum_{j=2}^T\left(\sum_{a=1}^K
                \mathbb{E}\left[\frac{\mu_a(\bX_j)^2+\sigma_a^2(\bX_j)}{\hat\pi_{j-1}(\bX_j,a)}\tilde\pi_{T,j}(\bX_j,a)^2\
                \Bigr| \ \bH_{j-1}\right]-\frac{\sum_{k=j}^T G_{Tjk}^2}{T-j}\right)\right|\nonumber\\
\le&~(T-1)\left|\sum_{j=2}^T\sum_{a=1}^K \mathbb{E}\left[\frac{\mu_a(\bX_j)^2+\sigma_a^2(\bX_j)}{T-1}\left(\tilde\pi_{T,j}(\bX_j,a)-\frac{\hat\pi_{j-1}(\bX_j,a)}{T-1}\right)  \   \Bigr| \  \bH_{j-1}\right]\right| \label{01252}\\
&~+(T-1)\left|\sum_{j=2}^T\sum_{a=1}^K \mathbb{E}\left[\frac{\mu_a(\bX_j)^2+\sigma_a^2(\bX_j)}{\hat\pi_{j-1}(\bX_j,a)}\left(\tilde\pi_{T,j}(\bX_j,a)-\frac{\hat\pi_{j-1}(\bX_j,a)}{T-1}\right)^2\ \Bigr|   \  \bH_{j-1}\right]\right|\label{01253}\\
&~+(T-1)\left|\sum_{j=2}^T\frac{1}{T-j+1}\sum_{k=j}^T\frac{1}{T-1}\sum_{a=1}^K\frac{R_k^2\mathbb{I}(A_k=a)}{\hat\pi_{k-1}(\bX_k, a)}\left(\tilde\pi_{T,j}(\bX_k,a)-\frac{\hat\pi_{k-1}(\bX_k,a)}{T-1}\right)\right|\label{01254}\\
&~+(T-1)\left|\sum_{j=2}^T\sum_{k=j}^T\frac{1}{T-j+1}\sum_{a=1}^K\frac{R_k^2\mathbb{I}(A_k=a)}{\hat\pi_{k-1}(\bX_k, a)^2}\left(\tilde\pi_{T,j}(\bX_k,a)-\frac{\hat\pi_{k-1}(\bX_k,a)}{T-1}\right)^2\right|\label{01255}\\
&~+\left|\sum_{j=2}^T \frac{1}{T-1}\sum_{a=1}^K \mathbb{E}\left[R_j^2\mathbb{I}(A_j=a)\mid \bH_{j-1}\right] - \sum_{j=2}^T\frac{1}{T-1}\sum_{k=j}^T\frac{\sum_{a=1}^K R_k^2\mathbb{I}(A_k=a)}{T-j}\right| \label{01251}.
\end{align}
We now show that each term in the above equation converges to zero in
probability. 

\paragraph{Term in Equation~\eqref{01252} converges to zero in probability.} Notice 
\begin{align}
&~\mathbb{E}\left[(T-1)\left|\sum_{j=2}^T\sum_{a=1}^K
                \mathbb{E}\left[\frac{\mu_a(\bX_j)^2+\sigma_a^2(\bX_j)}{T-1}\left(\tilde\pi_{T,j}(\bX_j,a)-\frac{\hat\pi_{j-1}(\bX_j,a)}{T-1}\right) \ \Bigr| \ \bH_{j-1}\right]\right|\right]\nonumber\\
\le&~(\mu_U^2+\sigma_U^2)\sum_{j=2}^T\sum_{a=1}^K \mathbb{E}\left[\left|\tilde\pi_{T,j}(\bX_j,a)-\frac{\hat\pi_{j-1}(\bX_j,a)}{T-1}\right|\right]  \label{0126temp1}
\end{align}
By Lemma~\ref{lemma:L1_approx}, Equation~\eqref{0126temp1} converges
to zero as $T\rightarrow\infty$. As $L_1$ convergence implies
convergence in probability, the term in Equation~\eqref{01252}
converges to zero in probability as $T\rightarrow\infty$.

\paragraph{Term in Equation~\eqref{01253} converges to zero in probability.}
Similar to the above, we write,
\begin{align}
&~\mathbb{E}\left[(T-1)\left|\sum_{j=2}^T\sum_{a=1}^K
                \mathbb{E}\left[\frac{\mu_a(\bX_j)^2+\sigma_a^2(\bX_j)}{\hat\pi_{j-1}(\bX_j,a)}\left(\tilde\pi_{T,j}(\bX_j,a)-\frac{\hat\pi_{j-1}(\bX_j,a)}{T-1}\right)^2\
                \Bigr| \ \bH_{j-1}\right]\right|\right] \nonumber\\
\le&~(T-1)^{1+\eta}(\mu_U^2+\sigma_U^2)\sum_{j=2}^T \mathbb{E}\left[\sum_{a=1}^K\left(\tilde\pi_{T,j}(\bX_j,a)-\frac{\hat\pi_{j-1}(\bX_j,a)}{T-1}\right)^2\right] \label{0126temp2}
\end{align} 
By Lemma~\ref{lemma:L2_approx}, Equation~\eqref{0126temp2} converges
to zero as $T\rightarrow\infty$. As $L_1$ convergence implies
convergence in probability, the term in Equation~\eqref{01253}
converges to zero in probability as $T\rightarrow\infty$.

\paragraph{Term in Equation~\eqref{01254} converges to zero in probability.} Notice
\begin{align*}
  &~(T-1)\left|\sum_{j=2}^T\frac{1}{T-j+1}\sum_{k=j}^T\frac{1}{T-1}\sum_{a=1}^K\frac{R_k^2\mathbb{I}(A_k=a)}{\hat\pi_{k-1}(\bX_k, a)}\left(\tilde\pi_{T,j}(\bX_k,a)-\frac{\hat\pi_{k-1}(\bX_k,a)}{T-1}\right)\right|\nonumber\\
  \le&~\sum_{j=2}^T\frac{1}{T-j+1}\sum_{k=j}^T\sum_{a=1}^K\frac{R_k^2\mathbb{I}(A_k=a)}{\hat\pi_{k-1}(\bX_k, a)}\left|\tilde\pi_{T,j}(\bX_k,a)-\frac{\hat\pi_{k-1}(\bX_k,a)}{T-1}\right|.
\end{align*}
Taking the expectation of this bound yields,
\begin{align*}
&~\mathbb{E}\left[\sum_{j=2}^T\frac{1}{T-j+1}\sum_{k=j}^T\sum_{a=1}^K\frac{R_k^2\mathbb{I}(A_k=a)}{\hat\pi_{k-1}(\bX_k, a)}\left|\tilde\pi_{T,j}(\bX_k,a)-\frac{\hat\pi_{k-1}(\bX_k,a)}{T-1}\right|\right]\nonumber\\
=&~\mathbb{E}\left[\sum_{j=2}^T\frac{1}{T-j+1}\sum_{k=j}^T\sum_{a=1}^K\mathbb{E}\left[\frac{R_k^2\mathbb{I}(A_k=a)}{\hat\pi_{k-1}(\bX_k,
   a)}\left|\tilde\pi_{T,j}(\bX_k,a)-\frac{\hat\pi_{k-1}(\bX_k,a)}{T-1}\right|\
   \Bigr| \ \bH_{k-1}\right]\right]\nonumber\\
=&~\mathbb{E}\left[\sum_{j=2}^T\frac{1}{T-j+1}\sum_{k=j}^T\sum_{a=1}^K\mathbb{E}\left[(\mu_a(\bX_k)^2+\sigma_a^2(\bX_k))\cdot\left|\tilde\pi_{T,j}(\bX_k,a)-\frac{\hat\pi_{k-1}(\bX_k,a)}{T-1}\right|\
   \Bigr| \ \bH_{k-1}\right]\right]\nonumber\\
\le&~(\mu_U^2+\sigma_U^2)\sum_{j=2}^T\frac{1}{T-j+1}\sum_{k=j}^T\sum_{a=1}^K\mathbb{E}\left[\left|\tilde\pi_{T,j}(\bX_k,a)-\frac{\hat\pi_{k-1}(\bX_k,a)}{T-1}\right|\right]\nonumber
\end{align*}
Using the triangular inequality by subtracting and adding
$\frac{\hat\pi_{j-1}(\bX_j,a)}{T-1}$ from the absolute value term, we
have,
\begin{align*}
  &~(\mu_U^2+\sigma_U^2)\sum_{j=2}^T\frac{1}{T-j+1}\sum_{k=j}^T\sum_{a=1}^K\mathbb{E}\left[\left|\tilde\pi_{T,j}(\bX_k,a)-\frac{\hat\pi_{k-1}(\bX_k,a)}{T-1}\right|\right]\nonumber\\
\le&~(\mu_U^2+\sigma_U^2)\sum_{j=2}^T\frac{1}{T-j+1}\sum_{k=j}^T\sum_{a=1}^K\mathbb{E}\left[\left|\tilde\pi_{T,j}(\bX_k,a)-\frac{\hat\pi_{j-1}(\bX_k,a)}{T-1}\right|\right]\nonumber\\ 
&~+(\mu_U^2+\sigma_U^2)\sum_{j=2}^T\frac{1}{T-j+1}\sum_{k=j}^T\sum_{a=1}^K\mathbb{E}\left[\left|\frac{\hat\pi_{j-1}(\bX_j,a)-\hat\pi_{k-1}(\bX_k,a)}{T-1}\right|\right]\nonumber\\
\le&~(\mu_U^2+\sigma_U^2)\sum_{j=2}^T\sum_{a=1}^K\mathbb{E}\left[\left|\tilde\pi_{T,j}(\bX,a)-\frac{\hat\pi_{j-1}(\bX,a)}{T-1}\right|\right]+\frac{(\mu_U^2+\sigma_U^2)}{T-1}\sum_{j=2}^T\frac{1}{T-j+1}\sum_{k=j}^T\sum_{a=1}^K\sum_{i_1=j}^{k-1}\mathbb{E}\left[Q_{i_1}\right]\nonumber\\
\le&~2(\mu_U^2+\sigma_U^2)\sum_{j=2}^T\sum_{a=1}^K\mathbb{E}\left[\left|\tilde\pi_{T,j}(\bX,a)-\frac{\hat\pi_{j-1}(\bX,a)}{T-1}\right|\right]+\frac{K(\mu_U^2+\sigma_U^2)\cdot\sup_{i}\mathbb{E}\left[Q_ii^{1+\delta}\right]}{(T-1)\delta}\sum_{j=2}^T\sum_{k=j}^T\frac{1}{(T-j+1)j^{\delta}}\nonumber
\end{align*}
By Lemma~\ref{lemma:L1_approx},
$$2(\mu_U^2+\sigma_U^2)\sum_{j=2}^T\sum_{a=1}^K\mathbb{E}\left[\left|\tilde\pi_{T,j}(\bX_k,a)-\frac{\hat\pi_{j-1}(\bX_k,a)}{T-1}\right|\right] 
\rightarrow 0.$$ Algebraic calculation shows
$\frac{K(\mu_U^2+\sigma_U^2)\cdot\sup_{i}\mathbb{E}\left[Q_ii^{1+\delta}\right]}{(T-1)\delta}\sum_{j=2}^T\sum_{k=j}^T\frac{1}{(T-j+1)j^{\delta}}$
is $O(\frac{1}{T})$. Therefore, the term in Equation~\eqref{01254}
converges to zero in probability as $T\rightarrow\infty$.

\paragraph{Term in Equation~\eqref{01255} converges to zero in probability.} Notice
\begin{align}
  &~\mathbb{E}\left[(T-1)\left|\sum_{j=2}^T\sum_{k=j}^T\frac{1}{T-j+1}\sum_{a=1}^K\frac{R_k^2\mathbb{I}(A_k=a)}{\hat\pi_{k-1}(\bX_k, a)^2}\left(\tilde\pi_{T,j}(\bX_k,a)-\frac{\hat\pi_{k-1}(\bX_k,a)}{T-1}\right)^2\right|\right] \nonumber\\
  =&~(T-1)\mathbb{E}\left[\sum_{j=2}^T\sum_{k=j}^T\frac{1}{T-j+1}\sum_{a=1}^K\mathbb{E}\left[\frac{R_k^2\mathbb{I}(A_k=a)}{\hat\pi_{k-1}(\bX_k,
     a)^2}\left(\tilde\pi_{T,j}(\bX_k,a)-\frac{\hat\pi_{k-1}(\bX_k,a)}{T-1}\right)^2\
     \Bigr| \ \bH_{k-1}\right]\right]\nonumber\\
  =&~(T-1)\mathbb{E}\left[\sum_{j=2}^T\sum_{k=j}^T\frac{1}{T-j+1}\sum_{a=1}^K\mathbb{E}\left[\frac{\mu_a(\bX_k)^2+\sigma_a^2(\bX_k)}{\hat\pi_{k-1}(\bX_k,
     a)}\left(\tilde\pi_{T,j}(\bX_k,a)-\frac{\hat\pi_{k-1}(\bX_k,a)}{T-1}\right)^2\
     \Bigr| \ \bH_{k-1}\right]\right]\nonumber\\
  \le&~  (T-1)^{1+\eta}(\mu_U^2+\sigma_U^2)\sum_{j=2}^T\sum_{k=j}^T\frac{1}{T-j+1}\sum_{a=1}^K\mathbb{E}\left[\left(\tilde\pi_{T,j}(\bX_k,a)-\frac{\hat\pi_{k-1}(\bX_k,a)}{T-1}\right)^2\right]\nonumber
\end{align}
Rewriting the term $\left(\tilde\pi_{T,j}(\bX_k,a)-\frac{\hat\pi_{k-1}(\bX_k,a)}{T-1}\right)^2$ as
$\left\{\left(\tilde\pi_{T,j}(\bX_k,a)-\frac{\hat\pi_{j-1}(\bX_k,a)}{T-1}\right)+\left(\frac{\hat\pi_{j-1}(\bX_k,a)}{T-1}-\frac{\hat\pi_{k-1}(\bX_k,a)}{T-1}\right)\right\}^2$
and expanding it out yield,
\begin{align}
  &~\mathbb{E}\left[(T-1)\left|\sum_{j=2}^T\sum_{k=j}^T\frac{1}{T-j+1}\sum_{a=1}^K\frac{R_k^2\mathbb{I}(A_k=a)}{\hat\pi_{k-1}(\bX_k, a)^2}\left(\tilde\pi_{T,j}(\bX_k,a)-\frac{\hat\pi_{k-1}(\bX_k,a)}{T-1}\right)^2\right|\right] \nonumber\\
  \le&~  (T-1)^{1+\eta}(\mu_U^2+\sigma_U^2)\sum_{j=2}^T\sum_{a=1}^K\mathbb{E}\left[\left(\tilde\pi_{T,j}(\bX,a)-\frac{\hat\pi_{j-1}(\bX,a)}{T-1}\right)^2\right]\label{0126temp3}\\
  &~+2(T-1)^{1+\eta}(\mu_U^2+\sigma_U^2)\sum_{j=2}^T\sum_{k=j}^T\frac{1}{T-j+1}\sum_{a=1}^K\mathbb{E}\left[\left|\tilde\pi_{T,j}(\bX_k,a)-\frac{\hat\pi_{j-1}(\bX_k,a)}{T-1}\right|\cdot\frac{|\hat\pi_{j-1}(\bX_j,a)-\hat\pi_{k-1}(\bX_j,a)|}{T-1}\right]\label{0126temp4}\\
  &~+(T-1)^{1+\eta}(\mu_U^2+\sigma_U^2)\sum_{j=2}^T\sum_{k=j}^T\frac{1}{T-j+1}\sum_{a=1}^K\mathbb{E}\left[\frac{|\hat\pi_{j-1}(\bX_j,a)-\hat\pi_{k-1}(\bX_j,a)|}{(T-1)^2}\right].\label{0126temp5}
\end{align}

Lemma~\ref{lemma:L2_approx} implies that the term in
Equation~\eqref{0126temp3} converges to zero.  For the term in 
Equation~\eqref{0126temp4}, notice
\begin{align}
&~2(T-1)^{1+\eta}(\mu_U^2+\sigma_U^2)\sum_{j=2}^T\sum_{k=j}^T\frac{1}{T-j+1}\sum_{a=1}^K\mathbb{E}\left[\left|\tilde\pi_{T,j}(\bX_k,a)-\frac{\hat\pi_{j-1}(\bX_k,a)}{T-1}\right|\cdot\frac{|\hat\pi_{j-1}(\bX_j,a)-\hat\pi_{k-1}(\bX_j,a)|}{T-1}\right]\nonumber\\
\le&~4(T-1)^\eta(\mu_U^2+\sigma_U^2)\sum_{j=2}^T\sum_{k=j}^T\frac{1}{T-j+1}\sum_{a=1}^K\mathbb{E}\left[\left|\tilde\pi_{T,j}(\bX_k,a)-\frac{\hat\pi_{j-1}(\bX_k,a)}{T-1}\right|\right]\nonumber\\
=&~4(T-1)^\eta(\mu_U^2+\sigma_U^2)\sum_{j=2}^T\sum_{a=1}^K\mathbb{E}\left[\left|\tilde\pi_{T,j}(\bX,a)-\frac{\hat\pi_{j-1}(\bX,a)}{T-1}\right|\right].\nonumber
\end{align}
Lemma~\ref{lemma:L1_approx} implies,
$$\sum_{j=2}^T\sum_{a=1}^K\mathbb{E}\left[\left|\tilde\pi_{T,j}(\bX_k,a)-\frac{\hat\pi_{j-1}(\bX_k,a)}{T-1}\right|\right]
= O\left(\frac{\log (T-1)}{(T-1)^{\min(1,\delta)}}\right).$$
By Assumption~\ref{ass:clip_rate}, we know $\eta<\frac{\delta}{1+\delta}$ and therefore, 
$$
4(T-1)^\eta(\mu_U^2+\sigma_U^2)\sum_{j=2}^T\sum_{a=1}^K\mathbb{E}\left[\left|\tilde\pi_{T,j}(\bX_k,a)-\frac{\hat\pi_{j-1}(\bX_k,a)}{T-1}\right|\right] \rightarrow 0.
$$
This shows that the term in Equation~\eqref{0126temp4} converges to zero.

Lastly, for the term in Equation~\eqref{0126temp5}, we have,
\begin{align*}
&~(T-1)^{1+\eta}(\mu_U^2+\sigma_U^2)\sum_{j=2}^T\sum_{k=j}^T\frac{1}{T-j+1}\sum_{a=1}^K\mathbb{E}\left[\frac{|\hat\pi_{j-1}(\bX_j,a)-\hat\pi_{k-1}(\bX_j,a)|}{(T-1)^2}\right] \nonumber\\
\le&~(T-1)^{-1+\eta}(\mu_U^2+\sigma_U^2)K\sum_{j=2}^T\sum_{k=j}^T\frac{1}{T-j+1}\sum_{i=j}^{k-1}\mathbb{E}[Q_i] \nonumber\\
\le&~(T-1)^{-1+\eta}(\mu_U^2+\sigma_U^2)K\cdot\sup_i \mathbb{E}\left[Q_ii^{1+\delta}\right]\sum_{j=2}^T\sum_{k=j}^T\frac{1}{T-j+1}\sum_{i=j}^{k-1}\frac{1}{i^{1+\delta}}\nonumber\\
\le&~(T-1)^{-1+\eta}(\mu_U^2+\sigma_U^2)K\cdot\sup_i \mathbb{E}\left[Q_ii^{1+\delta}\right]\sum_{j=2}^T\frac{1}{\delta j^{\delta}} = O\left(\frac{(T-1)^{1-\delta}}{T^{1-\eta}}\right) = o(1).
\end{align*}
Therefore, all terms in Equations~\eqref{0126temp3},~\eqref{0126temp4}
and~\eqref{0126temp5} converge to zero as $T\rightarrow\infty$. This
shows that the term in Equation~\eqref{01255} converges to zero in
probability.

\paragraph{Term in Equation~\eqref{01251} converges to zero in probability.}
We prove a stronger $L_1$ convergence.
\begin{align}
&~\mathbb{E}\left[\left|\sum_{j=2}^T \frac{1}{T-1}\sum_{a=1}^K \mathbb{E}\left[R_j^2\mathbb{I}(A_j=a)\mid \bH_{j-1}\right] - \sum_{j=2}^T\frac{1}{T-1}\sum_{k=j}^T\frac{\sum_{a=1}^K R_k^2\mathbb{I}(A_k=a)}{T-j+1}\right|\right] \nonumber\\
\le&~\frac{1}{T-1}\sum_{j=2}^T\sum_{a=1}^K \mathbb{E}\left[\left|\sum_{k=j}^T\frac{ R_k^2\mathbb{I}(A_k=a)}{T-j+1}-\mathbb{E}\left[R_j^2\mathbb{I}(A_j=a)\mid \bH_{j-1}\right]\right|\right]\nonumber\\
\le&~\frac{1}{T-1}\sum_{j=2}^T\sum_{a=1}^K \mathbb{E}\left[\left|\sum_{k=j}^T\frac{ R_k^2\mathbb{I}(A_k=a)-\mathbb{E}\left[R_k^2\mathbb{I}(A_k=a)\mid \bH_{k-1}\right]}{T-j+1}\right|\right]\label{0126temp6}\\
&~+\frac{1}{T-1}\sum_{j=2}^T\sum_{a=1}^K \mathbb{E}\left[\left|\sum_{k=j}^T \frac{\mathbb{E}\left[R_k^2\mathbb{I}(A_k=a)\mid \bH_{k-1}\right]-\mathbb{E}\left[R_j^2\mathbb{I}(A_j=a)\mid \bH_{j-1}\right]}{T-j+1}\right|\right]\label{0126temp7}
\end{align}
Notice by Jensen's inequality, for any $j$,
\begin{align}
  &~\mathbb{E}\left[\left|\sum_{k=j}^T\frac{ R_k^2\mathbb{I}(A_k=a)-\mathbb{E}\left[R_k^2\mathbb{I}(A_k=a)\mid \bH_{k-1}\right]}{T-j+1}\right|\right] \nonumber\\
  \le&~\mathbb{E}\left[\left|\sum_{k=j}^T\frac{ R_k^2\mathbb{I}(A_k=a)-\mathbb{E}\left[R_k^2\mathbb{I}(A_k=a)\mid \bH_{k-1}\right]}{T-j+1}\right|^2\right]^{\frac{1}{2}} \nonumber\\
  \le&~\mathbb{E}\left[\sum_{k=j}^T\frac{ \left(R_k^2\mathbb{I}(A_k=a)-\mathbb{E}\left[R_k^2\mathbb{I}(A_k=a)\mid \bH_{k-1}\right]\right)^2}{(T-j+1)^2}\right]^{\frac{1}{2}}, \nonumber
\end{align}
where the last inequality uses the martingale difference
property. Using this inequality, the term in
Equation~\eqref{0126temp6} can be bounded as follows,
\begin{align*}
&~\frac{1}{T-1}\sum_{j=2}^T\sum_{a=1}^K \mathbb{E}\left[\left|\sum_{k=j}^T\frac{ R_k^2\mathbb{I}(A_k=a)-\mathbb{E}\left[R_k^2\mathbb{I}(A_k=a)\mid \bH_{k-1}\right]}{T-j+1}\right|\right]\nonumber\\
\le&~\mathbb{E}\left[\sum_{k=j}^T\frac{ \left(R_k^2\mathbb{I}(A_k=a)-\mathbb{E}\left[R_k^2\mathbb{I}(A_k=a)\mid \bH_{k-1}\right]\right)^2}{(T-j+1)^2}\right]^{\frac{1}{2}} \nonumber\\
=&~\frac{1}{T-1}\sum_{j=2}^T\sum_{a=1}^K \mathbb{E}\left[\sum_{k=j}^T\frac{ \mathbb{E}\left[\left(R_k^2\mathbb{I}(A_k=a)-\mathbb{E}\left[R_k^2\mathbb{I}(A_k=a)\mid \bH_{k-1}\right]\right)^2\mid \bH_{k-1}\right]}{(T-j+1)^2}\right]^\frac{1}{2} \nonumber\\
\le&~\frac{2}{T-1}\sum_{j=2}^T\sum_{a=1}^K \mathbb{E}\left[\sum_{k=j}^T\frac{ \mathbb{E}\left[\left(R_k^2\mathbb{I}(A_k=a)\right)^2\mid \bH_{k-1}\right]}{(T-j+1)^2}\right]^\frac{1}{2} \nonumber\\
\le&~\frac{2}{T-1}\sum_{j=2}^T\sum_{a=1}^K \mathbb{E}\left[\sum_{k=j}^T\frac{ \mathbb{E}\left[R_k^4\right]}{(T-j+1)^2}\right]^\frac{1}{2} \nonumber\\
\le&~\frac{2\sqrt{K_4}K}{T-1}\sum_{j=2}^T\frac{1}{\sqrt{T-j+1}} = o(1) 
\end{align*}

Similarly, the term in Equation~\eqref{0126temp7} can be bounded as follows,
\begin{align}
&~\frac{1}{T-1}\sum_{j=2}^T\sum_{a=1}^K \mathbb{E}\left[\left|\sum_{k=j}^T \frac{\mathbb{E}\left[R_k^2\mathbb{I}(A_k=a)\mid \bH_{k-1}\right]-\mathbb{E}[R_j^2\mathbb{I}(A_j=a)\mid \bH_{j-1}]}{T-j+1}\right|\right] \nonumber\\
\le&~\frac{1}{T-1}\sum_{j=2}^T\sum_{a=1}^K \mathbb{E}\left[\left|\sum_{k=j}^T \frac{\mathbb{E}_{\bX}\left[(\sigma_a^2(\bX)+\mu_a(\bX)^2)\cdot(\hat\pi_{k-1}(\bX,a)-\hat\pi_{j-1}(\bX,a))\right]}{T-j+1}\right|\right]\nonumber\\
\le&~\frac{\mu_U^2+\sigma_U^2}{T-1}\sum_{j=2}^T\sum_{a=1}^K \sum_{k=j}^T \frac{\mathbb{E}\left[|\hat\pi_{k-1}(\bX,a)-\hat\pi_{j-1}(\bX,a)|\right]}{T-j+1}\nonumber\\
\le&~\frac{K(\mu_U^2+\sigma_U^2)}{T-1}\sum_{j=2}^T\sum_{k=j}^T \frac{\sum_{i=j}^{k-1}\mathbb{E}\left[Q_i\right]}{T-j+1}\nonumber\\
\le&~\frac{K(\mu_U^2+\sigma_U^2)\sup_i \mathbb{E}\left[Q_ii^{1+\delta}\right]}{(T-1)\delta}\sum_{j=2}^T\sum_{k=j}^T \frac{1}{(T-j+1)j^\delta}\nonumber\\
\le&~\frac{K(\mu_U^2+\sigma_U^2)\sup_i \mathbb{E}\left[Q_ii^{1+\delta}\right]}{(T-1)\delta}\sum_{j=2}^T\frac{1}{j^\delta} = o(1) \nonumber.
\end{align}
This shows that the term in Equation~\eqref{01251} converges to zero in probability.

\subsection{Convergence of the term in Equation~\eqref{01242}}

We begin by bounding this term as,
\begin{align*}
&(T-1)\left|\sum_{j=2}^T\left(\sum_{a=1}^K \mathbb{E}\left[\mu_a(\bX_j)\tilde\pi_{T,j}(\bX_j,a)\mid \bH_{j-1}\right]\right)^2-\bar G_{Tj\cdot}^2\right| \nonumber\\
\le&(T-1)\sum_{j=2}^T\left|\sum_{a=1}^K \left(\mathbb{E}\left[\mu_a(\bX_j)\tilde\pi_{T,j}(\bX_j,a)\mid \bH_{j-1}\right]-\frac{1}{T-j+1}\sum_{k=j}^T\frac{R_k\mathbb{I}(A_k=a)}{\hat\pi_{k-1}(\bX_k, a)}\tilde\pi_{T,j}(\bX_k,a)\right)\right| \nonumber\\
&\cdot \left|\sum_{a=1}^K \left(\mathbb{E}\left[\mu_a(\bX_j)\tilde\pi_{T,j}(\bX_j,a)\mid \bH_{j-1}\right]+\frac{1}{T-j+1}\sum_{k=j}^T\frac{R_k\mathbb{I}(A_k=a)}{\hat\pi_{k-1}(\bX_k, a)}\tilde\pi_{T,j}(\bX_k,a)\right)\right|.
\end{align*}
Therefore, taking the expectation of this upper bound yields,
\begin{align}
 &\mathbb{E} \left[(T-1)\left|\sum_{j=2}^T\left(\sum_{a=1}^K
   \mathbb{E}\left[\mu_a(X_j)\tilde\pi_{T,j}(X_j,a)\mid \bH_{j-1}\right]\right)^2-\bar G_{Tj\cdot}^2\right|\right] \nonumber\\
\le&(T-1)\sum_{j=2}^T\mathbb{E}\left[\left|\sum_{a=1}^K
     \left(\mathbb{E}\left[\mu_a(\bX_j)\tilde\pi_{T,j}(\bX_j,a)\mid \bH_{j-1}\right]-\frac{1}{T-j+1}\sum_{k=j}^T\frac{R_k\mathbb{I}(A_k=a)}{\hat\pi_{k-1}(\bX_k, a)}\tilde\pi_{T,j}(\bX_k,a)\right)\right|\right. \nonumber\\
  &\hspace{1in}\cdot \left.\left|\sum_{a=1}^K \left(\mathbb{E}\left[\mu_a(\bX_j)\tilde\pi_{T,j}(\bX_j,a)\mid \bH_{j-1}\right]+\frac{1}{T-j+1}\sum_{k=j}^T\frac{R_k\mathbb{I}(A_k=a)}{\hat\pi_{k-1}(\bX_k, a)}\tilde\pi_{T,j}(\bX_k,a)\right)\right|\right]\nonumber\\
\le&(T-1)\sum_{j=2}^T
     \mathbb{E}\left[\left\{\frac{1}{T-j+1}\sum_{k=j}^T\sum_{a=1}^K\left(\frac{R_k\mathbb{I}(A_k=a)}{\hat\pi_{k-1}(\bX_k,
     a)}\tilde\pi_{T,j}(\bX_k,a)
     -\mathbb{E}\left[\frac{R_k\mathbb{I}(A_k=a)}{\hat\pi_{k-1}(\bX_k,
     a)}\tilde\pi_{T,j}(\bX_k,a)\ \Bigr | \ \bH_{k-1}\right]\right)\right\}^2\right]^{\frac{1}{2}}\nonumber\\
&\cdot\mathbb{E}\left[\left\{\frac{1}{T-j+1}\sum_{k=j}^T\sum_{a=1}^K\left(\frac{R_k\mathbb{I}(A_k=a)}{\hat\pi_{k-1}(\bX_k, a)}\tilde\pi_{T,j}(\bX_k,a)  +\mathbb{E}\left[\frac{R_k\mathbb{I}(A_k=a)}{\hat\pi_{k-1}(\bX_k,  a)}\tilde\pi_{T,j}(\bX_k,a)\  \Bigr|\  \bH_{k-1}\right]\right)\right\}^2\right]^{\frac{1}{2}}\label{0127temp0}
\end{align}
Notice
$\left\{\sum_{a=1}^K\left(\frac{R_k\mathbb{I}(A_k=a)}{\hat\pi_{k-1}(\bX_k,
      a)}\tilde\pi_{T,j}(\bX_k,a)
    -\mathbb{E}\left[\frac{R_k\mathbb{I}(A_k=a)}{\hat\pi_{k-1}(\bX_k,
        a)}\tilde\pi_{T,j}(\bX_k,a)\mid \bH_{k-1}\right]\right)\right\}_{k=j}^T$ is a martingale difference sequence. Therefore,
\begin{align}
&\mathbb{E}\left[\left\{\frac{1}{T-j+1}\sum_{k=j}^T\sum_{a=1}^K\left(\frac{R_k\mathbb{I}(A_k=a)}{\hat\pi_{k-1}(\bX_k,
                a)}\tilde\pi_{T,j}(\bX_k,a)
                -\mathbb{E}\left[\frac{R_k\mathbb{I}(A_k=a)}{\hat\pi_{k-1}(\bX_k,
                a)}\tilde\pi_{T,j}(\bX_k,a)\  \Bigr| \ \bH_{k-1}\right]\right)\right\}^2\right]\nonumber\\
=&\mathbb{E}\left[\frac{1}{(T-j+1)^2}\sum_{k=j}^T\left\{\sum_{a=1}^K\left(\frac{R_k\mathbb{I}(A_k=a)}{\hat\pi_{k-1}(\bX_k, a)}\tilde\pi_{T,j}(\bX_k,a)
   -\mathbb{E}\left[\frac{R_k\mathbb{I}(A_k=a)}{\hat\pi_{k-1}(\bX_k,
   a)}\tilde\pi_{T,j}(\bX_k,a)\ \Bigr| \ \bH_{k-1}\right]\right)\right\}^2\right]\nonumber\\
\le&\frac{4}{(T-j+1)^2}\sum_{k=j}^T\mathbb{E}\left[\left(\sum_{a=1}^K\frac{R_k\mathbb{I}(A_k=a)}{\hat\pi_{k-1}(\bX_k, a)}\tilde\pi_{T,j}(\bX_k,a)\right)^2\right]\nonumber\\
\le&\frac{4}{(T-j+1)^2}\sum_{k=j}^T\sum_{a=1}^K\mathbb{E}\left[\frac{\mu_a(\bX)^2+\sigma_a^2(\bX)}{\hat\pi_{k-1}(\bX, a)}\tilde\pi_{T,j}(\bX,a)^2\right]\nonumber\\
\le&\frac{4(\mu_U^2+\sigma_U^2)(T-1)^\eta}{(T-j+1)^2}\sum_{k=j}^T\sum_{a=1}^K\mathbb{E}\left[\tilde\pi_{T,j}(\bX,a)^2\right]\nonumber\\
\le&\frac{4K(\mu_U^2+\sigma_U^2)(T-1)^\eta}{T-j+1}\sup_{a}\mathbb{E}\left[\tilde\pi_{T,j}(\bX,a)^2\right].\label{0127temp3}
\end{align} 
For the second term in Equation~\eqref{0127temp0}, we have,
\begin{align}
  &\mathbb{E}\left[\left\{\frac{1}{T-j+1}\sum_{k=j}^T\sum_{a=1}^K\left(\frac{R_k\mathbb{I}(A_k=a)}{\hat\pi_{k-1}(\bX_k,a)}\tilde\pi_{T,j}(\bX_k,a)
    +\mathbb{E}\left[\frac{R_k\mathbb{I}(A_k=a)}{\hat\pi_{k-1}(\bX_k,
    a)}\tilde\pi_{T,j}(\bX_k,a)\ \Bigr | \  \bH_{k-1}\right]\right)\right\}^2\right]\nonumber\\
  \le&2\mathbb{E}\left[\frac{1}{(T-j+1)^2}\left\{\sum_{k=j}^T\sum_{a=1}^K\left(\frac{R_k\mathbb{I}(A_k=a)}{\hat\pi_{k-1}(\bX_k, a)}\tilde\pi_{T,j}(\bX_k,a)\right)\right\}^2\right]\nonumber\\
  &+2\mathbb{E}\left[\frac{1}{(T-j+1)^2}\left\{\sum_{k=j}^T\sum_{a=1}^K\left(\mathbb{E}\left[\frac{R_k\mathbb{I}(A_k=a)}{\hat\pi_{k-1}(\bX_k,
    a)}\tilde\pi_{T,j}(\bX_k,a)\ \Bigr| \ \bH_{k-1}\right]\right)\right\}^2\right]\nonumber\\
  \le&2\mathbb{E}\left[\frac{1}{(T-j+1)^2}\sum_{k_1=j}^T\sum_{k_2=j}^T\sum_{a_1=1}^K\sum_{a_2=1}^K\frac{R_{k_1}R_{k_2}\mathbb{I}(A_{k_1}=a_1)\mathbb{I}(A_{k_2}=a_2)}{\hat\pi_{k_1-1}(\bX_{k_1}, a_1)\hat\pi_{k_2-1}(\bX_{k_2}, a_2)}\tilde\pi_{T,j}(\bX_{k_1},a)\tilde\pi_{T,j}(\bX_{k_2},a)\right]\nonumber\\
  &+2\mathbb{E}\left[\frac{1}{(T-j+1)^2}\left\{\sum_{k=j}^T\sum_{a=1}^K\mathbb{E}_{\bX}\left[\mu_a(\bX,a)\tilde\pi_{T,j}(\bX,a)\right]\right\}^2\right]\nonumber\\
  \le&\frac{2}{(T-j+1)^2}\sum_{ j\le k_1<k_2\le T}\sum_{a_1=1}^K\sum_{a_2=1}^K\mathbb{E}\left[\frac{R_{k_1}R_{k_2}\mathbb{I}(A_{k_1}=a_1)\mathbb{I}(A_{k_2}=a_2)}{\hat\pi_{k_1-1}(\bX_{k_1}, a_1)\hat\pi_{k_2-1}(\bX_{k_2}, a_2)}\tilde\pi_{T,j}(\bX_{k_1},a)\tilde\pi_{T,j}(\bX_{k_2},a)\right]\label{0127temp1}\\
  &+\frac{1}{(T-j+1)^2}\sum_{a_1=1}^K\sum_{a_2=1}^K\sum_{k=j}^T \mathbb{E}\left[\frac{R_k^2\mathbb{I}(A_k=a_1)\mathbb{I}(A_k=a_2)}{\hat\pi_{k-1}(\bX_k,a)^2}\tilde\pi_{T,j}(\bX_k,a)^2\right]\label{0127temp2}\\
  &+2K^2\mu_U^2 \sup_{1\le a\le K} \mathbb{E}\left[\tilde\pi_{T,j}(\bX,a)^2\right].
\end{align}
We can further bound the term in  Equation~\eqref{0127temp1} as,
\begin{align*}
  &~\frac{2}{(T-j+1)^2}\sum_{ j\le k_1<k_2\le T}\sum_{a_1=1}^K\sum_{a_2=1}^K\mathbb{E}\left[\frac{R_{k_1}R_{k_2}\mathbb{I}(A_{k_1}=a_1)\mathbb{I}(A_{k_2}=a_2)}{\hat\pi_{k_1-1}(\bX_{k_1}, a_1)\hat\pi_{k_2-1}(\bX_{k_2}, a_2)}\tilde\pi_{T,j}(\bX_{k_1},a)\tilde\pi_{T,j}(\bX_{k_2},a)\right] \nonumber\\
  =&~\frac{2}{(T-j+1)^2}\sum_{ j\le k_1<k_2\le T}\sum_{a_1=1}^K\sum_{a_2=1}^K\mathbb{E}\left[\frac{R_{k_1}\mathbb{I}(A_{k_1}=a_1)}{\hat\pi_{k_1-1}(\bX_{k_1}, a_1)}\tilde\pi_{T,j}(\bX_{k_1},a)\mathbb{E}\left[\tilde\pi_{T,j}(\bX_{k_2},a)\frac{R_{k_2}\mathbb{I}(A_{k_2}=a)}{\hat\pi_{k_2-1}(\bX_k,a)}\Bigr| \bH_{k_2-1}\right]\right]\nonumber\\
  =&~\frac{2}{(T-j+1)^2}\sum_{ j\le k_1<k_2\le T}\sum_{a_1=1}^K\sum_{a_2=1}^K\mathbb{E}\left[\frac{R_{k_1}\mathbb{I}(A_{k_1}=a_1)}{\hat\pi_{k_1-1}(\bX_{k_1}, a_1)}\tilde\pi_{T,j}(\bX_{k_1},a)\mathbb{E}_{\bX}\left[\tilde\pi_{T,j}(\bX,a)\mu_a(\bX)\right]\right]\nonumber\\
  =&~\frac{2}{(T-j+1)^2}\sum_{ j\le k_1<k_2\le T}\sum_{a_1=1}^K\sum_{a_2=1}^K\mathbb{E}\left[\mathbb{E}\left[\frac{R_{k_1}\mathbb{I}(A_{k_1}=a_1)}{\hat\pi_{k_1-1}(\bX_{k_1}, a_1)}\tilde\pi_{T,j}(\bX_{k_1},a)\Bigr| \bH_{k_1-1}\right]\mathbb{E}_{\bX}\left[\tilde\pi_{T,j}(\bX,a)\mu_a(\bX)\right]\right]\nonumber\\
  =&~\frac{2}{(T-j+1)^2}\sum_{ j\le k_1<k_2\le T}\sum_{a_1=1}^K\sum_{a_2=1}^K\mathbb{E}\left[\mathbb{E}_{\bX}\left[\tilde\pi_{T,j}(\bX,a)\mu_a(\bX)\right]^2\right] \nonumber\\
  \le&~\frac{2K^2\mu_U^2(T-j)}{T-j+1}\sup_a\mathbb{E}\left[\tilde\pi_{T,j}(\bX,a)^2\right]\nonumber\\
  \le &~ 2K^2\mu_U^2\sup_a\mathbb{E}\left[\tilde\pi_{T,j}(\bX,a)^2\right]
\end{align*}

Lastly, we bound the term in Equation~\eqref{0127temp2} as,
\begin{align*}
&~\frac{1}{(T-j+1)^2}\sum_{a_1=1}^K\sum_{a_2=1}^K\sum_{k=j}^T \mathbb{E}\left[\frac{R_k^2\mathbb{I}(A_k=a_1)\mathbb{I}(A_k=a_2)}{\hat\pi_{k-1}(\bX_k,a)^2}\tilde\pi_{T,j}(\bX_k,a)^2\right]\nonumber\\
=&~\frac{1}{(T-j+1)^2}\sum_{a=1}^K\sum_{k=j}^T \mathbb{E}\left[\frac{R_k^2\mathbb{I}(A_k=a)}{\hat\pi_{k-1}(\bX_k,a)^2}\tilde\pi_{T,j}(\bX_k,a)^2\right] \nonumber\\
\le&~\frac{(\mu_U^2+\sigma_U^2)(T-1)^{\eta}K}{T-j+1} \sup_a \mathbb{E}\left[\tilde\pi_{T,j}(\bX,a)^2\right]
\end{align*}
Therefore,
\begin{align}
  &~\mathbb{E}\left[\left\{\frac{1}{T-j+1}\sum_{k=j}^T\sum_{a=1}^K\left(\frac{R_k\mathbb{I}(A_k=a)}{\hat\pi_{k-1}(\bX_k,a)}\tilde\pi_{T,j}(\bX_k,a)
    +\mathbb{E}\left[\frac{R_k\mathbb{I}(A_k=a)}{\hat\pi_{k-1}(\bX_k,
    a)}\tilde\pi_{T,j}(\bX_k,a)\ \Bigr|\ \bH_{k-1}\right]\right)\right\}^2\right]\nonumber\\
\le&~2K^2(\mu_U^2+\sigma_U^2)(T-1)^\eta \sup_a \mathbb{E}\left[\tilde\pi_{T,j}(\bX,a)^2\right]\label{0127temp4}
\end{align}
Plugging Equations~\eqref{0127temp3}~and~\eqref{0127temp4} into Equation~\eqref{0127temp0}, we have
\begin{align*}
  &(T-1)\sum_{j=2}^T
    \mathbb{E}\left[\left\{\frac{1}{T-j+1}\sum_{k=j}^T\sum_{a=1}^K\left(\frac{R_k\mathbb{I}(A_k=a)}{\hat\pi_{k-1}(\bX_k,
    a)}\tilde\pi_{T,j}(\bX_k,a)
    -\mathbb{E}\left[\frac{R_k\mathbb{I}(A_k=a)}{\hat\pi_{k-1}(\bX_k,
    a)}\tilde\pi_{T,j}(\bX_k,a)\ \Bigr| \ \bH_{k-1}\right]\right)\right\}^2\right]^{\frac{1}{2}}\nonumber\\
  &~\cdot\mathbb{E}\left[\left\{\frac{1}{T-j+1}\sum_{k=j}^T\sum_{a=1}^K\left(\frac{R_k\mathbb{I}(A_k=a)}{\hat\pi_{k-1}(\bX_k,
    a)}\tilde\pi_{T,j}(\bX_k,a)
    +\mathbb{E}\left[\frac{R_k\mathbb{I}(A_k=a)}{\hat\pi_{k-1}(\bX_k,
    a)}\tilde\pi_{T,j}(\bX_k,a)\ \Bigr| \ \bH_{k-1}\right]\right)\right\}^2\right]^{\frac{1}{2}} \nonumber\\
  \le&~(T-1)\sum_{j=2}^T \sqrt{\frac{4K(\mu_U^2+\sigma_U^2)(T-1)^\eta}{T-j+1}\sup_{a}\mathbb{E}\left[\tilde\pi_{T,j}(\bX,a)^2\right]}\cdot\sqrt{2K^2(\mu_U^2+\sigma_U^2)(T-1)^\eta \sup_a \mathbb{E}\left[\tilde\pi_{T,j}(\bX,a)^2\right]}\nonumber\\
  \le&~2\sqrt{2}(\mu_U^2+\sigma_U^2)K^{\frac{3}{2}}(T-1)^{1+\eta}\sum_{j=2}^T\frac{1}{\sqrt{T-j+1}}\sup_{a} \mathbb{E}\left[\tilde\pi_{T,j}(\bX,a)^2\right].
\end{align*}
Notice for any $a$, $T$, $j$,
\begin{align*}
  & \mathbb{E}\left[\tilde\pi_{T,j}(\bX,a)^2\right] \\
  = & \mathbb{E}\left[\left(\tilde\pi_{T,j}(\bX,a)-\frac{\hat\pi_{j-1(\bX,a)}}{T-1}\right)^2\right] + 2\mathbb{E}\left[\left(\tilde\pi_{T,j}(\bX,a)-\frac{\hat\pi_{j-1(\bX,a)}}{T-1}\right)\frac{\hat\pi_{j-1}(\bX,a)}{T-1}\right] + \mathbb{E}\left[\frac{\hat\pi_{j-1}(\bX,a)^2}{(T-1)^2}\right].
\end{align*}
By Lemma~\ref{lemma:L2_approx},
$$
\sum_{j=2}^T\sum_{a=1}^K \mathbb{E}\left[\left(\tilde\pi_{T,j}(X_j,a)-\frac{\hat\pi_{j-1}(X_j,a)}{T-1}\right)^2\right] = o\left(\frac{1}{(T-1)^{1+\eta}}\right).
$$
In addition, by Lemma~\ref{lemma:L1_approx}, we have,
$$\sum_{j=2}^T
2\mathbb{E}\left[\left(\tilde\pi_{T,j}(\bX,a)-\frac{\hat\pi_{j-1}(\bX,a)}{T-1}\right)\frac{\hat\pi_{j-1}(\bX,a)}{T-1}\right]
= O\left(\frac{\log T}{(T-1)^{1+\min(\delta,1)}}\right).$$
Furthermore,
\begin{align*}
\sum_{j=2}^T\frac{1}{\sqrt{T-j+1}}\sum_{a=1}^K\mathbb{E}\left[\frac{\hat\pi_{j-1}(\bX,a)^2}{(T-1)^2}\right]\le&\frac{K}{(T-1)^2}\sum_{j=2}^T \frac{1}{\sqrt{T-j+1}} = O\left(\frac{1}{(T-1)^{\frac{3}{2}}}\right).
\end{align*}
Therefore, 
\begin{align*}
\sum_{j=2}^T\frac{1}{\sqrt{T-j+1}}\mathbb{E}\left[\tilde\pi_{T,j}(\bX,a)^2\right] &= o\left(\frac{1}{(T-1)^{1+\eta}}\right)+O\left(\frac{\log T}{(T-1)^{1+\min(\delta,1)}}\right)+O\left(\frac{1}{(T-1)^{\frac{3}{2}}}\right).\nonumber\\
&= o\left(\frac{1}{(T-1)^{1+\eta}}\right)
\end{align*}
All together, we have shown,
\begin{align*}
  &(T-1)\sum_{j=2}^T
    \mathbb{E}\left[\left\{\frac{1}{T-j+1}\sum_{k=j}^T\sum_{a=1}^K\left(\frac{R_k\mathbb{I}(A_k=a)}{\hat\pi_{k-1}(\bX_k, a)}\tilde\pi_{T,j}(\bX_k,a)
    -\mathbb{E}\left[\frac{R_k\mathbb{I}(A_k=a)}{\hat\pi_{k-1}(\bX_k,
    a)}\tilde\pi_{T,j}(\bX_k,a)\ \Bigr| \ \bH_{k-1}\right]\right)\right\}^2\right]^{\frac{1}{2}}\nonumber\\
  &\cdot\mathbb{E}\left[\left\{\frac{1}{T-j+1}\sum_{k=j}^T\sum_{a=1}^K\left(\frac{R_k\mathbb{I}(A_k=a)}{\hat\pi_{k-1}(\bX_k, a)}\tilde\pi_{T,j}(\bX_k,a)
    +\mathbb{E}\left[\frac{R_k\mathbb{I}(A_k=a)}{\hat\pi_{k-1}(\bX_k,
    a)}\tilde\pi_{T,j}(\bX_k,a)\ \Bigr|\ \bH_{k-1}\right]\right)\right\}^2\right]^{\frac{1}{2}} \rightarrow 0.
\end{align*}
As $L_1$ convergence implies convergence in probability, Equation~\eqref{01242} converges to zero in probability.

\end{proof}

\section{Proof of Corollary~\ref{cor:ci}}
\label{cor:ci_proof}

\begin{proof}
  Given Theorem~\ref{thm:Asymptotic_Normality}, it suffices to show:
$$
\left|\frac{v_T}{\hat v_T}-1\right| \stackrel{p}{\longrightarrow} 0.
$$
We begin by noting that $\forall \epsilon >0$,
$$
\mathbb{P}\left(\left|\frac{v_T}{\hat v_T}-1\right|>\epsilon\right) \
= \ \mathbb{P}\left(\left|{v_T}-{\hat v_T}\right|>\epsilon \hat v_T\right).
$$
Condition~\eqref{cond:vt_lower_bound} in the proof of
Theorem~\ref{thm:Asymptotic_Normality} implies that there exists a
constant $c_1>0$ such that
$\lim_{T\rightarrow\infty}\mathbb{P}\left(v_T<c_1/2\right)=0$.
Theorem~\ref{thm:variance_estimator}, together with the Continuous
Mapping Theorem, yields
$\left|\hat v_{T}-v_{T}\right| \stackrel{p}{\longrightarrow} 0$.
Therefore,
$$
\lim_{T\rightarrow\infty}\mathbb{P}\left(|\hat
  v_{T}|<\frac{c_1}{4}\right)  \le \lim_{T\rightarrow\infty}\mathbb{P}\left(|\hat v_{T}-v_{T}|>\frac{c_1}{4}\right) + \lim_{T\rightarrow\infty}\mathbb{P}\left(|v_{T}|<{\frac{c_1}{2}}\right) = 0.
$$
Finally, for all $\epsilon >0$, 
\begin{align}
	\mathbb{P}\left(\Bigr|{v_{T}}-{\hat v_{T}}\Bigr|>\epsilon \hat v_{T}\right) \nonumber &\le \mathbb{P}\left(|\hat v_{T}|\ge\frac{c_1}{4}, \Bigr|{v_{T}}-{\hat v_{T}}\Bigr|>\frac{c_1\epsilon}{4}\right) + \mathbb{P}\left(|v_{T}|<\frac{c_1}{4}\right)\nonumber\\
	&\le  \mathbb{P}\left(\Bigr|{v_{T}}-{\hat v_{T}}\Bigr|>\frac{c_1\epsilon}{4}\right) + \mathbb{P}\left(|v_{T}|<\frac{c_1}{4}\right)\nonumber,
\end{align}
and the right-hand-side goes to 0 as $T\rightarrow\infty$. Therefore,
\begin{align}
	\Bigr|\frac{v_{T}}{\hat v_{T}}-1\Bigr| \xrightarrow{p} 0 \nonumber.
\end{align}
\end{proof}

\section{Proof of Corollary~\ref{cor:final_policy}}
\label{cor:final_policy_proof}
\begin{proof}
  By the definition of the value function,
  \begin{align*}
    &(T-1)\mathbb{E}\left[|\Delta(\hat\pi_T,\hat\pi_{T-1})|\right] \nonumber\\
    =&(T-1)\mathbb{E}\left[\left|\sum_{a=1}^K \mu_a(\bX)\left(\hat\pi_T(\bX)-\hat\pi_{T-1}(\bX)\right)\right|\right] \nonumber\\
    =&(T-1)K\mu_U \cdot\sup_a \mathbb{E}\left[|\hat\pi_T(\bX,a) - \hat\pi_{T-1}(\bX,a)|\right]\nonumber\\
    \le& (T-1)K\mu_U \cdot \mathbb{E}\left[Q_T\right] \le K\mu_UT^{-\delta} \rightarrow 0.
  \end{align*}
Therefore, 
\begin{align}
  \mathbb{E}\left[|\Delta(\hat\pi_T,\hat\pi_{T-1})|\right] = o\left(\frac{1}{T}\right)\label{0129temp1}
\end{align}
and $\Delta(\hat\pi_T,\hat\pi_{T-1}) \xrightarrow{P} 0.$ Since
$\bigr|\widehat V(\hat\pi_{T-1}) - V(\hat\pi_{T})\bigr|
\le\bigr|\widehat V(\hat\pi_{T-1}) - V(\hat\pi_{T-1})\bigr| +
\bigr|V(\hat\pi_{T-1}) - V(\hat\pi_{T})\bigr|$,
Theorem~\ref{thm:L1Consistency} implies
$\bigr|\widehat V(\hat\pi_{T-1}) - V(\hat\pi_{T-1})\bigr| = o_p(1)$,
and as $T\rightarrow\infty$, we have
$ \bigr|\widehat V(\hat\pi_{T-1}) - V(\hat\pi_{T})\bigr|
\xrightarrow{P} 0.  $ Since
$$\sqrt{T-1}\cdot\frac{\widehat{V}(\hat\pi_{T-1}) -
  V(\hat\pi_{T})}{\hat v_T} =
\sqrt{T-1}\cdot\frac{\widehat{V}(\hat\pi_{T-1}) -
  V(\hat\pi_{T-1})}{\hat v_T} + \sqrt{T-1}\cdot\frac{V(\hat\pi_{T-1})
  - V(\hat\pi_{T})}{\hat v_T},$$
and by Corollary~\ref{cor:ci},
$$
\sqrt{T-1}\cdot\frac{\widehat{V}(\hat\pi_{T-1}) - V(\hat\pi_{T-1})}{\hat v_T} \xrightarrow{d} \mathcal{N}(0,1),
$$
it suffices to show that 
\begin{align}
  \sqrt{T-1}\cdot\frac{V(\hat\pi_{T-1}) - V(\hat\pi_{T})}{\hat v_T} \xrightarrow{P} 0. \label{0129temp2}
\end{align}
In the proof of corollary~\ref{cor:ci}, we have shown that
\begin{align*}
  \mathbb{P}\left(|\hat v_{T}|<\frac{c_1}{4}\right) \rightarrow 0.
\end{align*}
Equation~\eqref{0129temp1} implies
$\sqrt{T-1}(V(\hat\pi_{T-1}) -
V(\hat\pi_{T}))\xrightarrow{P}0$. Therefore, Equation~\eqref{0129temp2} holds.
\end{proof}

\section{Lemmas and their proofs}
\label{app:lemmas}

\subsection{Lemma \ref{lemma:martingale_berry}}
\label{lemma:martingale_berry_proof}
Recall the following definitions,
\begin{align}
  \Lambda_{T,j}&:=
                 \sum_{a=1}^{K}\frac{\mathbb{I}(A_j=a)R_j}{T-1}-\mathbb{E}\left[\frac{\mathbb{I}(A_j=a)R_j}{T-1}
                 \ \Bigr|
                 \ \bH_{j-1}\right] \nonumber\\
  \sigma_{T,j}^2&:= \E\left[\Lambda_{T,j}^2\mid \bH_{j-1}\right] \nonumber\\
  S_T^2&:= \sum_{j=f(T)+1}^{T}\mathbb{E}\left[\sigma_{T,j}^2\mid \bH_{f(T)}\right]. \nonumber
\end{align}
 
\begin{proof}
  Using the Theorem of \cite{heyde2010departure}, there exists a
  constant $K_1$ such that
\begin{align}
  &\sup_w \biggr|\mathbb{P}\left(\sum_{j=f(T)+1}^{T}\Lambda_{T,j}\le
    S_T w \ \Bigr| \ \bH_{f(T)}\right)-\Phi(w)\biggr|\nonumber\\
  \le& K_1
       \left\{S_T^{-4}\left(\sum_{j=f(T)+1}^{T}\E[\Lambda_{T,j}^4\mid
       \bH_{f(T)}]+\mathbb{E}\left[\left(\sum_{j=f(T)+1}^{T}\sigma_{T,j}^2-S_T^2\right)^2\
       \Biggr| \ \bH_{f(T)}\right]\right)\right\}^{\frac{1}{5}}. \label{eq:240518}
\end{align}
Therefore, it suffices to show
\begin{enumerate}
\item There exists a constant $c_2$ such that 
  $$
  \mathbb{P}\left(\sqrt{T-f(T)} S_T \ge c_2\right)\rightarrow 1.
  $$
\item $$
  (T-f(T))^2\sum_{j=f(T)+1}^{T}\E[\Lambda_{T,j}^4\mid \bH_{f(T)}] \xrightarrow{P} 0.
  $$
\item $$
  (T-f(T))^2
  \mathbb{E}\left[\left(\sum_{j=f(T)+1}^{T}\sigma_{T,j}^2-S_T^2\right)^2\
    \Biggr| \ \bH_{f(T)}\right] \xrightarrow{P} 0
  $$
\end{enumerate}

\paragraph{The term $\sqrt{T-f(T)}S_T$ is lower bounded by a constant in probability.}
Using the definition of $\sigma_{T,j}$, we can write,
\begin{align*}
  & \sum_{j=f(T)+1}^{T}\sigma^2_{T,j} \\
  = & \sum_{j=f(T)+1}^T \E\left[\Lambda_{T,j}^2\mid \bH_{j-1}\right] \nonumber\\
  = & \sum_{j=f(T)+1}^T
    \E\left[\left(\sum_{a=1}^{K}\frac{\mathbb{I}(A_j=a)R_j}{T-1}-\mathbb{E}\left[\frac{\mathbb{I}(A_j=a)R_j}{T-1}
    \ \Bigr|  \ \bH_{j-1}\right]\right)^2 \ \Bigr| \ \bH_{j-1}\right]\nonumber\\
  = & \sum_{j=f(T)+1}^T \sum_{a=1}^K \frac{\mathbb{E}_{\bX}\left[(\mu_a(\bX)^2+\sigma_a^2(\bX))\hat\pi_{j-1}(\bX_j,a)\right]}{(T-1)^2} - \frac{\mathbb{E}_{\bX_j}\left[\sum_{a=1}^K\mu_a(\bX_j)\hat\pi_{j-1}(\bX_j,a)\right]^2}{(T-1)^2}.\nonumber
\end{align*}
By Jensen's inequality and Cauchy-Schwartz inequality, for all $j$ and
$\bX\in\mathcal{X}$, we have,
\begin{align}
\mathbb{E}_{\bX}\left[\sum_{a=1}^K\hat\pi_{j-1}(\bX,a)\mu_a(\bX)\right]^2 &\le \mathbb{E}_{\bX}\left[\left(\sum_{a=1}^K\hat\pi_{j-1}(\bX,a)\mu_a(\bX)\right)^2\right]\nonumber\\
 &\le \mathbb{E}_{\bX}\left[\left(\sum_{a=1}^K\sqrt{\hat\pi_{j-1}(\bX,a)}\mu_a(\bX)\cdot \sqrt{\hat\pi_{j-1}(\bX,a)}\right)^2\right]\nonumber\\
 &\le \mathbb{E}_{\bX}\left[\sum_{a=1}^K{\hat\pi_{j-1}(\bX,a)}\mu_a(\bX)^2\cdot \sum_{a=1}^K{\hat\pi_{j-1}(\bX,a)}\right]\nonumber\\
 &= \mathbb{E}_{\bX}\left[\sum_{a=1}^K{\hat\pi_{j-1}(\bX,a)}\mu_a(\bX)^2\right]\nonumber
\end{align}
Therefore, using Assumption~\ref{ass:bounded}, we have,
\begin{align*}
  \sum_{j=f(T)+1}^{T}\sigma^2_{T,j} &\ge \sum_{j=f(T)+1}^T \sum_{a=1}^K \frac{\mathbb{E}_{\bX}\left[\sigma_a^2(\bX)\hat\pi_{j-1}(\bX,a)\right]}{(T-1)^2} \nonumber\\
  &\ge
    \frac{T-f(T)}{(T-1)^2}\sigma_L^2\sum_{a=1}^K\mathbb{E}_{\bX}\left[\hat\pi_{j-1}(\bX,a)\right]
  \\
  & = \frac{T-f(T)}{(T-1)^2}\sigma_L^2.\nonumber
\end{align*}
This implies,
\begin{align*}
\sqrt{T-f(T)}S_T &= \sqrt{T-f(T)}\left(\sum_{j=f(T)+1}^T
                   \mathbb{E}\left[\sigma_{T,j}^2\mid \bH_{f(T)}\right]\right)^{\frac{1}{2}}\nonumber\\
                 &\ge \sqrt{T-f(T)}\cdot \frac{\sqrt{T-f(T)}}{T-1}\sigma_L  \\
  & = \frac{T-f(T)}{T-1}\sigma_L.
\end{align*}
When T is sufficiently large, $\frac{T-f(T)}{T-1}$ will be greater than
$\frac{1}{2}$, yielding,
$$
\sqrt{T-f(T)}S_T \ge \frac{\sigma_L}{2}.
$$

\paragraph{The term
  $\sum_{j=f(T)+1}^{T}(T-f(T))^2\E[\Lambda_{T,j}^4\mid \bH_{f(T)}]$
  converges to zero in probability.}
We can write,
\begin{align*}
&\sum_{j=f(T)+1}^{T}(T-f(T))^2\E[\Lambda_{T,j}^4\mid \bH_{f(T)}] \nonumber\\
=&(T-f(T))^2\sum_{j=f(T)+1}^T
   \mathbb{E}\left[\left(\sum_{a=1}^{K}\frac{\mathbb{I}(A_j=a)R_j}{T-1}-\mathbb{E}\left[\frac{\mathbb{I}(A_j=a)R_j}{T-1}\  \Bigr|\ \bH_{j-1}\right]\right)^4\
   \Bigr| \ \bH_{j-1}\right]\nonumber\\
\le&16(T-f(T))^2\sum_{j=f(T)+1}^T
     \mathbb{E}\left[\left(\sum_{a=1}^{K}\frac{\mathbb{I}(A_j=a)R_j}{T-1}\right)^4
     \ \Bigr| \ \bH_{j-1}\right]\nonumber\\
\le&\frac{16KK_4(T-f(T))^3}{(T-1)^4} = o(1),
\end{align*}
where the last inequality is due to
Assumption~\ref{ass:fourth_moment}. Therefore, we have the desired result,
\begin{align*}
  \sum_{j=f(T)+1}^{T}(T-f(T))^2\E[\Lambda_{T,j}^4\mid \bH_{f(T)}] \xrightarrow{P} 0.
\end{align*}

\paragraph{The term
  $(T-f(T))^2
  \mathbb{E}\left[\left(\sum_{j=f(T)+1}^{T}\sigma_{T,j}^2-S_T^2\right)^2\mid
    \bH_{f(T)}\right]$ converges to zero in probability.}

Using the expression of $\sigma_{T,j}^2$ derived above, we have,
\begin{align}
  \sigma_{T,j}^2 =& \sum_{a=1}^K \frac{\mathbb{E}_{\bX}\left[(\mu_a(\bX)^2+\sigma_a^2(\bX))\hat\pi_{j-1}(\bX,a)\right]}{(T-1)^2} - \frac{\mathbb{E}_{\bX}\left[\sum_{a=1}^K\mu_a(\bX)\hat\pi_{j-1}(\bX,a)\right]^2}{(T-1)^2}\nonumber.
\end{align}
Since
\begin{align}
  &~\sum_{a=1}^K \frac{\mathbb{E}_{\bX}\left[(\mu_a(\bX)^2+\sigma_a^2(\bX))\hat\pi_{j-1}(\bX,a)\right]}{(T-1)^2}\nonumber\\
  =&~\sum_{a=1}^K \frac{\mathbb{E}_{\bX}\left[(\mu_a(\bX)^2+\sigma_a^2(\bX))(\hat\pi_{j-1}(\bX,a)-\hat\pi_{f(T)}(\bX,a))\right]}{(T-1)^2}\nonumber + \sum_{a=1}^K\frac{\mathbb{E}_{\bX}\left[(\mu_a(\bX)^2+\sigma_a^2(\bX))\hat\pi_{f(T)}(\bX,a)\right]}{(T-1)^2} \nonumber
\end{align}
and
\begin{align}
&~\frac{\mathbb{E}_{\bX}\left[\sum_{a=1}^K\mu_a(\bX)\hat\pi_{j-1}(\bX,a)\right]^2}{(T-1)^2}\nonumber\\
=&~\frac{\mathbb{E}_{\bX}\left[\sum_{a=1}^K\mu_a(\bX)\hat\pi_{j-1}(\bX,a)\right]^2-\mathbb{E}_{\bX}\left[\sum_{a=1}^K\mu_a(\bX)\hat\pi_{f(T)}(\bX,a)\right]^2}{(T-1)^2}\nonumber\\
&~+\frac{\mathbb{E}_{\bX}\left[\sum_{a=1}^K\mu_a(\bX)\hat\pi_{f(T)}(\bX,a)\right]^2}{(T-1)^2}\nonumber,
\end{align} 
we can write $\sigma_{T,j}^2$ as follows,
  \begin{align}
    \sigma_{T,j}^2=& \sum_{a=1}^K \frac{\mathbb{E}_{\bX}\left[(\mu_a(\bX)^2+\sigma_a^2(\bX))\left(\hat\pi_{j-1}(\bX,a)-\hat\pi_{f(T)}(\bX,a)\right)\right]}{(T-1)^2}\nonumber\\
    &+ \sum_{a=1}^K \frac{\mathbb{E}_{\bX}\left[(\mu_a(\bX)^2+\sigma_a^2(\bX))\hat\pi_{f(T)}(\bX,a)\right]}{(T-1)^2} - \frac{\mathbb{E}_{\bX}\left[\sum_{a=1}^K\mu_a(\bX)\hat\pi_{f(T)}(\bX,a)\right]^2}{(T-1)^2}\nonumber\\
   &- \frac{\mathbb{E}_{\bX}\left[\sum_{a=1}^K\mu_a(\bX)\hat\pi_{j-1}(\bX,a)\right]^2-\mathbb{E}_{\bX}\left[\sum_{a=1}^K\mu_a(\bX)\hat\pi_{f(T)}(\bX,a)\right]^2}{(T-1)^2}\nonumber.
\end{align}
Since $\hat\pi_{f(T)}\in \sigma(\bH_{f(T)})$, 
\begin{align}
\mathbb{E}\left[\sigma_{T,j}^2\mid\bH_{f(T)}\right] =&
                                                       \mathbb{E}\left[\sum_{a=1}^K
                                                       \frac{\mathbb{E}_{\bX}\left[(\mu_a(\bX)^2+\sigma_a^2(\bX))\left(\hat\pi_{j-1}(\bX,a)-\hat\pi_{f(T)}(\bX,a)\right)\right]}{(T-1)^2}\
                                                       \Bigr| \ \bH_{f(T)}\right]\nonumber\\
&+ \sum_{a=1}^K \frac{\mathbb{E}_{\bX}\left[(\mu_a(\bX)^2+\sigma_a^2(\bX))\hat\pi_{f(T)}(\bX,a)\right]}{(T-1)^2} - \frac{\mathbb{E}_{\bX}\left[\sum_{a=1}^K\mu_a(\bX)\hat\pi_{f(T)}(\bX,a)\right]^2}{(T-1)^2}\nonumber\\
&- \mathbb{E}\left[\frac{\mathbb{E}_{\bX}\left[\sum_{a=1}^K\mu_a(\bX)\hat\pi_{j-1}(\bX,a)\right]^2-\mathbb{E}_{\bX}\left[\sum_{a=1}^K\mu_a(\bX)\hat\pi_{f(T)}(\bX,a)\right]^2}{(T-1)^2}\ \Bigr|  \ \bH_{f(T)}\right]\nonumber.
\end{align}
Therefore,
\begin{align}
  &\sigma_{T,j}^2 - \mathbb{E}\left[\sigma_{T,j}^2\mid \bH_{f(T)}\right]\nonumber\\
=&\sum_{a=1}^K \frac{\mathbb{E}_{\bX}\left[(\mu_a(\bX)^2+\sigma_a^2(\bX))\left(\hat\pi_{j-1}(\bX,a)-\hat\pi_{f(T)}(\bX,a)\right)\right]}{(T-1)^2}\nonumber\\
&- \frac{\mathbb{E}_{\bX}\left[\sum_{a=1}^K\mu_a(\bX)\hat\pi_{j-1}(\bX,a)\right]^2-\mathbb{E}_{\bX}\left[\sum_{a=1}^K\mu_a(\bX)\hat\pi_{f(T)}(\bX,a)\right]^2}{(T-1)^2}\nonumber\\
&-\mathbb{E}\left[\sum_{a=1}^K \frac{\mathbb{E}_{\bX}\left[(\mu_a(\bX)^2+\sigma_a^2(\bX))\left(\hat\pi_{j-1}(\bX,a)-\hat\pi_{f(T)}(\bX,a)\right)\right]}{(T-1)^2}\ \Bigr|\ \bH_{f(T)}\right]\nonumber\\
&+ \mathbb{E}\left[\frac{\mathbb{E}_{\bX}\left[\sum_{a=1}^K\mu_a(\bX)\hat\pi_{j-1}(\bX,a)\right]^2-\mathbb{E}_{\bX}\left[\sum_{a=1}^K\mu_a(\bX)\hat\pi_{f(T)}(\bX,a)\right]^2}{(T-1)^2}\ \Bigr|\ \bH_{f(T)}\right]\nonumber.
\end{align}
Therefore,
\begin{align}
&\left|\sigma_{T,j}^2 - \mathbb{E}\left[\sigma_{T,j}^2\mid \bH_{f(T)}\right]\right|\nonumber\\
\le&\sum_{a=1}^K \frac{\mathbb{E}_{\bX}\left[(\mu_a(\bX)^2+\sigma_a^2(\bX))\left|\hat\pi_{j-1}(\bX,a)-\hat\pi_{f(T)}(\bX,a)\right|\right]}{(T-1)^2} \label{01211}\\
&+\left|\frac{\mathbb{E}_{\bX}\left[\sum_{a=1}^K\mu_a(\bX)\hat\pi_{j-1}(\bX,a)\right]^2-\mathbb{E}_{\bX}\left[\sum_{a=1}^K\mu_a(\bX)\hat\pi_{f(T)}(\bX,a)\right]^2}{(T-1)^2}\right| \label{01212}\\
&+\mathbb{E}\left[\sum_{a=1}^K \frac{\mathbb{E}_{\bX}\left[(\mu_a(\bX)^2+\sigma_a^2(\bX))\left(\hat\pi_{j-1}(\bX,a)-\hat\pi_{f(T)}(\bX,a)\right)\right]}{(T-1)^2}\Bigr| \bH_{f(T)}\right]\label{01213}\\
&+  \mathbb{E}\left[\biggr|\frac{\mathbb{E}_{\bX}\left[\sum_{a=1}^K\mu_a(\bX)\hat\pi_{j-1}(\bX,a)\right]^2-\mathbb{E}_{\bX}\left[\sum_{a=1}^K\mu_a(\bX)\hat\pi_{f(T)}(\bX,a)\right]^2}{(T-1)^2}\biggr|\Bigr| \bH_{f(T)}\right]\label{01214}
\end{align}
We will bound each of the above four terms.  First, the term~\eqref{01211} can be bounded as follows,
\begin{align*}
&\sum_{a=1}^K \frac{\mathbb{E}_{\bX}\left[(\mu_a(\bX)^2+\sigma_a^2(\bX))\left|\hat\pi_{j-1}(\bX,a)-\hat\pi_{f(T)}(\bX,a)\right|\right]}{(T-1)^2} \nonumber\\
\le&\frac{K(\mu_U^2+\sigma_U^2)}{(T-1)^2}\cdot\sup_{1\le a\le K} \mathbb{E}_{\bX}\left[\left|\hat\pi_{j-1}(\bX,a)-\hat\pi_{f(T)}(\bX,a)\right|\right] \nonumber\\
\le&\frac{K(\mu_U^2+\sigma_U^2)}{(T-1)^2}\cdot\sum_{t=f(T)}^{j-1} Q_t.
\end{align*}
Second, we bound the term~\eqref{01212} as follows,
\begin{align*}
&\left|\frac{\mathbb{E}_{\bX}\left[\sum_{a=1}^K\mu_a(\bX)\hat\pi_{j-1}(\bX,a)\right]^2-\mathbb{E}_{\bX}\left[\sum_{a=1}^K\mu_a(\bX)\hat\pi_{f(T)}(\bX,a)\right]^2}{(T-1)^2}\right|\nonumber\\
\le&\frac{1}{(T-1)^2}\cdot\left|\mathbb{E}_{\bX}\left[\sum_{a=1}^K\mu_a(\bX)\cdot\left(\hat\pi_{j-1}(\bX,a) + \hat\pi_{f(T)}(\bX,a)\right)\right]\right|\nonumber\\
&\cdot \left|\mathbb{E}_{\bX}\left[\sum_{a=1}^K\mu_a(\bX)\cdot\left(\hat\pi_{j-1}(\bX,a) - \hat\pi_{f(T)}(\bX,a)\right)\right]\right| \nonumber\\
\le&\frac{2K\mu_U}{(T-1)^2}\cdot K\mu_U \mathbb{E}_{\bX}\left[\left|\hat\pi_{j-1}(\bX,a) - \hat\pi_{f(T)}(\bX,a)\right|\right] \nonumber\\
\le&\frac{2K^2\mu_U^2}{(T-1)^2}\sum_{t=f(T)+1}^{j-1}Q_t.
\end{align*}
The third term~\eqref{01213} can be bounded similarly, 
\begin{align*}
  &\mathbb{E}\left[\sum_{a=1}^K \frac{\mathbb{E}_{\bX}\left[(\mu_a(\bX)^2+\sigma_a^2(\bX))\left(\hat\pi_{j-1}(\bX,a)-\hat\pi_{f(T)}(\bX,a)\right)\right]}{(T-1)^2}\Bigr| \bH_{f(T)}\right]\nonumber\\
  \le&\frac{K(\mu_U^2+\sigma_U^2)}{(T-1)^2}\sup_{1\le a\le K}\mathbb{E}\left[\mathbb{E}_{\bX}\left[\left|\hat\pi_{j-1}(\bX,a)-\hat\pi_{f(T)}(\bX,a)\right|\right] \mid \bH_{f(T)}\right] \nonumber\\
  \le&\frac{K(\mu_U^2+\sigma_U^2)}{(T-1)^2}\sum_{t=f(T)+1}^{j-1}
       \mathbb{E}\left[Q_t\mid \bH_{f(T)}\right].
\end{align*}
Lastly, the final term~\eqref{01214} can be bounded as follows,
\begin{align*}
&\mathbb{E}\left[\biggr|\frac{\mathbb{E}_{\bX}\left[\sum_{a=1}^K\mu_a(\bX)\hat\pi_{j-1}(\bX,a)\right]^2-\mathbb{E}_{\bX}\left[\sum_{a=1}^K\mu_a(\bX)\hat\pi_{f(T)}(\bX,a)\right]^2}{(T-1)^2}\biggr|\Bigr| \bH_{f(T)}\right]\nonumber\\
\le&\frac{2K^2\mu_U^2}{(T-1)^2}\mathbb{E}\left[\mathbb{E}_{\bX}\left[\left|\hat\pi_{j-1}(\bX,a) - \hat\pi_{f(T)}(\bX,a)\right|\right]\mid \bH_{f(T)}\right] \nonumber\\
\le&\frac{2K^2\mu_U^2}{(T-1)^2}\sum_{t=f(T)+1}^{j-1}\mathbb{E}\left[Q_t\mid
     \bH_{f(T)}\right].
\end{align*}
Putting all these results together, we have,
\begin{align*}
  \left|\sigma_{T,j}^2 - \mathbb{E}\left[\sigma_{T,j}^2\mid \bH_{f(T)}\right]\right| 
  \le&\frac{4K^2(\mu_U^2+\sigma_U^2)}{(T-1)^2}
       \left(\sum_{t=f(T)+1}^{j-1}\mathbb{E}\left[Q_t\mid \bH_{f(T)}\right]+\sum_{t=f(T)+1}^{j-1}Q_t\right).
\end{align*}
Therefore,
\begin{align}
    \left|\sum_{j=f(T)+1}^{T}\sigma_{T,j}^2-S_T^2\right| =& \left|\sum_{j=f(T)+1}^{T} \sigma_{T,j}^2 - \mathbb{E}\left[\sigma_{T,j}^2 \mid \bH_{f(T)}\right]\right|\nonumber \\
    \le&\frac{4K^2(\mu_U^2+\sigma_U^2)}{(T-1)^2}\sum_{j=f(T)+1}^T\left(\sum_{t=f(T)+1}^{j-1}\mathbb{E}\left[Q_t\mid
         \bH_{f(T)}\right]+\sum_{t=f(T)+1}^{j-1}Q_t\right).\nonumber
\end{align}
Since
$\mathbb{E}\left[(T-1)^2\left|\sum_{j=f(T)+1}^{T}\sigma_{T,j}^2-S_T^2\right|^2\mid
  \bH_{f(T)}\right]$ is a nonnegative random variable, to show it
converges to zero in probability, it suffices to show
$\mathbb{E}\left[(T-1)^2\left|\sum_{j=f(T)+1}^{T}\sigma_{T,j}^2-S_T^2\right|^2\right]$
converges to zero in probability.
\begin{align*}
  &\mathbb{E}\left[(T-1)^2\left|\sum_{j=f(T)+1}^{T}\sigma_{T,j}^2-S_T^2\right|^2\right] \nonumber\\
  \le&\frac{16K^4(\mu_U^2+\sigma_U^2)^2}{(T-1)^2}\mathbb{E}\left[\left\{\sum_{j=f(T)+1}^T\left(\sum_{t=f(T)+1}^{j-1}\mathbb{E}\left[Q_t\mid
       \bH_{f(T)}\right]+\sum_{t=f(T)+1}^{j-1}Q_t\right)\right\}^2\right]\nonumber\\
  \le&\frac{64K^4(\mu_U^2+\sigma_U^2)^2}{(T-1)^2}\mathbb{E}\left[\left(\sum_{j=f(T)+1}^T\sum_{t=f(T)+1}^{j-1}Q_t\right)^2\right]\nonumber\\
  \le&\frac{64K^4(\mu_U^2+\sigma_U^2)^2}{(T-1)^2}\cdot\sup_{t}\mathbb{E}\left[t^{1+\delta}Q_t\right]\cdot\mathbb{E}\left[\left(\sum_{j=f(T)+1}^T\sum_{t=f(T)+1}^{j-1}t^{-1-\delta}\right)^2\right]
  \\
 = &  O(f(T)^{-2\delta}).
\end{align*}

\end{proof}

\subsection{Lemma~\ref{lemma:xi_consistency}}

\begin{proof}
  \label{lemma:xi_consistency_proof}
  From the definitions of $v_T^2$ in Theorem~\ref{thm:Asymptotic_Normality},
  $$
  v_T^2 := (T-1)\sum_{j=2}^T \V(\widehat\Gamma_j(T)\mid \bH_{j-1}).
  $$
  Using the definition of $\widehat\Gamma_j(T)$ in~\eqref{eq:Gamma_j_definition2} and the definition of  $\tilde\pi_{T,j}$ in~\eqref{def:tilde} 
  \begin{align*}
    v_T^2 &:= (T-1)\sum_{j=2}^T \V(\widehat\Gamma_j(T)\mid \bH_{j-1})\nonumber\\
          &\ = (T-1)\sum_{j=2}^T \V\left(\sum_{a=1}^K \frac{ \mathbb{I}(A_j=a)R_j}{\hat\pi_{j-1}(\bX_j,a)}\cdot\left(\frac{\hat\pi_1(\bX_j,a)}{T-1}+\sum_{t=2}^{j-1}\frac{\hat\pi_t(\bX_j,a)-\hat\pi_{t-1}(\bX_j,a)}{T-t}\right)\ \Bigr|\ \bH_{j-1}\right)\nonumber\\
          &\ = (T-1)\sum_{j=2}^T \V\left(\sum_{a=1}^K \frac{ \mathbb{I}(A_j=a)R_j}{\hat\pi_{j-1}(\bX_j,a)}\cdot\tilde\pi_{T,j}(\bX_j,a)\ \Bigr|\ \bH_{j-1}\right)\nonumber\\
          &\ = (T-1)\sum_{j=2}^T \mathbb{E}\left[\sum_{a=1}^K
            \frac{\mu_a(\bX_j)^2+\sigma_a^2(\bX_j)}{\hat\pi_{j-1}(\bX_j,a)}
            \cdot \tilde\pi_{T,j}(\bX_j,a) \ \Bigr| \ \bH_{j-1}\right] \nonumber\\
          &\ \ \ \ \ \ - (T-1)\sum_{j=2}^T
            \mathbb{E}\left[\sum_{a=1}^K\mu_a(\bX_j)\tilde\pi_{T,j}(\bX_j,a)\
            \Bigr | \ \bH_{j-1}\right]^2.
            \nonumber
\end{align*} 
From the definition of $S_T^2$ and $\sigma_{T,j}^2$ in~\eqref{eq:ST} and~\eqref{eq:SigmaTj},
\begin{align*} 
  & (T-1)S_T^2\\
  := & (T-1)\sum_{j=f(T)+1}^{T}\mathbb{E}\left[\sigma_{T,j}^2\mid
    \bH_{f(T)}\right] \nonumber\\
    = & (T-1)\sum_{j=f(T)+1}^T \mathbb{E}\left[\mathbb{E}\left[\left(\sum_{a=1}^{K}\left(\frac{\mathbb{I}(A_j=a)R_j}{T-1}-\mathbb{E}\left[\frac{\mathbb{I}(A_j=a)R_j}{T-1}\
    \Bigr|  \ \bH_{j-1}\right]\right)\right)^2\Bigr| \bH_{j-1}\right]
        \ \biggr| \ \bH_{f(T)}\right]\nonumber\\
    = & (T-1)\sum_{j=f(T)+1}^T
        \mathbb{E}\left[\sum_{a=1}^K\frac{\mathbb{E}_{\bX}\left[(\mu_a(\bX)^2+\sigma_a^2(\bX))\hat\pi_{j-1}(\bX,a)\right]}{(T-1)^2}-\frac{\mathbb{E}_{\bX}\left[\sum_{a=1}^K\mu_a(\bX)\hat\pi_{j-1}(\bX,a)\right]^2}{(T-1)^2}\
        \biggr| \
      \bH_{f(T)}\right]\nonumber\\
  = & \sum_{j=f(T)+1}^T \sum_{a=1}^K \frac{\mathbb{E}\left[(\mu_a(\bX_j)^2+\sigma_a^2(\bX_j))\hat\pi_{j-1}(\bX_j,a)\mid \bH_{f(T)}\right]}{T-1}\nonumber\\
& \ \ \ \ -  \sum_{j=f(T)+1}^T\frac{\mathbb{E}\left[\mathbb{E}\left[\sum_{a=1}^K\mu_a(\bX_j)\hat\pi_{j-1}(\bX_j,a)\mid \bH_{j-1}\right]^2\  \Bigr|  \  \bH_{f(T)}\right]}{T-1}.
\end{align*}

To show $v_T^2 - (T-1)S_T^2$ converges to zero in probability, we will
show later the following two convergence results,
\begin{align}
  &\left|(T-1)\sum_{j=2}^T \mathbb{E}\left[\sum_{a=1}^K
    \frac{\mu_a(\bX_j)^2+\sigma_a^2(\bX_j)}{\hat\pi_{j-1}(\bX_j,a)} \cdot
    \tilde\pi_{T,j}(\bX_j,a) \ \Bigr| \ \bH_{j-1}\right]\right.\nonumber\\
   &\ \ \ \left.- \sum_{j=f(T)+1}^T \sum_{a=1}^K \frac{\mathbb{E}\left[(\mu_a(\bX_j)^2+\sigma_a^2(\bX_j))\hat\pi_{j-1}(X,a)\mid \bH_{f(T)}\right]}{T-1}\right| \xrightarrow{P} 0 \label{01215}
\end{align}
\begin{align}
  &\left|(T-1)\sum_{j=2}^T
    \mathbb{E}\left[\sum_{a=1}^K\mu_a(\bX_j)\tilde\pi_{T,j}(\bX_j,a)\
    \Bigr | \ \bH_{j-1}\right]^2\right. \nonumber\\ 
    &\ \ \ \ \left.- \sum_{j=f(T)+1}^T\frac{\mathbb{E}\left[\mathbb{E}\left[\sum_{a=1}^K\mu_a(\bX_j)\hat\pi_{j-1}(\bX_j,a)\mid \bH_{j-1}\right]^2\Bigr| \bH_{f(T)}\right]}{T-1}\right| \xrightarrow{P} 0 \label{012152}
\end{align}

Once we prove Equations~\eqref{01215} and~\eqref{012152}, then, for
any $\epsilon>0$, we have,
\begin{equation*}
  \lim_{T\rightarrow\infty}\mathbb{P}\left(\Bigr|v_T^2 - (T-1)S_T^2 \Bigr|\ge \epsilon\right) = 0,
\end{equation*}
which completes the proof of the first part of
Lemma~\ref{lemma:xi_consistency}. To show the second part of Lemma~\ref{lemma:xi_consistency}, i.e.
$\frac{\sqrt{T-1}S_T}{v_T} \xrightarrow{p} 1$ as $T\rightarrow\infty$,
by Condition~\ref{cond:vt_lower_bound}, there exists a constant
$c_1>0$ such that
$\lim_{T\rightarrow\infty}\mathbb{P}\left(v_T\ge c_1\right) = 1$,
which implies
$\lim_{T\rightarrow\infty}\mathbb{P}\left(v_T^2<c_1^2\right) = 0$.
For all $\epsilon>0$,
\begin{align}
  \mathbb{P}\left(\left|\frac{{(T-1)}S_T^2}{v_T^2}-1\right|>\epsilon\right) &\le \mathbb{P}\left(v_T^2<c_1^2\right) + \mathbb{P}\left(\Bigr|\frac{{(T-1)}S_T^2}{v_T^2}-1\Bigr|>\epsilon, v_T^2\ge c_1^2\right)\nonumber\\
                                                                           &\le \mathbb{P}\left(v_T^2<c_1^2\right) + \mathbb{P}\left(|(T-1)S_T^2-v_T^2|>c_1^2\epsilon\right). \label{eq:2405182}
\end{align}
Since both terms in Equation~\eqref{eq:2405182} converge to zero as
$t\rightarrow\infty$, we have $\forall\epsilon>0$,
$$
\lim_{T\rightarrow\infty}\mathbb{P}\left(\left|\frac{{(T-1)}S_T^2}{v_T^2}-1\right|>\epsilon\right) = 0 \nonumber.
$$
As both $S_T$ and $\xi(T)$ are non-negative, the above equation implies,
$$
\frac{\sqrt{T-1}S_T}{v_T} \xrightarrow{p} 1 \nonumber.
$$
Now, we prove Equations~\eqref{01215} and~\eqref{012152}.

\paragraph{Proof of Equation~\eqref{01215}.}
We first write $\tilde\pi_{T,j}(\bX_j,a)$ as $\frac{\hat\pi_{j-1}(\bX_j,a)}{T-1}+(\tilde\pi_{T,j}(\bX_j,a)-\frac{\hat\pi_{j-1}(\bX_j,a)}{T-1})$,
\begin{align}
  &(T-1)\sum_{j=2}^T \mathbb{E}\left[\sum_{a=1}^K
    \frac{\mu_a(\bX_j)^2+\sigma_a^2(\bX_j)}{\hat\pi_{j-1}(\bX_j,a)}
    \cdot \tilde\pi_{T,j}(\bX_j,a) \ \Bigr| \ \bH_{j-1}\right] \nonumber\\
  =&\frac{1}{T-1}\sum_{j=2}^T
     \mathbb{E}\left[\sum_{a=1}^K\hat\pi_{j-1}(\bX_j,a)\cdot
     (\mu_a(\bX_j)^2+\sigma_a^2(\bX_j)) \mid \bH_{j-1}\right]\nonumber\\
  &+2\sum_{j=2}^T
    \mathbb{E}\left[\sum_{a=1}^K\left(\tilde\pi_{T,j}(\bX_j,a)-\frac{\hat\pi_{j-1}(\bX_j,a)}{T-1}\right)\cdot
    (\mu_a(\bX_j)^2+\sigma_a^2(\bX_j))\ \Bigr| \ \bH_{j-1}\right]\nonumber\\
  &+(T-1)\sum_{j=2}^T
    \mathbb{E}\left[\sum_{a=1}^K\frac{\mu_a(\bX_j)^2+\sigma_a^2(\bX_j)}{\pi_{j-1}(\bX_j,a)}\cdot\left(\tilde\pi_{T,j}(\bX_j,a)-\frac{\hat\pi_{j-1}(\bX_j,a)}{T-1}\right)^2\
    \Bigr| \ \bH_{j-1}\right]\nonumber
\end{align}
Therefore, ~\eqref{01215} can be written as
\begin{align}
  &\left|(T-1)\sum_{j=2}^T \mathbb{E}\left[\sum_{a=1}^K
    \frac{\mu_a(\bX_j)^2+\sigma_a^2(\bX_j)}{\hat\pi_{j-1}(\bX_j,a)}
    \cdot \tilde\pi_{T,j}(\bX_j,a) \ \Bigr|\ \bH_{j-1}\right]\right.\nonumber\\
   &\ \ \ \left.- \sum_{j=f(T)+1}^T \sum_{a=1}^K
     \frac{\mathbb{E}\left[(\mu_a(\bX_j)^2+\sigma_a^2(\bX_j))\hat\pi_{j-1}(\bX_j,a)\
     \mid \bH_{f(T)}\right]}{T-1}\right| \nonumber\\
  \le&2\sum_{j=2}^T
  \mathbb{E}\left[\sum_{a=1}^K\left|\tilde\pi_{T,j}(\bX_j,a)-\frac{\hat\pi_{j-1}(\bX_j,a)}{T-1}\right|\cdot
  (\mu_a(\bX_j)^2+\sigma_a^2(\bX_j))\ \Bigr | \ \bH_{j-1}\right]\nonumber\\
&+(T-1)\sum_{j=2}^T
  \mathbb{E}\left[\sum_{a=1}^K\frac{\mu_a(\bX_j)^2+\sigma_a^2(\bX_j)}{\pi_{j-1}(\bX_j,a)}\cdot\left(\tilde\pi_{T,j}(\bX_j,a)-\frac{\hat\pi_{j-1}(\bX_j,a)}{T-1}\right)^2\
  \Bigr|\ \bH_{j-1}\right]\nonumber\\
  &+\left|\frac{1}{T-1}\sum_{j=2}^T
  \mathbb{E}\left[\sum_{a=1}^K\hat\pi_{j-1}(\bX_j,a)\cdot
  (\mu_a(\bX_j)^2+\sigma_a^2(\bX_j)) \mid \bH_{j-1}\right]\right. \nonumber\\
  &\ \ \ -\left. \sum_{j=f(T)+1}^T \sum_{a=1}^K
  \frac{\mathbb{E}\left[(\mu_a(\bX_j)^2+\sigma_a^2(\bX_j))\hat\pi_{j-1}(\bX_j,a)\
  \mid \bH_{f(T)}\right]}{T-1}\right|\nonumber\\
   \le&2\sum_{j=2}^T
     \mathbb{E}\left[\sum_{a=1}^K\left|\tilde\pi_{T,j}(\bX_j,a)-\frac{\hat\pi_{j-1}(\bX_j,a)}{T-1}\right|\cdot
     (\mu_a(\bX_j)^2+\sigma_a^2(\bX_j))\ \Bigr | \ \bH_{j-1}\right]\label{01217}\\
   &+(T-1)\sum_{j=2}^T
     \mathbb{E}\left[\sum_{a=1}^K\frac{\mu_a(\bX_j)^2+\sigma_a^2(\bX_j)}{\pi_{j-1}(\bX_j,a)}\cdot\left(\tilde\pi_{T,j}(\bX_j,a)-\frac{\hat\pi_{j-1}(\bX_j,a)}{T-1}\right)^2\
     \Bigr|\ \bH_{j-1}\right] \label{01218}\\
    &+\left|\sum_{j=2}^{f(T)}\sum_{a=1}^K \frac{\mathbb{E}\left[(\mu_a(\bX_j)^2+\sigma_a^2(\bX_j))\hat\pi_{j-1}(\bX_j,a)\mid \bH_{j-1}\right]}{T-1}\right| \label{01216}\\
   &+\left|\sum_{j=f(T)+1}^{T}\sum_{a=1}^K
     \frac{\mathbb{E}\left[(\mu_a(\bX_j)^2+\sigma_a^2(\bX_j))\hat\pi_{j-1}(X,a)\mid
     \bH_{j-1}\right]-\mathbb{E}\left[(\mu_a(\bX_j)^2+\sigma_a^2(\bX_j))\hat\pi_{j-1}(\bX_j,a)\
     \mid \bH_{f(T)}\right]}{T-1}\right|. \label{01219}
\end{align}
We bound the above four terms in turn.  
For the term in Equation~\eqref{01217}, we use
Assumption~\ref{ass:bounded} to obtain,
\begin{align}
  &2\sum_{j=2}^T
    \mathbb{E}\left[\sum_{a=1}^K\left|\tilde\pi_{T,j}(\bX_j,a)-\frac{\hat\pi_{j-1}(\bX_j,a)}{T-1}\right|\cdot
    (\mu_a(\bX_j)^2+\sigma_a^2(\bX_j))\ \Bigr| \ \bH_{j-1}\right] \nonumber\\
  \le&2(\mu_U^2+\sigma_U^2)\sum_{j=2}^T\mathbb{E}\left[\sum_{a=1}^K\left|\tilde\pi_{T,j}(\bX_j,a)-\frac{\hat\pi_{j-1}(\bX_j,a)}{T-1}\right|\
       \Bigr| \ \bH_{j-1}\right]\nonumber\\
  =&2(\mu_U^2+\sigma_U^2)\sum_{j=2}^T\sum_{a=1}^K\mathbb{E}_{\bX}\left[\left|\tilde\pi_{T,j}(\bX,a)-\frac{\hat\pi_{j-1}(\bX,a)}{T-1}\right|\right].\nonumber
\end{align}
Notice Lemma~\ref{lemma:L1_approx} shows that
$2(\mu_U^2+\sigma_U^2)\sum_{j=2}^T\sum_{a=1}^K\mathbb{E}\left[\left|\tilde\pi_{T,j}(\bX_j,a)-\frac{\hat\pi_{j-1}(\bX_j,a)}{T-1}\right|\right]\rightarrow
0$. Since $L_1$ convergence implies convergence in probability, the
term in Equation~\eqref{01217} converges to zero in probability.

For the term in Equation~\eqref{01218}, we apply Assumptions~\ref{ass:clip_rate}~and~\ref{ass:bounded},
\begin{align}
  &(T-1)\sum_{j=2}^T
    \mathbb{E}\left[\sum_{a=1}^K\frac{\mu_a(\bX_j)^2+\sigma_a^2(\bX_j)}{\hat\pi_{j-1}(\bX_j,a)}\cdot\left(\tilde\pi_{T,j}(\bX_j,a)-\frac{\hat\pi_{j-1}(\bX_j,a)}{T-1}\right)^2\
    \Bigr|
    \ \bH_{j-1}\right] \nonumber\\
  \le&(T-1)^{1+\eta}(\mu_U^2+\sigma_U^2)\sum_{j=2}^T
       \mathbb{E}\left[\sum_{a=1}^K\left(\tilde\pi_{T,j}(\bX_j,a)-\frac{\hat\pi_{j-1}(\bX_j,a)}{T-1}\right)^2\
       \Bigr|\ \bH_{j-1}\right]\nonumber\\
  =&(T-1)^{1+\eta}(\mu_U^2+\sigma_U^2)\sum_{j=2}^T\sum_{a=1}^K
  \mathbb{E}_{\bX}\left[\left(\tilde\pi_{T,j}(\bX,a)-\frac{\hat\pi_{j-1}(\bX,a)}{T-1}\right)^2 \right].
\end{align}
By Lemma~\ref{lemma:L2_approx},
\begin{align}
  \lim_{T\rightarrow\infty}(T-1)^{1+\eta}\sum_{j=2}^T\sum_{a=1}^K \mathbb{E}\left[\left(\tilde\pi_{T,j}(\bX_j,a)-\frac{\hat\pi_{j-1}(\bX_j,a)}{T-1}\right)^2\right] = 0.
\end{align}
Since the convergence in $L_1$ implies convergence in probability, the
term in Equation~\eqref{01218} converges to zero in probability.

The term in
Equation~\eqref{01216} can be bounded as follows,
\begin{align*}
&\left|\sum_{j=2}^{f(T)}\sum_{a=1}^K \frac{\mathbb{E}\left[(\mu_a(\bX_j)^2+\sigma_a^2(\bX_j))\hat\pi_{j-1}(\bX_j,a)\mid \bH_{j-1}\right]}{T-1}\right|
\le \frac{Kf(T)}{T-1}(\sigma_U^2+\mu_U^2) = o_p(1).
\end{align*}

Finally, for the term in Equation~\eqref{01219}, we have
\begin{align*}
&\left|\sum_{j=f(T)+1}^{T}\sum_{a=1}^K \frac{\mathbb{E}\left[(\mu_a(\bX_j)^2+\sigma_a^2(\bX_j))\hat\pi_{j-1}(\bX_j,a)\mid \bH_{j-1}\right]-\mathbb{E}\left[(\mu_a(\bX_j)^2+\sigma_a^2(\bX_j))\hat\pi_{j-1}(\bX_j,a)\mid \bH_{f(T)}\right]}{T-1}\right| \nonumber\\
\le&\left|\sum_{j=f(T)+1}^{T}\sum_{a=1}^K \frac{\mathbb{E}\left[(\mu_a(\bX_j)^2+\sigma_a^2(\bX_j))(\hat\pi_{j-1}(\bX_j,a)-\hat\pi_{f(T)}(\bX_j,a))\mid \bH_{j-1}\right]}{T-1}\right| \nonumber\\
&+\left|\sum_{j=f(T)+1}^{T}\sum_{a=1}^K \frac{\mathbb{E}\left[(\mu_a(\bX_j)^2+\sigma_a^2(\bX_j))(\hat\pi_{j-1}(\bX_j,a)-\hat\pi_{f(T)}(\bX_j,a))\mid \bH_{f(T)}\right]}{T-1}\right| \nonumber\\
\le& \frac{K(\mu_U^2+\sigma_U^2)}{T-1}\sum_{j=f(T)+1}^T \sup_{a}\left(\mathbb{E}\left[\left|\hat\pi_{j-1}(\bX_j,a)-\hat\pi_{f(T)}(\bX_j,a)\right| \mid \bH_{j-1}\right]+\mathbb{E}\left[\left|\hat\pi_{j-1}(\bX_j,a)-\hat\pi_{f(T)}(\bX_j,a)\right| \mid\bH_{f(T)}\right]\right) \nonumber\\
=&\frac{K(\mu_U^2+\sigma_U^2)}{T-1}\sum_{j=f(T)+1}^T \sup_{a}\left(\mathbb{E}_{\bX}\left[\left|\hat\pi_{j-1}(\bX,a)-\hat\pi_{f(T)}(\bX,a)\right| \right]+\mathbb{E}\left[\mathbb{E}_{\bX}\left[\left|\hat\pi_{j-1}(\bX,a)-\hat\pi_{f(T)}(\bX,a)\right|\right]\mid\bH_{f(T)}\right]\right) \nonumber\\
\le&\frac{K(\sigma_U^2+\mu_U^2)}{T-1}\sum_{j=f(T)+1}^T\sum_{t=f(T)+1}^{j-1}\left(Q_t + \mathbb{E}\left[Q_t\mid \bH_{f(T)}\right]\right) \nonumber
\end{align*}

Since
\begin{align}
  &~\mathbb{E}\left[\left|\frac{K(\sigma_U^2+\mu_U^2)}{T-1}\sum_{j=f(T)+1}^T\sum_{t=f(T)+1}^{j-1}\left(Q_t + \mathbb{E}\left[Q_t\mid \bH_{f(T)}\right]\right)\right|\right] \nonumber\\
  \le&~\frac{2K(\sigma_U^2+\mu_U^2)}{T-1}\cdot\sup_{t\ge
  1}\mathbb{E}\left[(Q_tt^{1+\delta})\right]\sum_{j=f(T)+1}^T\sum_{t=f(T)+1}^{j-1}\frac{1}{t^{1+\delta}} \nonumber
\\
=&~ O_p(f(T)^{-\delta}) \nonumber
\end{align}
and $L_1$ convergence implies convergence in probability, the term in Equation~\eqref{01219} converges to zero in probability.

\paragraph{Proof of Equation~\eqref{012152}.}
\begin{align}
  &\left|(T-1)\sum_{j=2}^T
  \mathbb{E}\left[\sum_{a=1}^K\mu_a(\bX_j)\tilde\pi_{T,j}(\bX_j,a)\
  \Bigr | \ \bH_{j-1}\right]^2\right. \nonumber\\ 
  &\ \ \ \ \left.- \sum_{j=f(T)+1}^T\frac{\mathbb{E}\left[\mathbb{E}\left[\sum_{a=1}^K\mu_a(\bX_j)\hat\pi_{j-1}(\bX_j,a)\mid \bH_{j-1}\right]^2\Bigr| \bH_{f(T)}\right]}{T-1}\right| \nonumber\\
  \le&(T-1)\cdot\left|\sum_{j=2}^{f(T)}
  \mathbb{E}\left[\sum_{a=1}^K\mu_a(\bX_j)\tilde\pi_{T,j}(\bX_j,a)\mid
  \bH_{j-1}\right]^2\right| + (T-1)\sum_{j=f(T)+1}^T \nonumber\\
  \ \ \ \ \ &\left|\mathbb{E}\left[\sum_{a=1}^K\mu_a(\bX_j)\tilde\pi_{T,j}(\bX_j,a)\
  \Bigr | \ \bH_{j-1}\right]^2 - \mathbb{E}\left[\mathbb{E}\left[\sum_{a=1}^K\mu_a(\bX_j)\frac{\hat\pi_{j-1}(\bX_j,a)}{T-1} \mid \bH_{j-1}\right]^2\Bigr| \bH_{f(T)}\right]\right| \nonumber\\
  \le&(T-1)\cdot\left|\sum_{j=2}^{f(T)}
  \mathbb{E}\left[\sum_{a=1}^K\mu_a(\bX_j)\tilde\pi_{T,j}(\bX_j,a)\mid
  \bH_{j-1}\right]^2\right| \nonumber\\
  &+ (T-1)\sum_{j=f(T)+1}^T \left|\mathbb{E}\left[\sum_{a=1}^K\mu_a(\bX_j)\tilde\pi_{T,j}(\bX_j,a)\
  \Bigr | \ \bH_{j-1}\right]^2 - \mathbb{E}\left[\sum_{a=1}^K\mu_a(\bX_j)\frac{\hat\pi_{j-1}(\bX_j,a)}{T-1}\
  \Bigr | \ \bH_{j-1}\right]^2 \right| \nonumber\\
  &+ (T-1)\sum_{j=f(T)+1}^T\left|\mathbb{E}\left[\sum_{a=1}^K\mu_a(\bX_j)\frac{\hat\pi_{j-1}(\bX_j,a)}{T-1}\ \Bigr | \ \bH_{j-1}\right]^2 - \mathbb{E}\left[\mathbb{E}\left[\sum_{a=1}^K\mu_a(\bX_j)\frac{\hat\pi_{j-1}(\bX_j,a)}{T-1} \mid \bH_{j-1}\right]^2\Bigr| \bH_{f(T)}\right]\right| \nonumber
\end{align}
Firstly, By lemma~\ref{lemma:centering}, $|\tilde\pi_{T,j}(\bX_j,a)|\le \frac{2}{T-j+1}$. Therefore,
\begin{align}
&(T-1)\cdot\left|\sum_{j=2}^{f(T)}
\mathbb{E}\left[\sum_{a=1}^K\mu_a(\bX_j)\tilde\pi_{T,j}(\bX_j,a)\mid
\bH_{j-1}\right]^2\right|\nonumber\\
\le&(T-1)\sum_{j=2}^{f(T)} \frac{4 \sup_{\bx,a} \mu_a(\bx)^2 }{(T-j+1)^2}\nonumber\\
\le&\frac{4(T-1)f(T)K^2\mu_U^2}{(T-f(T)+1)^2} \rightarrow 0\ \  \text{as}\ \ T\rightarrow\infty\nonumber
\end{align}
Secondly,
\begin{align}
&(T-1)\sum_{j=f(T)+1}^T \left|\mathbb{E}\left[\sum_{a=1}^K\mu_a(\bX_j)\tilde\pi_{T,j}(\bX_j,a)\
\Bigr | \ \bH_{j-1}\right]^2 - \mathbb{E}\left[\sum_{a=1}^K\mu_a(\bX_j)\frac{\hat\pi_{j-1}(\bX_j,a)}{T-1}\
\Bigr | \ \bH_{j-1}\right]^2 \right|\nonumber\\
\le&(T-1)\cdot\sum_{j=f(T)+1}^T \left|
    \mathbb{E}\left[\sum_{a=1}^K\mu_a(\bX_j)\tilde\pi_{T,j}(\bX_j,a)
    \mid \bH_{j-1}\right]-
    \mathbb{E}\left[\sum_{a=1}^K\mu_a(\bX_j)\frac{\hat\pi_{j-1}(\bX_j,a)}{T-1}
    \ \Bigr|  \ \bH_{j-1}\right]\right| \nonumber\\
  &\cdot \left|
    \mathbb{E}\left[\sum_{a=1}^K\mu_a(\bX_j)\tilde\pi_{T,j}(\bX_j,a)\mid
    \bH_{j-1}\right]+
    \mathbb{E}\left[\sum_{a=1}^K\mu_a(\bX_j)\frac{\hat\pi_{j-1}(\bX_j,a)}{T-1}\
    \Bigr| \ \bH_{j-1}\right]\right| \nonumber\\
    \le&(T-1)\cdot\sum_{j=f(T)+1}^T \left|
    \mathbb{E}\left[\sum_{a=1}^K\mu_a(\bX_j)\tilde\pi_{T,j}(\bX_j,a)
    \mid \bH_{j-1}\right]-
    \mathbb{E}\left[\sum_{a=1}^K\mu_a(\bX_j)\frac{\hat\pi_{j-1}(\bX_j,a)}{T-1}
    \ \Bigr|  \ \bH_{j-1}\right]\right| \nonumber\\
    &\cdot \left(\frac{2K\mu_L}{T-j+1} + \frac{2K\mu_L}{T-1}\right)\nonumber\\
    \le& (T-1)\cdot\sum_{j=f(T)+1}^T \frac{4K\mu_L}{T-j+1}\cdot\left|
      \mathbb{E}\left[\sum_{a=1}^K\mu_a(\bX_j)\left(\tilde\pi_{T,j}(\bX_j,a)-\frac{\hat\pi_{j-1}(\bX_j,a)}{T-1}\right)
      \mid \bH_{j-1}\right]\right| \nonumber\\
    =& (T-1)\cdot\sum_{j=f(T)+1}^T \frac{4K\mu_L}{T-j+1}\cdot\left|
      \mathbb{E}_{\bX}\left[\sum_{a=1}^K\mu_a(\bX)\left(\tilde\pi_{T,j}(\bX,a)-\frac{\hat\pi_{j-1}(\bX,a)}{T-1}\right)
      \right]\right| \nonumber\\
    \le& (T-1)\cdot\sum_{j=f(T)+1}^T \frac{4K^2\mu_L^2}{T-j+1}\cdot
    \sup_{a}\mathbb{E}_{\bX}\left[\left|\tilde\pi_{T,j}(\bX,a)-\frac{\hat\pi_{j-1}(\bX,a)}{T-1}\right|\right] \nonumber\\
    \le& (T-1)\cdot\sum_{j=f(T)+1}^T \frac{4K^2\mu_L^2}{T-j+1}\cdot\sum_{t=1}^{j-2}\frac{\mathbb{E}_{\bX}\left[|\hat\pi_t(\bX,a)-\hat\pi_{j-1}(\bX,a)|\right]}{(T-t)(T-t-1)} \nonumber
\end{align}
Here the last inequality uses Lemma~\ref{lemma:centering}.
Since $L_1$ convergence implies convergence in probabilities, it suffices to show that 
$$
\sum_{j=f(T)+1}^T\sum_{t=1}^{j-2} \frac{4K^2\mu_L^2(T-1)\mathbb{E}\left[\mathbb{E}_{\bX}\left[|\hat\pi_t(\bX,a)-\hat\pi_{j-1}(\bX,a)|\right]\right]}{(T-j+1)(T-t)(T-t-1)} \rightarrow 0\ \text{as}\ T\rightarrow\infty
$$
This is already proved in equation~\ref{011902} of Lemma~\ref{lemma:L2_approx}.

Thirdly, 
\begin{align}
&(T-1)\sum_{j=f(T)+1}^T\left|\mathbb{E}\left[\sum_{a=1}^K\mu_a(\bX_j)\frac{\hat\pi_{j-1}(\bX_j,a)}{T-1}\ \Bigr | \ \bH_{j-1}\right]^2 - \mathbb{E}\left[\mathbb{E}\left[\sum_{a=1}^K\mu_a(\bX_j)\frac{\hat\pi_{j-1}(\bX_j,a)}{T-1} \mid \bH_{j-1}\right]^2\Bigr| \bH_{f(T)}\right]\right| \nonumber\\
=&(T-1)\sum_{j=f(T)+1}^T\left|\mathbb{E}_{\bX}\left[\sum_{a=1}^K\mu_a(\bX)\frac{\hat\pi_{j-1}(\bX,a)}{T-1}\right]^2 - \mathbb{E}\left[\mathbb{E}_{\bX}\left[\sum_{a=1}^K\mu_a(\bX)\frac{\hat\pi_{j-1}(\bX,a)}{T-1} \right]^2\Bigr| \bH_{f(T)}\right]\right| \nonumber\\
\le&(T-1)\sum_{j=f(T)+1}^T\left|\mathbb{E}_{\bX}\left[\sum_{a=1}^K\mu_a(\bX)\frac{\hat\pi_{j-1}(\bX,a)}{T-1}\right]^2 - \mathbb{E}_{\bX}\left[\sum_{a=1}^K\mu_a(\bX)\frac{\hat\pi_{f(T)}(\bX,a)}{T-1}\right]^2\right| \nonumber\\
&+(T-1)\sum_{j=f(T)+1}^T\left|\mathbb{E}_{\bX}\left[\sum_{a=1}^K\mu_a(\bX)\frac{\hat\pi_{f(T)}(\bX,a)}{T-1}\right]^2-\mathbb{E}\left[\mathbb{E}_{\bX}\left[\sum_{a=1}^K\mu_a(\bX)\frac{\hat\pi_{j-1}(\bX,a)}{T-1} \right]^2\Bigr| \bH_{f(T)}\right]\right|\nonumber\\
\le&(T-1)\sum_{j=f(T)+1}^T\left|\mathbb{E}_{\bX}\left[\sum_{a=1}^K\mu_a(\bX)\frac{\hat\pi_{j-1}(\bX,a)}{T-1}\right] - \mathbb{E}_{\bX}\left[\sum_{a=1}^K\mu_a(\bX)\frac{\hat\pi_{f(T)}(\bX,a)}{T-1}\right]\right| \nonumber\\
&\ \ \cdot \left|\mathbb{E}_{\bX}\left[\sum_{a=1}^K\mu_a(\bX)\frac{\hat\pi_{j-1}(\bX,a)}{T-1}\right] + \mathbb{E}_{\bX}\left[\sum_{a=1}^K\mu_a(\bX)\frac{\hat\pi_{f(T)}(\bX,a)}{T-1}\right]\right|\nonumber\\
&+(T-1)\sum_{j=f(T)+1}^T\left|\mathbb{E}\left[\mathbb{E}_{\bX}\left[\sum_{a=1}^K\mu_a(\bX)\frac{\hat\pi_{f(T)}(\bX,a)}{T-1}\right]^2-\mathbb{E}_{\bX}\left[\sum_{a=1}^K\mu_a(\bX)\frac{\hat\pi_{j-1}(\bX,a)}{T-1} \right]^2\Bigr| \bH_{f(T)}\right]\right|\nonumber\\
\le&2K\mu_U\sum_{j=f(T)+1}^T\left|\mathbb{E}_{\bX}\left[\sum_{a=1}^K\mu_a(\bX)\frac{\hat\pi_{j-1}(\bX,a)}{T-1}\right] - \mathbb{E}_{\bX}\left[\sum_{a=1}^K\mu_a(\bX)\frac{\hat\pi_{f(T)}(\bX,a)}{T-1}\right]\right| \nonumber\\
&+2K\mu_U\sum_{j=f(T)+1}^T\mathbb{E}\left[\left|\mathbb{E}_{\bX}\left[\sum_{a=1}^K\mu_a(\bX)\frac{\hat\pi_{f(T)}(\bX,a)}{T-1}\right]-\mathbb{E}_{\bX}\left[\sum_{a=1}^K\mu_a(\bX)\frac{\hat\pi_{j-1}(\bX,a)}{T-1} \right]\right|\Bigr| \bH_{f(T)}\right]\nonumber
\end{align} 
Since convergence in $L_1$ implies convergence in probability, it suffices to show that
$$
\frac{1}{T-1}\sum_{j=f(T)+1}^T \mathbb{E}\left[\sum_{a=1}^K|\mu_a(\bX,a)|\cdot |\hat\pi_{j-1}(\bX,a) - \hat\pi_{f(T)}(\bX,a)| \right] \rightarrow 0\ \text{as}\ T\rightarrow\infty
$$
Since
\begin{align}
&\frac{1}{T-1}\sum_{j=f(T)+1}^T \mathbb{E}\left[\sum_{a=1}^K|\mu_a(\bX,a)|\cdot |\hat\pi_{j-1}(\bX,a) - \hat\pi_{f(T)}(\bX,a)| \right]\nonumber\\
\le&\frac{K\mu_U}{T-1}\sum_{j=f(T)+1}^T\sum_{t=f(T)+1}^{j-1} \mathbb{E}\left[Q_t\right]\nonumber\\
\le&\frac{K\mu_U\sup\mathbb{E}\left[{t^{1+\delta }Q_t}\right]}{T-1}\sum_{j=f(T)+1}^T\sum_{t=f(T)+1}^{j-1} \frac{1}{t^{1+\delta}}\rightarrow 0\ \text{as}\ T\rightarrow\infty\nonumber
\end{align}
This finishes the proof of Equation~\eqref{012152} converges to zero in probability.

\end{proof}

\subsection{Lemma \ref{lemma:math_order}}

\begin{lemma}
    \label{lemma:math_order}
    Suppose $\alpha$ is a positive constant, then
    \begin{align*}
    \lim_{T\rightarrow\infty}T\sum_{t=1}^{T-1} \frac{1}{t^\alpha (T-t)} =\begin{cases}
    	\sum_{t=1}^\infty t^{-\alpha}, &\alpha > 1\\
    	\infty, & 0<\alpha\le 1.
    \end{cases} 
    \end{align*}
    Furthermore, 
    \begin{align*}
        \lim_{T\rightarrow\infty} \sum_{t=1}^{T-1} \frac{1}{t^\alpha (T-t)} = O\left(\frac{\log T}{T^\alpha}\right),\ \  \forall \ 0<\alpha<1
    \end{align*}
\end{lemma}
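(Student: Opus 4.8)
The plan is to split the sum $\sum_{t=1}^{T-1} \frac{1}{t^\alpha(T-t)}$ at the midpoint $t = \lfloor T/2\rfloor$ and estimate the two halves separately, comparing each against the $p$-series $\sum_t t^{-\alpha}$ and the harmonic series. The point is that on the lower half the factor $1/(T-t)$ is essentially $1/T$, while on the upper half the factor $1/t^\alpha$ is essentially $(2/T)^\alpha$, and in each case the remaining factor is comparable to a partial sum that is convergent, logarithmic, or of order $T^{1-\alpha}$ according to whether $\alpha$ is $>1$, $=1$, or $<1$.

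For the first statement I would start from the termwise identity $T\sum_{t=1}^{T-1}\frac{1}{t^\alpha(T-t)} = \sum_{t=1}^{\lfloor T/2\rfloor}\frac{T}{t^\alpha(T-t)} + \sum_{t=\lfloor T/2\rfloor+1}^{T-1}\frac{T}{t^\alpha(T-t)}$. On the lower half $1 \le T/(T-t) \le 2$, so the $t$-th summand lies between $t^{-\alpha}$ and $2t^{-\alpha}$ and converges to $t^{-\alpha}$ as $T\to\infty$; on the upper half $t^\alpha \ge (T/2)^\alpha$, so after the substitution $s = T-t$ that block is at most $\frac{2^\alpha T}{T^\alpha}\sum_{s=1}^{\lfloor T/2\rfloor}\frac{1}{s} = O\bigl(T^{1-\alpha}\log T\bigr)$. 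When $\alpha > 1$ the upper block vanishes and the lower block converges to $\sum_{t=1}^\infty t^{-\alpha}$ by the dominated convergence theorem for series (dominating function $2t^{-\alpha}$, which is summable), giving the claimed finite limit. When $0 < \alpha \le 1$ the lower block alone is bounded below by $\sum_{t=1}^{\lfloor T/2\rfloor} t^{-\alpha}$, which diverges (it equals $H_{\lfloor T/2\rfloor}$ if $\alpha = 1$, and is at least $\lfloor T/2\rfloor (T/2)^{-\alpha}$ if $\alpha < 1$), so since every summand is nonnegative the whole expression tends to $+\infty$.

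For the second statement the same split works with the factor $T$ removed. On the lower half, bounding $1/(T-t) \le 2/T$ gives $\sum_{t\le \lfloor T/2\rfloor}\frac{1}{t^\alpha(T-t)} \le \frac{2}{T}\sum_{t=1}^{\lfloor T/2\rfloor} t^{-\alpha}$, which is $O(1/T)$, $O(\log T / T)$, or $O(T^{-\alpha})$ according as $\alpha > 1$, $\alpha = 1$, or $0 < \alpha < 1$, hence tends to $0$ in every case; on the upper half $t^\alpha \ge (T/2)^\alpha$ gives a bound $O(T^{-\alpha}\log T) \to 0$ exactly as before. Adding the two halves yields the claim for every $\alpha > 0$. (For $\alpha > 1$ one can also simply divide the finite limit obtained in the first part by $T$.)

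There is no real obstacle here; this is an elementary calculus estimate. The only places needing a little attention are the boundary case $\alpha = 1$, where the relevant partial sums are logarithmic rather than power-type, and bookkeeping with the floor functions, e.g.\ using $T-1-\lfloor T/2\rfloor \le \lfloor T/2\rfloor$ so that the inner harmonic sum is $H_{\lfloor T/2\rfloor} = O(\log T)$, together with the standard estimates $\sum_{t=1}^N t^{-\alpha} = \Theta(1),\,\Theta(\log N),\,\Theta(N^{1-\alpha})$ for $\alpha > 1,\,=1,\,<1$ respectively.
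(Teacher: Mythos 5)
Your proof is correct, and it takes a genuinely different route from the paper's. The paper argues by cases with different tools in each: for $\alpha=1$ it uses the exact partial-fraction identity $\frac{1}{t(T-t)}=\frac{1}{T}\bigl(\frac{1}{t}+\frac{1}{T-t}\bigr)$ to get the closed form $\frac{2}{T}\sum_{t=1}^{T-1}t^{-1}$; for $0<\alpha<1$ it lower-bounds by the $\alpha=1$ case for the divergence claim and invokes H\"older's inequality with exponents $1/\alpha$ and $1/(1-\alpha)$ to show the unscaled sum vanishes; and for $\alpha>1$ it telescopes $\frac{1}{(T-t)t^\alpha}$ into $\lfloor\alpha\rfloor$ explicit terms plus a remainder that reduces to the earlier cases. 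Your midpoint split at $t=\lfloor T/2\rfloor$, combined with the elementary bounds $1\le T/(T-t)\le 2$ on the lower block and $t^{-\alpha}\le(2/T)^\alpha$ on the upper block, plus dominated convergence for series (Tannery's theorem) to identify the limit $\sum_{t=1}^\infty t^{-\alpha}$ when $\alpha>1$, handles all values of $\alpha>0$ with a single decomposition and only standard estimates on partial sums of $p$-series. What your approach buys is uniformity and brevity --- no H\"older, no case-dependent algebraic identities; what the paper's approach buys is an exact expression in the $\alpha=1$ case and a self-contained chain in which the $\alpha>1$ case is reduced to the $0<\alpha<1$ case already proved. Both arguments are complete; your attention to the $\alpha=1$ boundary (logarithmic rather than power-type partial sums) and to the floor-function bookkeeping covers the only points where a careless version would slip.
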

\begin{proof}
  We consider three different scenarios separately: $0<\alpha<1$, $\alpha=1$, and $1<\alpha$.
  \paragraph{Case 1: $\alpha=1$.}
  \begin{align*}
    \sum_{t=1}^{T-1} \frac{1}{t^\alpha (T-t)} = \sum_{t=1}^{T-1} \frac{1}{t(T-t)} = \frac{1}{T}\left(\sum_{t=1}^{T-1} \frac{1}{t} +\sum_{t=1}^{T-1} \frac{1}{T-t}\right) = \frac{1}{T}\sum_{t=1}^{T-1}\frac{2}{t}.
  \end{align*}
  Therefore, 
  $$\lim_{T\rightarrow\infty}T \sum_{t=1}^{T-1} \frac{1}{t^\alpha (T-t)} = \lim_{T\rightarrow\infty} 2\sum_{t=1}^{T-1} \frac{1}{t} = \infty,
  $$
  and 
$$
  \lim_{T\rightarrow\infty} \sqrt{T}\sum_{t=1}^{T-1} \frac{1}{t^\alpha (T-t)}  = \lim_{T\rightarrow\infty}\frac{2}{\sqrt{T}}\sum_{t=1}^{T-1}\frac{1}{t} = 0.
$$
  
\paragraph{Case 2: $0<\alpha <1$.}
Since 
\begin{align}
  \sum_{t=1}^{T-1} \frac{1}{t^\alpha (T-t)}\ge \sum_{t=1}^{T-1} \frac{1}{t (T-t)},\nonumber
\end{align}
we have
\begin{align}
  \lim_{T\rightarrow\infty}T\sum_{t=1}^{T-1} \frac{1}{t^\alpha (T-t)}  \ge \lim_{T\rightarrow\infty}T\sum_{t=1}^{T-1} \frac{1}{t (T-t)} = \infty.\nonumber
\end{align} 
On the other side, by Holder's inequality,
\begin{align}
  \sum_{t=1}^{T-1} \frac{1}{t^\alpha (T-t)}&= \sum_{t=1}^{T-1} \frac{1}{t^\alpha (T-t)^\alpha}\cdot \frac{1}{(T-t)^{1-\alpha}}\nonumber\\
                                           &\le \left[\sum_{t=1}^{T-1}\left(t^{-\alpha}(T-t)^{-\alpha}\right)^{\frac{1}{\alpha}}\right]^{\alpha} \left[\sum_{t=1}^{T-1}\left((T-t)^{-1+\alpha}\right)^{\frac{1}{1-\alpha}}\right]^{1-\alpha}\nonumber\\
                                           &\le \left[\sum_{t=1}^{T-1}\frac{1}{t(T-t)}\right]^\alpha \left[\sum_{t=1}^{T-1}\frac{1}{T-t}\right]^{1-\alpha}\nonumber\\
                                           &\le \left[\frac{1}{T}\sum_{t=1}^{T-1}\frac{2}{t}\right]^\alpha \left[\sum_{t=1}^{T-1}\frac{1}{t}\right]^{1-\alpha}\nonumber\\
                                           &\le \frac{2^\alpha}{T^\alpha}\sum_{t=1}^{T-1}\frac{1}{t}.\nonumber
\end{align}
Therefore,
$$
\sum_{t=1}^{T-1} \frac{1}{t^\alpha (T-t)} = O\left(\frac{\log T}{T^\alpha}\right).
$$
	
\paragraph{Case 3: $\alpha > 1$.}
For any $\alpha>1$,
\begin{align}
  \frac{1}{(T-t)t^\alpha} = \frac{1}{T t^\alpha} + \frac{1}{T^2t^{\alpha-1}} + \cdots + \frac{1}{T^{[\alpha]}t^{1+\alpha-[\alpha]}} + \frac{1}{(T-t)T^{[\alpha]}t^{\alpha-[\alpha]}},\nonumber
\end{align}
where $[x]$ denotes the largest integer that is no larger than $x$. Therefore, 
\begin{align}
  \sum_{t=1}^{T-1}\frac{1}{(T-t)t^\alpha} &=
                                            \frac{1}{T}\sum_{t=1}^{T-1}t^{-\alpha}
                                            +\sum_{k=1}^{[\alpha]-1}\sum_{t=1}^{T-1}\frac{t^{-\alpha+k}}{T^{k+1}}
                                            +
                                            \frac{1}{T^{[\alpha]}}\sum_{t=1}^{T-1}\frac{1}{(T-t)t^{\alpha-[\alpha]}},\nonumber
\end{align}
which implies                        
\begin{align}
  T\sum_{t=1}^{T-1}\frac{1}{(T-t)t^\alpha} &= \sum_{t=1}^{T-1}t^{-\alpha} +\sum_{k=1}^{[\alpha]-1}\sum_{t=1}^{T-1}\frac{t^{-\alpha+k}}{T^{k}} + \frac{1}{T^{[\alpha]-1}}\sum_{t=1}^{T-1}\frac{1}{(T-t)t^{\alpha-[\alpha]}}.\nonumber
\end{align}
We consider each term in turn.  For the first term, since $\alpha >
0$, we have,
\begin{align*}
  \lim_{T\rightarrow\infty} \sum_{t=1}^{T-1}t^{-\alpha} = \sum_{t=1}^\infty t^{-\alpha}.
\end{align*}
For the second term, since $1\le k\le [\alpha]-1$ and
$0\le t^{-\alpha+k}\le \ t^{-\alpha + [\alpha]-1}\le t^{-1}$, we have,

$$\begin{aligned}
  0\le \frac{t^{-\alpha+k}}{T^{k}}\le \frac{1}{tT}.
\end{aligned}
$$
Therefore, we have
\begin{align}
  0\le
  \sum_{k=1}^{[\alpha]-1}\sum_{t=1}^{T-1}\frac{t^{-\alpha+k}}{T^{k}}
  &\le ([\alpha]-1)\sum_{t=1}^{T-1}\frac{1}{t T}, \nonumber
\end{align}
which implies,
\begin{align}
  0 \le \lim_{T\rightarrow\infty} \sum_{k=1}^{[\alpha]-1}\sum_{t=1}^{T-1}\frac{t^{-\alpha+k}}{T^{k}} &\le ([\alpha]-1)\lim_{T\rightarrow\infty}\sum_{t=1}^{T-1}\frac{1}{t T} = 0,\nonumber
\end{align}
yielding the desired result,
$$
\lim_{T\rightarrow\infty} \sum_{k=1}^{[\alpha]-1}\sum_{t=1}^{T-1}\frac{t^{-\alpha+k}}{T^{k}} = 0.
$$
Lastly, for the third term, if $\alpha$ is an integer, we must have
$\alpha\ge 2$ because $\alpha>1$.  Therefore,
\begin{align}
  \lim_{T\rightarrow \infty}\frac{1}{T^{[\alpha]-1}}\sum_{t=1}^{T-1}\frac{1}{(T-t)t^{\alpha-[\alpha]}} = \lim_{T\rightarrow \infty}\frac{1}{T^{\alpha-1}}\sum_{t=1}^{T-1}\frac{1}{T-t} = 0.\nonumber
\end{align}
If $\alpha$ is not an integer, then $0<\alpha - [\alpha]<1$. Applying the result of Case~2, we have 
$$
0\le \lim_{T\rightarrow \infty}\frac{1}{T^{[\alpha]-1}}\sum_{t=1}^{T-1}\frac{1}{(T-t)t^{\alpha-[\alpha]}}\le \lim_{T\rightarrow \infty}\sum_{t=1}^{T-1}\frac{1}{(T-t)t^{\alpha-[\alpha]}} = 0. 
$$
Therefore, when $\alpha>1$,
\begin{align*}
  \lim_{T\rightarrow \infty} T\sum_{t=1}^{T-1}\frac{1}{(T-t)t^\alpha} &= \lim_{T\rightarrow\infty}\left(\sum_{t=1}^{T-1}t^{-\alpha} +\sum_{k=1}^{[\alpha]-1}\sum_{t=1}^{T-1}\frac{t^{-\alpha+k}}{T^{k}} + \frac{1}{T^{[\alpha]-1}}\sum_{t=1}^{T-1}\frac{1}{(T-t)t^{\alpha-[\alpha]}}\right)\nonumber\\
                                                                      &=\sum_{t=1}^\infty t^{-\alpha} + 0 + 0\nonumber \\
                                                                      &= \sum_{t=1}^\infty t^{-\alpha}<\infty.
\end{align*}
\end{proof}

\subsection{Lemma~\ref{lemma:centering}}

\begin{lemma}
  \label{lemma:centering}
  for any $\bx\in\mathcal{X}, 1\le a\le K$, $2\le j\le T$, we have
  \begin{align}
  \sum_{a=1}^K\tilde\pi_{T,j}(\bx,a) &= \frac{1}{T-1} \nonumber\\
  \tilde \pi_{T,j}(\bx,a) - \frac{\hat\pi_{j-1}(\bx,a)}{T-1} &= -\sum_{t=1}^{j-2}\frac{\hat\pi_t(\bx,a)-\hat\pi_{j-1}(\bx,a)}{(T-t)(T-t-1)} \nonumber\\
  \left(\tilde \pi_{T,j}(\bx,a) - \frac{\hat\pi_{j-1}(\bx,a)}{T-1}\right)^2 
  &\le \sum_{t_1=1}^{j-2} \frac{2|\hat\pi_{t_1}(\bx,a)-\hat\pi_{j-1}(\bx,a)|}{(T-t_1)(T-t_1-1)(T-j+1)} \nonumber
  \end{align}
Furthermore, 
\begin{align*}
|\tilde\pi_{T,j}(\bx,a)| \le \frac{2}{T-j+1}
\end{align*}
\end{lemma}
\begin{proof}
  First, notice that $\sum_{a=1}^K\hat\pi_j(\bx,a) = 1$ for any policy $\hat\pi_j$ and context $\bx$. Therefore,
\begin{align}
\sum_{a=1}^K\tilde\pi_{T,j}(\bx,a) &= \sum_{a=1}^K\left(\frac{\hat\pi_1(\bx,a)}{T-1}+\sum_{t=2}^{j-1}\frac{\hat\pi_t(\bx,a)-\hat\pi_{t-1}(\bx,a)}{T-t}\right) = \frac{1}{T-1} \nonumber
\end{align}

For the second equality, the definition of $\tilde\pi_{T,j}$ implies,
\begin{align*}
  & \tilde \pi_{T,j}(\bx,a) - \frac{\hat\pi_{j-1}(\bx,a)}{T-1} \\
  = &  \frac{\hat\pi_1(\bx,a)}{T-1}+\sum_{t=2}^{j-1} \frac{\hat\pi_t(\bx,a) - \hat\pi_{t-1}(\bx,a)}{T-t} - \frac{\hat\pi_{j-1}(\bx,a)}{T-1}\nonumber\\
  = & \frac{\hat\pi_1(\bx,a) - \hat\pi_{j-1}(\bx,a)}{T-1} + \sum_{t=2}^{j-1} \frac{(\hat\pi_t(\bx,a)-\hat\pi_{j-1}(\bx,a))-(\hat\pi_{t-1}(\bx,a)-\hat\pi_{j-1}(\bx,a))}{T-t}\nonumber\\
  = & -\sum_{t=1}^{j-2}\frac{\hat\pi_t(\bx,a)-\hat\pi_{j-1}(\bx,a)}{(T-t)(T-t-1)}.
\end{align*}
Therefore,
\begin{align*}
  \left(\tilde \pi_{T,j}(\bx,a) - \frac{\hat\pi_{j-1}(\bx,a)}{T-1}\right)^2 &=\sum_{t_1=1}^{j-2}\sum_{t_2=1}^{j-2}\frac{(\hat\pi_{t_1}(\bx,a)-\hat\pi_{j-1}(\bx,a))\cdot(\hat\pi_{t_2}(\bx,a)-\hat\pi_{j-1}(\bx,a))}{(T-t_1)(T-t_1-1)(T-t_2)(T-t_2-1)} \nonumber\\
  &\le \sum_{t_1=1}^{j-2} \frac{2|\hat\pi_{t_1}(\bx,a)-\hat\pi_{j-1}(\bx,a)|}{(T-t_1)(T-t_1-1)}\cdot\sum_{t_2=1}^{j-2}\frac{1}{(T-t_2)(T-t_2-1)}\nonumber\\
  &\le \sum_{t_1=1}^{j-2} \frac{2|\hat\pi_{t_1}(\bx,a)-\hat\pi_{j-1}(\bx,a)|}{(T-t_1)(T-t_1-1)(T-j+1)}
\end{align*}
Finally, since $|\pi_t(\bx,a)|<1$ for all $t, x, a$, 
\begin{align*}
\left|\tilde\pi_{T,j}(\bx,a)\right| &\le \frac{|\hat\pi_{j-1}|}{T-1} + \sum_{t=1}^{j-2}\frac{|\hat\pi_t(\bx,a)-\hat\pi_{j-1}(\bx,a)|}{(T-t)(T-t-1)} \nonumber \\
&\le \frac{1}{T-1} + 2\sum_{t=1}^{j-2}\frac{1}{(T-t)(T-t-1)} \nonumber \\
                                    &\le \frac{1}{T-1} + 2\left(\frac{1}{T-j+1}-\frac{1}{T-1}\right)\\
  & \le \frac{2}{T-j+1}
\end{align*}

\end{proof}

\subsection{Lemma~\ref{lemma:L1_approx}}
\begin{lemma}
  \label{lemma:L1_approx} Under
  Assumptions~\ref{ass:stationarity}--\ref{ass:fourth_moment}, we have,
  \begin{align*}
    \lim_{T'\rightarrow\infty} \sup_{T\ge T'}\frac{(T-1)^{\min(\delta,1)}}{\log T}\sum_{j=2}^T\sum_{a=1}^K \mathbb{E}\left[\left|\tilde\pi_{T,j}(\bX_j,a)-\frac{\hat\pi_{j-1}(\bX_j,a)}{T-1}\right|\right] < \infty,
  \end{align*}
  where $\tilde\pi_{T,j}$ is defined in Equation~\eqref{def:tilde}.
\end{lemma}
\begin{proof}
  Applying Lemma~\ref{lemma:centering}, we have, 
\begin{align}
  &\frac{(T-1)^{\min(\delta,1)}}{\log T}\sum_{j=2}^T\sum_{a=1}^K \mathbb{E}\left[\left|\tilde\pi_{T,j}(\bX_j,a)-\frac{\hat\pi_{j-1}(\bX_j,a)}{T-1}\right|\right] \nonumber\\
  \le&\frac{(T-1)^{\min(\delta,1)}}{\log T}\sum_{j=2}^T\sum_{t=1}^{j-2} \sup_{1\le a\le K}\mathbb{E}\left[\frac{|\hat\pi_t(\bX_j,a)-\hat\pi_{j-1}(\bX_j,a)|}{(T-t)(T-t-1)}\right]\nonumber\\
  \le&\frac{(T-1)^{\min(\delta,1)}}{\log T}\sum_{j=2}^T\sum_{t=1}^{j-2}\sum_{i=t+1}^{j-1} \sup_{1\le a\le K}\mathbb{E}\left[\frac{|\hat\pi_i(\bX_j,a)-\hat\pi_{i-1}(\bX_j,a)|}{(T-t)(T-t-1)}\right]\nonumber\\
  \le&\frac{(T-1)^{\min(\delta,1)}}{\log T}\sum_{j=2}^T\sum_{t=1}^{j-2}\sum_{i=t+1}^{j-1} \mathbb{E}\left[\frac{Q_i}{(T-t)(T-t-1)}\right]\nonumber\\
  \le&\frac{(T-1)^{\min(\delta,1)}}{\log T} \sup_{i\ge 1}\mathbb{E}\left[Q_ii^{1+\delta}\right]\sum_{j=2}^T\sum_{t=1}^{j-2}\sum_{i=t+1}^{j-1} \frac{i^{-1-\delta}}{(T-t)(T-t-1)}\nonumber\\
  \le&\frac{(T-1)^{\min(\delta,1)}}{\log T} \sup_{i\ge 1}\mathbb{E}\left[Q_ii^{1+\delta}\right]\sum_{t=1}^{T-2}\sum_{i=t+1}^{T-1}\sum_{j=i+1}^T \frac{i^{-1-\delta}}{(T-t)(T-t-1)}\nonumber\\
  \le&\frac{(T-1)^{\min(\delta,1)}}{\log T} \sup_{i\ge 1}\mathbb{E}\left[Q_ii^{1+\delta}\right]\sum_{t=1}^{T-2}\sum_{i=t+1}^{T-1}\frac{i^{-1-\delta}}{T-t}\nonumber\\
  \le&\frac{(T-1)^{\min(\delta,1)}}{\log T} \sup_{i\ge 1}\mathbb{E}\left[Q_ii^{1+\delta}\right]\cdot\frac{1}{\delta}\sum_{t=1}^{T-2}\frac{1}{(T-t)t^{\delta}}.\nonumber
\end{align}
Lemma~\ref{lemma:math_order} shows
$\sum_{t=1}^{T-2}\frac{1}{(T-t)t^{\delta}} = O\left(\frac{\log
    T}{T^{\min (\delta,1)}}\right)$, and therefore we have the desired
result,
\begin{align*}
  \lim_{T'\rightarrow\infty} \sup_{T\ge T'}\frac{(T-1)^{\min(\delta,1)}}{\log T}\sum_{j=2}^T\sum_{a=1}^K \mathbb{E}\left[\left|\tilde\pi_{T,j}(\bX_j,a)-\frac{\hat\pi_{j-1}(\bX_j,a)}{T-1}\right|\right] < \infty. 
\end{align*}
\end{proof}

\subsection{Lemma \ref{lemma:L2_approx}}
\begin{lemma}
  \label{lemma:L2_approx} Under
  Assumptions~\ref{ass:stationarity}--\ref{ass:fourth_moment},  we have,
  \begin{align*}
    \lim_{T\rightarrow\infty}(T-1)^{1+\eta}\sum_{j=2}^T\sum_{a=1}^K \mathbb{E}\left[\left(\tilde\pi_{T,j}(\bX_j,a)-\frac{\hat\pi_{j-1}(\bX_j,a)}{T-1}\right)^2\right] = 0,
  \end{align*}
  where $\tilde\pi_{T,j}$ is defined in Equation~\eqref{def:tilde}.
\end{lemma}
\begin{proof}
By Lemma~\ref{lemma:centering}, we have,
\begin{align*}
    &(T-1)^{1+\eta}\sum_{j=2}^T\mathbb{E}\left[\sum_{a=1}^K\left(\tilde \pi_{T,j}(\bX_j,a) - \frac{\hat\pi_{j-1}(\bX_j,a)}{T-1}\right)^2\right]\nonumber\\
    \le&(T-1)^{1+\eta}\sum_{j=2}^T\sum_{a=1}^K\mathbb{E}\left[\sum_{t_1=1}^{j-2} \frac{2|\hat\pi_{t_1}(\bX_j,a)-\hat\pi_{j-1}(\bX_j,a)|}{(T-t_1)(T-t_1-1)(T-j+1)}\right]\nonumber\\
    \le& 2(T-1)^{1+\eta}K\sum_{j=2}^T\sum_{t_1=1}^{j-2}\sup_{1\le a\le K}\mathbb{E}\left[\frac{|\hat\pi_{t_1}(\bX_j,a)-\hat\pi_{j-1}(\bX_j,a)|}{(T-t_1)(T-t_1-1)(T-j+1)}\right].
\end{align*}
Under Assumption~\ref{ass:clip_rate}, we have
$0\le \eta < \frac{\delta}{\delta+1}$. Therefore, there exists $\zeta$
such that
\begin{align*}
\frac{\eta}{\delta} < \zeta < 1 - \eta.
\end{align*}
Therefore, 
\begin{align}
    &2(T-1)^{1+\eta}K\sum_{j=2}^T\sum_{t_1=1}^{j-2}\sup_{1\le a\le K}\mathbb{E}\left[\frac{|\hat\pi_{t_1}(\bX_j,a)-\hat\pi_{j-1}(\bX_j,a)|}{(T-t_1)(T-t_1-1)(T-j+1)}\right]\label{011902}\\
    =&2(T-1)^{1+\eta}K\sum_{j=T^\zeta+2}^T\sum_{t_1=T^\zeta}^{j-2}\sup_{1\le a\le K}\mathbb{E}\left[\frac{|\hat\pi_{t_1}(\bX_j,a)-\hat\pi_{j-1}(\bX_j,a)|}{(T-t_1)(T-t_1-1)(T-j+1)}\right]\label{011912}\\
    &+2(T-1)^{1+\eta}K\sum_{j=2}^T\sum_{t_1=1}^{\min(j-2,T^{\zeta}-1)}\sup_{1\le a\le K}\mathbb{E}\left[\frac{|\hat\pi_{t_1}(\bX_j,a)-\hat\pi_{j-1}(\bX_j,a)|}{(T-t_1)(T-t_1-1)(T-j+1)}\right]\label{011922}
\end{align}
We will bound each of these two terms in turn.  The first term in
Equation~\eqref{011912} can be bounded with the following expression
\begin{align}
    &2(T-1)^{1+\eta}K\sum_{j=T^\zeta+2}^T\sum_{t_1=T^\zeta}^{j-2}\sup_{1\le a\le K}\mathbb{E}\left[\frac{|\hat\pi_{t_1}(\bX_j,a)-\hat\pi_{j-1}(\bX_j,a)|}{(T-t_1)(T-t_1-1)(T-j+1)}\right]\nonumber\\
    \le&2(T-1)^{1+\eta}K\sum_{j=T^\zeta+2}^T\sum_{t_1=T^\zeta}^{j-2}\sum_{i=t_1+1}^{j-1}\frac{\mathbb{E}[Q_i]}{(T-t_1)(T-t_1-1)(T-j+1)}\nonumber\\
    \le&2(T-1)^{1+\eta}K\sum_{t_1=T^\zeta}^{T-2}\sum_{i=t_1+1}^{T-1}\sum_{j=i+1}^T\frac{\mathbb{E}[Q_i]}{(T-t_1)(T-t_1-1)(T-j+1)}\nonumber\\
    \le&2(T-1)^{1+\eta}\cdot\log T \cdot K\sum_{t_1=T^\zeta}^{T-2}\sum_{i=t_1+1}^{T-1}\frac{\mathbb{E}[Q_i]}{(T-t_1)(T-t_1-1)}\nonumber\\
    \le&2(T-1)^{1+\eta}\cdot\log T \cdot K\cdot \sup_{i\ge 1} \mathbb{E}[Q_ii^{1+\delta}]\sum_{t_1=T^\zeta}^{T-2}\sum_{i=t_1+1}^{T-1}\frac{i^{-1-\delta}}{(T-t_1)(T-t_1-1)}\nonumber\\
    \le&2(T-1)^{1+\eta}\cdot\log T \cdot K\cdot \sup_{i\ge 1} \mathbb{E}[Q_ii^{1+\delta}]\sum_{t_1=T^\zeta}^{T-2}\frac{t_1^{-1-\delta}(T-1-t_1)}{(T-t_1)(T-t_1-1)}\nonumber\\
    \le&2(T-1)^{1+\eta}\cdot\log T \cdot K\cdot \sup_{i\ge 1} \mathbb{E}[Q_ii^{1+\delta}]\sum_{t_1=T^\zeta}^{T-2}\frac{1}{(T-t_1)t_1^{1+\delta}}\nonumber\\
    \le&2(T-1)^{\eta}\cdot\log T \cdot K\cdot \sup_{i\ge 1} \mathbb{E}[Q_ii^{1+\delta}]\left(\sum_{t_1=T^\zeta}^{T-2}\frac{1}{(T-t_1)t_1^{\delta}} + \sum_{t_1=T^\zeta}^{T-2}\frac{1}{t_1^{1+\delta}}\right).\nonumber
\end{align}
By Lemma~\ref{lemma:math_order}, we have,
\begin{align}
    \sum_{t_1=T^\zeta}^{T-2}\frac{1}{(T-t_1)t_1^{\delta}} = O\left(\frac{\log T}{T^{\min\{\delta,1\}}}\right),\ \ \sum_{t_1=T^\zeta}^{T-2}\frac{1}{t_1^{1+\delta}} = O\left(\frac{1}{T^{\zeta\delta}}\right)\nonumber
\end{align}
and $\zeta > \frac{\eta}{\delta}$, $\delta>\eta, \eta<1$.  Therefore,
we have the desired result,
\begin{align} 
(T-1)^\eta \log T \left(\sum_{t_1=T^\zeta}^{T-2}\frac{1}{(T-t_1)t_1^{\delta}} + \sum_{t_1=T^\zeta}^{T-2}\frac{1}{t_1^{1+\delta}}\right) = o(1),\nonumber
\end{align}
implying that the term in Equation~\eqref{011912} is $o(1)$.

Next, the term in Equation~\eqref{011922} can be bounded as,
\begin{align*}
    &2(T-1)^{1+\eta}K\sum_{j=2}^T\sum_{t_1=1}^{\min(j-2,T^{\zeta}-1)}\sup_{1\le a\le K}\mathbb{E}\left[\frac{|\hat\pi_{t_1}(\bX_j,a)-\hat\pi_{j-1}(\bX_j,a)|}{(T-t_1)(T-t_1-1)(T-j+1)}\right]\nonumber\\
    \le&4(T-1)^{1+\eta}K\sum_{j=2}^T\sum_{t_1=1}^{\min(j-2,T^{\zeta}-1)}\frac{1}{{(T-t_1)(T-t_1-1)(T-j+1)}}\nonumber\\
    \le&4(T-1)^{1+\eta}K\sum_{j=2}^T\frac{T^\zeta}{{(T-T^\zeta)(T-T^\zeta-1)(T-j+1)}}\nonumber\\
    \le&4(T-1)^{1+\eta}K\log T \cdot \frac{T^\zeta}{(T-T^\zeta-1)^2}
\end{align*}
Since $\zeta < 1-\eta$,
$$
(T-1)^{1+\eta} \log T \frac{T^\zeta}{(T-T^\zeta-1)^2} = o(1).
$$
Therefore, we have the desired result,
\begin{align*}
  \lim_{T\rightarrow\infty}(T-1)^{1+\eta}\sum_{j=2}^T\sum_{a=1}^K \mathbb{E}\left[\left(\tilde\pi_{T,j}(\bX_j,a)-\frac{\hat\pi_{j-1}(\bX_j,a)}{T-1}\right)^2\right] \stackrel{P}{\rightarrow} 0. 
\end{align*}

\end{proof}

\section{Bandit algorithms used in numerical experiments}
\label{app:bandit_algorithms}

In this appendix, we describe the linear contextual bandit algorithms
used in our numerical experiments.  They are $\epsilon$-greedy, UCB,
and Thompson Sampling algorithms.

\begin{algorithm}[H] \spacingset{1}
  \caption{The contextual $\epsilon$-greedy algorithm}\label{alg:epsilon}
 \KwIn{Exploration rates $\{c_t\}_{t=1}^\infty$. Initial policy $\pi_0$.}
  \KwOut{A bandit policy sequence $\{\hat\pi_t\}_{t=1}^T$ and
    corresponding bandit data $(\bX_t,A_t,R_t)$.}
  \For{$t=1$ \KwTo $T$}{
    {Observe the context $\bX_t$}\; 
    {Sample an action $A_t$ from the policy $\hat\pi_{t-1}$}\;
    {Observe the reward $R_t$ under the context $\bX_t$ and action $A_t$}\;
    {Update the parameter estimation for $\bTheta$ as
      \begin{align}
        \widehat\bTheta_t = \left(\lambda I + \sum_{s=1}^t \Phi(\bX_s,
        A_s)\Phi(\bX_s,A_s)^\top\right)^{-1}\sum_{s=1}^t
        \Phi(\bX_s, A_s) R_s,\nonumber
      \end{align}}
    where $\lambda$ is a ridge regularization term\;
    {Update the policy as $$\hat\pi_t(\bx,a)=\begin{cases}
        \frac{c_t}{K-1}, & \text{if } a \not = \argmax_{\tilde a=1,2,...,K} \Phi(\bx,\tilde a)^\top\widehat\bTheta_t,\\
        1 - c_t, & \text{otherwise}.
      \end{cases};$$}\
  } 
\end{algorithm}

\begin{algorithm}[H] \spacingset{1}
  \caption{The contextual UCB algorithm}\label{alg:UCB}
  \KwIn{Exploration rates $\{c_t\}_{t=1}^\infty$. Initial policy
    $\pi_0$. Tuning parameter $\delta$ for the confidence set.}
   \KwOut{A bandit policy sequence $\{\hat\pi_t\}_{t=1}^T$ and
     corresponding bandit data $(\bX_t,A_t,R_t)$.}
   \For{$t=1$ \KwTo $T$}{
     {Observe the context $\bX_t$}\; 
     {Sample an action $A_t$ from the policy $\hat\pi_{t-1}$}\;
     {Observe the reward $R_t$ under the context $\bX_t$ and action $A_t$}\;
     {Update the parameter estimation for $\bTheta$ as \begin{align}
       \widehat\bTheta_t = \left(\lambda I + \sum_{s=1}^t \Phi(\bX_s,
                                                         A_s)\Phi(\bX_s,
                                                         A_s)^\top\right)^{-1}\sum_{s=1}^t
                                                         \Phi(\bX_s,
                                                         A_s) R_s,\nonumber
       \end{align}}
       where $\lambda$ is a ridge regularization term\;
     {Update the policy as $$\hat\pi_t(\bx,a) = \begin{cases}
      \frac{c_t}{K-1}, & \text{if } a \not = \argmax_{\tilde a=1,2,...,K, \widetilde\bTheta\in C_t}\Phi(\bx,\tilde a)^\top\widetilde\bTheta\\
      1-c_t, & \text{if } a = \arg\max_{\tilde a=1,2,...,K, \widetilde\bTheta\in C_t}\Phi(\bx,\tilde a)^\top\widetilde\bTheta
      \end{cases}$$}
    where $C_t$ is the $1-\delta$ confidence set whose definition is
    given by,
      \begin{align}
        C_t:= \left\{ \widetilde \bTheta \in \mathbb{R}^{dK} : \| \widehat{\bTheta}_t - \widetilde \bTheta \|_{\bar V_t} \leq R \sqrt{dK \log \left( \frac{1 + t L^2 / \lambda}{\delta} \right) + \lambda^{1/2} S} \right\}. \nonumber
        \end{align}
      \
     } 
\end{algorithm}

\begin{algorithm}[H] \spacingset{1}
  \caption{The contextual Thompson Sampling algorithm}\label{alg:TS}
  \KwIn{Exploration rates $\{c_t\}_{t=1}^\infty$. Initial policy $\pi_0$. A prior mean and covariance matrix of $\bTheta$, $\bmu_0,\boldsymbol\Sigma_0$.}
   \KwOut{A bandit policy sequence $\{\hat\pi_t\}_{t=1}^T$ and
     corresponding bandit data $(\bX_t,A_t,R_t)$.}
   \For{$t=1$ \KwTo $T$}{
     {Observes the context $\bX_t$}\; 
     {Sample an action $A_t$ from the policy $\hat\pi_{t-1}$}\;
     {Observe the reward $R_t$ under context $\bX_t$ and action $A_t$}\;
     {Update the posterior mean and covariance matrix for $\bTheta$ as \begin{align}
      \boldsymbol{\Sigma}_t =
                 \left(\boldsymbol\Sigma_0^{-1}+\sum_{s=1}^t\Phi(\bX_s,
                                                                         A_s)\Phi(\bX_s,
                                                                         A_s)^\top
                                                                         \right)^{-1},
                                                                         \quad
      \bmu_t = \boldsymbol\Sigma_t\left(\boldsymbol\Sigma_0^{-1}\bmu_0 + \sum_{s=1}^t
                                                                         \Phi(\bX_s, A_s) R_s\right).\nonumber
      \end{align} }
     Update the policy $$\hat\pi_{t}(\bx,a) = \begin{cases}
      \frac{c_t}{K-1}, \ \ \text{if}\ \  \mathbb{P}_{\bTheta\sim N(\bmu_t,\Sigma_t)}\left(\Phi(\bx,a)^\top\bTheta = \max_{\tilde a=1,2,...,K}\Phi(\bx,\tilde a)^\top\bTheta\right)<\frac{c_t}{K-1}\\
      1 - {c_t},\ \  \text{if}\ \  \mathbb{P}_{\bTheta\sim N(\bmu_t,\Sigma_t)}\left(\Phi(\bx,a)^\top\bTheta = \max_{\tilde a=1,2,...,K}\Phi(\bx,\tilde a)^\top\bTheta\right)>1-c_t\\
      \alpha_t\mathbb{P}_{\bTheta\sim N(\bmu_t,\Sigma_t)}\left(\Phi(\bx,a)^\top\bTheta = \max_{\tilde a=1,2,...,K}\Phi(\bx,\tilde a)^\top\bTheta\right),\ \  \text{otherwise}
     \end{cases}
     $$ 
     where $\alpha_t$ is a scaling parameter that makes the probability of choosing different arms sum to 1.
     } 
\end{algorithm}

\section{Proof of Theorem~\ref{thm:linear_contextual_bandit}}
\label{thm:linear_contextual_bandit_proof}

Define the following linear contextual bandit updates after the $t$th iteration,
\begin{align*}
\widehat\bTheta_t = \left(\lambda I + \sum_{s=1}^t
  \Phi(\bX_s,A_s)\Phi(\bX_s,A_s)^\top \right)^{-1}\left(\sum_{s=1}^t \Phi(\bX_s,A_s)R_s\right),
\end{align*}
where $\lambda$ is the ridge penalization parameter.
In the later part, we will denote 
\begin{align*}
\bar V_t = \left(\lambda I + \sum_{s=1}^t \Phi(\bX_s,A_s)\Phi(\bX_t,A_t)^\top\right).
\end{align*}

\begin{lemma}
  \label{lemma:infinite_sampling}
  Under Assumptions~\ref{ass:xbounded_bandit}--\ref{ass:clip_bandit},
  \begin{align*}
  &\mathbb{P}\left(\lambda_{\min}\left(\bar V_t\right) \le \lambda + \frac{1}{2}\lambda_{\min}\left(\mathbb{E}\left[\bX\bX^\top\right]\right)\sum_{s=1}^t c_{s-1} \right) \nonumber\\
    \le & d \cdot\exp\left(-\frac{\left(\sum_{s=1}^{t}c_{s-1}\right)^2\lambda_{\min}\left(\mathbb{E}\left[\bX\bX^\top\right]\right)^2}{256L^4t}\right) + d \cdot\exp\left(-\frac{\left(\sum_{s=1}^{t}c_{s-1}\right)^2\lambda_{\min}\left(\mathbb{E}\left[\bX\bX^\top\right]\right)^2}{512L^4\sum_{s=1}^t c_{s-1}^2}\right) \nonumber\\
    =& O\left(\exp\left(-t^{1-2\zeta}\right)\right)
  \end{align*}   
\end{lemma}
Proof is in Appendix~\ref{lemma:infinite_sampling_proof}.
\begin{corollary}
  \label{corollary:V-1}
  \begin{align*}
  &\mathbb{P}\left(\lambda_{\max}(\bar V_t^{-1}) \ge \frac{1}{\lambda + \frac{1}{2}\lambda_{\min}\left(\mathbb{E}\left[\bX\bX^\top\right]\right)\sum_{s=1}^t c_{s-1}}\right)\nonumber\\
  \le & d \cdot\exp\left(-\frac{\left(\sum_{s=1}^{t}c_{s-1}\right)^2\lambda_{\min}\left(\mathbb{E}\left[\bX\bX^\top\right]\right)^2}{256L^4t}\right) + d \cdot\exp\left(-\frac{\left(\sum_{s=1}^{t}c_{s-1}\right)^2\lambda_{\min}\left(\mathbb{E}\left[\bX\bX^\top\right]\right)^2}{512L^4\sum_{s=1}^t c_{s-1}^2}\right) \nonumber\\
  =&O\left(\exp\left(-t^{1-2\zeta}\right)\right)
  \end{align*}
\end{corollary}
\begin{proof}
  Notice $\bar V_t$ is a positive definite matrix and
  $\lambda_{\max}(\bar V_t^{-1}) = \frac{1}{\lambda_{\min}(\bar
    V_t)}$. Then, the application of
  Lemma~\ref{lemma:infinite_sampling} yields the desired result.
\end{proof}

We will now prove that the stability condition holds for each of the
three bandit algorithms above. In all of our proofs, we will use
$\pi^*(\bx,a)$ to denote the optimal policy under the context $\bx$
and action $a$.

\subsection{$\epsilon$-greedy algorithm}

\begin{proof} 
Notice for the $\epsilon$-greedy algorithm, 
\begin{align}
\hat\pi_t(\bX,a) = \begin{cases}
\frac{c_t}{K-1}, & \text{if } a \not = \argmax_{\tilde a=1,2,\ldots,K}\Phi(\bX,\tilde a)^\top\widehat\bTheta_t\\
1-c_t, & \text{if } a = \argmax_{\tilde a=1,2,\ldots,K}\Phi(\bX,\tilde a)^\top\widehat\bTheta_t
\end{cases}.
\end{align}
For any $\bx\in\mathcal{X}$, if $$
\argmax_{\tilde a=1,2,\ldots,K}\Phi(\bx,\tilde a)^\top\widehat\bTheta_t = \argmax_{\tilde a=1,2,\ldots,K}\Phi(\bx,\tilde a)^\top\widehat\bTheta_{t-1},
$$ 
then, we have
$\sup_{a=1,2,\ldots,K}|\hat\pi_t(\bx,a) - \hat\pi_{t-1}(\bx,a)|\le |c_t -
c_{t-1}|$. Therefore,
\begin{align}
  &\sup_{a=1,2,..,K}\mathbb{E}\left[|\hat\pi_t(\bX,a) - \hat\pi_{t-1}(\bX,a)|\right]\nonumber\\
  \le &\mathbb{E}\left[\sup_{a=1,2,..,K}|\hat\pi_t(\bX,a) - \hat\pi_{t-1}(\bX,a)|\right] \nonumber\\
  \le& \mathbb{P}\left(\argmax_{\tilde a=1,2,\ldots,K}\Phi(\bX,\tilde a)^\top\widehat\bTheta_t = \argmax_{\tilde a=1,2,\ldots,K}\Phi(\bX,\tilde a)^\top\widehat\bTheta_{t-1}\right)\times|c_t - c_{t-1}|\nonumber\\
  &+ \mathbb{P}\left(\argmax_{\tilde a=1,2,\ldots,K}\Phi(\bX,\tilde a)^\top\widehat\bTheta_t \not = \argmax_{\tilde a=1,2,\ldots,K}\Phi(\bX,\tilde a)^\top\widehat\bTheta_{t-1}\right)\nonumber\\
  \le& |c_t - c_{t-1}| + \mathbb{P}\left(\argmax_{\tilde a=1,2,\ldots,K}\Phi(\bX,\tilde a)^\top\widehat\bTheta_t \not = \argmax_{\tilde a=1,2,\ldots,K}\Phi(\bX,\tilde a)^\top\widehat\bTheta_{t-1}\right).\nonumber
\end{align} 
In addition, for any given $\bx \in \cX$ and $a\in\{1,2,\ldots,K\}$,
\begin{align*}
  \left|\Phi(\bx,a)^\top(\widehat\bTheta_t - \bTheta)\right| \le
  \frac{\Delta(\bx)}{3}\quad \text{and} \ \ &
                                              \left|\Phi(\bx,a)^\top(\widehat\bTheta_{t-1} - \bTheta)\right| \le \frac{\Delta(\bx)}{3}
\end{align*}
implies
$\argmax_{\tilde a=1,2,\ldots,K}\Phi(\bx,\tilde
a)^\top\widehat\bTheta_t = \argmax_{\tilde
  a=1,2,\ldots,K}\Phi(\bx,\tilde a)^\top\widehat\bTheta_{t-1}$. Recall
that $\Delta(\bx)$ is defined in Assumption~\ref{ass:uniform_gap} and
represents the gap of expected rewards between best arm and
second-best arm under context $\bx$. Intuitively, if both the
estimated expected reward under $\widehat\bTheta_t$ and
$\widehat\bTheta_{t-1}$ are within $\frac{\Delta(\bx)}{3}$ difference
of the actual expected reward, then the estimated best arm under
$\widehat\bTheta_t$ and $\widehat\bTheta_{t-1}$ should be the same.
Therefore,
\begin{align}
&\mathbb{P}\left(\argmax_{\tilde a=1,2,\ldots,K}\Phi(\bX,\tilde a)^\top\widehat\bTheta_t \not = \argmax_{\tilde a=1,2,\ldots,K}\Phi(\bX,\tilde a)^\top\widehat\bTheta_{t-1}\right)\nonumber\\
\le&\mathbb{P}\left(\exists a\in\{1,2,\ldots,K\}, \Bigr|\Phi(\bX,a)^\top(\widehat\bTheta_t - \bTheta)\Bigr| > \frac{\Delta(\bX)}{3}\ \ \text{or}\ \  \Bigr|\Phi(\bX,a)^\top(\widehat\bTheta_{t-1} - \bTheta)\Bigr| > \frac{\Delta(\bX)}{3}\right)\nonumber\\
\le &\sum_{a=1}^K \mathbb{P}\left( \Bigr|\Phi(\bX,a)^\top(\widehat\bTheta_t - \bTheta)\Bigr| \ge \frac{\Delta(\bX)}{3}\right) + \sum_{a=1}^K \mathbb{P}\left( \Bigr|\Phi(\bX,a)^\top(\widehat\bTheta_{t-1} - \bTheta)\Bigr| \ge \frac{\Delta(\bX)}{3}\right)\nonumber\\
\le &\sum_{a=1}^K \mathbb{P}\left( \Bigr|\Phi(\bX,a)^\top\bar V_t^{\frac{1}{2}}\bar V_t^{-\frac{1}{2}}(\widehat\bTheta_t - \bTheta)\Bigr| \ge \frac{\Delta(\bX)}{3}\right) + \sum_{a=1}^K \mathbb{P}\left( \Bigr|\Phi(\bX,a)^\top\bar V_{t-1}^{\frac{1}{2}}\bar V_{t-1}^{-\frac{1}{2}}(\widehat\bTheta_{t-1} - \bTheta)\Bigr| \ge \frac{\Delta(\bX)}{3}\right)\nonumber\\
\le &\sum_{a=1}^K \mathbb{P}\left( \Bigr|\Phi(\bX,a)^\top\bar V_t^{-1}\Phi(\bX,a)\Bigr|\cdot \Bigr|(\widehat\bTheta_t - \bTheta)^\top\bar V_t(\widehat\bTheta_t - \bTheta)\Bigr| \ge \frac{\Delta(\bX)^2}{9}\right)\nonumber\\
  &+ \sum_{a=1}^K \mathbb{P}\left( \Bigr|\Phi(\bX,a)^\top\bar V_{t-1}^{-1}\Phi(\bX,a)\Bigr|\cdot \Bigr|(\widehat\bTheta_{t-1} - \bTheta)^\top\bar V_{t-1}(\widehat\bTheta_{t-1} - \bTheta)\Bigr| \ge \frac{\Delta(\bX)^2}{9}\right).\nonumber
\end{align}
Notice for any $a$,
\begin{align}
  &\mathbb{P}\left( \Bigr|\Phi(\bX,a)^\top\bar V_t^{-1}\Phi(\bX,a)\Bigr|\cdot \Bigr|(\widehat\bTheta_t - \bTheta)^\top\bar V_t(\widehat\bTheta_t - \bTheta)\Bigr| \ge \frac{\Delta(\bX)^2}{9}\right)\nonumber\\
  \le& \mathbb{P}\left( \Bigr|\Phi(\bX,a)^\top\bar V_t^{-1}\Phi(\bX,a)\Bigr| \ge \frac{\Delta(\bX)}{3\sqrt{t}}\right) + \mathbb{P}\left(\Bigr|(\widehat\bTheta_t - \bTheta)^\top\bar V_t(\widehat\bTheta_t - \bTheta)\Bigr| \ge \frac{\sqrt{t}\Delta(\bX)}{3}\right)\label{01303}
\end{align}
We bound each of these two terms in turn.  For the first term, 
\begin{align}
\mathbb{P}\left( \Bigr|\Phi(\bX,a)^\top\bar V_t^{-1}\Phi(\bX,a)\Bigr| \ge \frac{\Delta(\bX)}{3\sqrt{t}}\right)
\le& \mathbb{P}\left( \norm{\Phi(\bX,a)}^2   \lambda_{\max}\left(\bar V_t^{-1}\right) \ge \frac{\Delta(\bX)}{3\sqrt{t}}\right)\nonumber\\
\le&\mathbb{P}\left(\lambda_{\max}(\bar V_t^{-1})\ge
     \frac{\Delta(\bX)}{3 L^2 \sqrt{t}} \right) \nonumber\\
  =  & O\left(\exp\left(-t^{1-2\zeta}\right)\right),\label{02011}
\end{align}
which follow from Corollary~\ref{corollary:V-1} and
Assumption~\ref{ass:clip_bandit}.  For the second term, according to
\cite{abbasi2011improved}, for any $\delta>0$,
\begin{align}
  &\mathbb{P}\left(\Bigr|(\widehat\bTheta_t - \bTheta)^\top\bar V_t(\widehat\bTheta_t - \bTheta)\Bigr| \ge U \sqrt{dK\log \left( \frac{t^2L/\lambda + 1}{\delta} \right) + \lambda^{1/2}S}
  \right)\le \delta \nonumber,
\end{align}
where $U$ is the parameter for the sub-Gaussian distribution of
rewards and $S$ is the bound for norm of the true parameter, i.e.,
$||\bTheta|| \le S$.  Therefore, let
$\delta = \exp(-t^{\frac{1}{2}})$,
\begin{align}
  &\mathbb{P}\left(\Bigr|(\widehat\bTheta_t - \bTheta)^\top\bar V_t(\widehat\bTheta_t - \bTheta)\Bigr| \ge U \sqrt{dt^{\frac{1}{2}}\log \left( {t^2L/\lambda + 1} \right) + \lambda^{1/2}}
  \right)\le \exp(-t^{\frac{1}{2}}). \nonumber
\end{align}
When $t$ is sufficiently large, $\frac{1}{2}\inf_{\bx\in\mathcal{X}}\Delta(\bx)\sqrt{t}\ge U \sqrt{dt^{\frac{1}{2}}\log \left( {t^2L/\lambda + 1} \right) + \lambda^{1/2}}$ and for any $\bx\in\mathcal{X}$,
\begin{align}
  \mathbb{P}\left(\Bigr|(\widehat\bTheta_t - \bTheta)^\top\bar V_t(\widehat\bTheta_t - \bTheta)\Bigr| \ge \frac{\sqrt{t}\Delta(\bX)}{2}\right) \le \exp(-t^{\frac{1}{2}}) = O\left(\exp\left(-\sqrt{t}\right)\right). \label{013022}
\end{align}
Plugging Equations~\eqref{02011}~and~\eqref{013022} into
Equation~\eqref{01303}, we have
\begin{align}
  &\mathbb{P}\left( \Bigr|\Phi(\bX,a)^\top\bar V_t^{-1}\Phi(\bX,a)\Bigr|\cdot \Bigr|(\widehat\bTheta_t - \bTheta)^\top\bar V_t(\widehat\bTheta_t - \bTheta)\Bigr| \ge \frac{\Delta(\bX)^2}{4}\right)=O\left(\exp\left(-t^{\min\{1-2\zeta,\ 0.5\}}\right)\right).\nonumber
\end{align}
Therefore,
$$
\sup_{a=1,2,..,K}\mathbb{E}\left[|\hat\pi_t(\bX,a) - \hat\pi_{t-1}(\bX,a)|\right] = O\left(\exp\left(-t^{\min\{1-2\zeta, 0.5\}}\right)\right) + |c_t - c_{t-1}|.
$$ 
Under Assumption~\ref{ass:clip_bandit},
$|c_t-c_{t-1}| = |t^{-\zeta} - (t-1)^{-\zeta}|$. When $\zeta = 0$,
this term is 0 and when $0<\zeta<\frac{1}{2}$,
$|c_t-c_{t-1}| = O\left(\frac{1}{t^{1+\zeta}}\right)$. Therefore, the
stability condition is satisfied.

\end{proof}

\subsection{Upper Confidence Bound}
For UCB algorithms, we denote the $(1-\delta)$ confidence set of $\hat\Theta_t$ as $C_t$:
\begin{align*}
C_t:= \left\{ \tilde \bTheta \in \mathbb{R}^{dK} : \| \hat{\bTheta}_t - \tilde \bTheta \|_{\bar V_t} \leq R \sqrt{dK \log \left( \frac{1 + t L^2 / \lambda}{\delta} \right) + \lambda^{1/2} S} \right\},
\end{align*}
and
\begin{align*}
\hat\pi_t(\bX,a) = \begin{cases}
\frac{c_t}{K-1}, & \text{if } a \not = \argmax_{\tilde a=1,2,\ldots,K, \widetilde\bTheta\in C_t}\Phi(\bX,\tilde a)^\top\widetilde\bTheta\\
1-c_t, & \text{if } a = \argmax_{\tilde a=1,2,\ldots,K, \widetilde\bTheta\in C_t}\Phi(\bX,\tilde a)^\top\widetilde\bTheta
\end{cases}.
\end{align*}
For any $\bx\in\mathcal{X}$, if $$
\argmax_{\tilde a=1,2,\ldots,K,\widetilde\bTheta\in C_t}\Phi(\bx,\tilde a)^\top\widetilde\bTheta = \argmax_{\tilde a=1,2,\ldots,K,\widetilde\bTheta\in C_{t-1}}\Phi(\bx,\tilde a)^\top\widetilde\bTheta,
$$ 
then $\sup_{a=1,2,\ldots,K}|\hat\pi_t(\bx,a) - \hat\pi_{t-1}(\bx,a)|\le |c_t - c_{t-1}|$. Therefore,
\begin{align}
  &\sup_{a=1,2,..,K}\mathbb{E}\left[|\hat\pi_t(\bX,a) - \hat\pi_{t-1}(\bX,a)|\right]\nonumber\\
  \le &\mathbb{E}\left[\sup_{a=1,2,..,K}|\hat\pi_t(\bX,a) - \hat\pi_{t-1}(\bX,a)|\right] \nonumber\\
  \le& \mathbb{P}\left(\argmax_{\tilde a=1,2,\ldots,K,\widetilde\bTheta\in C_t}\Phi(\bX,\tilde a)^\top\widetilde\bTheta = \argmax_{\tilde a=1,2,\ldots,K,\widetilde\bTheta\in C_{t-1}}\Phi(\bX,\tilde a)^\top\widetilde\bTheta\right)\times|c_t - c_{t-1}|\nonumber\\
  &+ \mathbb{P}\left(\argmax_{\tilde a=1,2,\ldots,K,\widetilde\bTheta\in C_t}\Phi(\bX,\tilde a)^\top\widetilde\bTheta \not = \argmax_{\tilde a=1,2,\ldots,K,\widetilde\bTheta\in C_{t-1}}\Phi(\bX,\tilde a)^\top\widetilde\bTheta\right)\nonumber\\
  \le& |c_t - c_{t-1}| + \mathbb{P}\left(\argmax_{\tilde a=1,2,\ldots,K,\widetilde\bTheta\in C_t}\Phi(\bX,\tilde a)^\top\widetilde\bTheta \not = \argmax_{\tilde a=1,2,\ldots,K,\widetilde\bTheta\in C_{t-1}}\Phi(\bX,\tilde a)^\top\widetilde\bTheta\right).\nonumber
\end{align}
For any given $\bx \in \cX$, $a\in\{1,2,\ldots,K\}$,
$\widetilde\bTheta_t\in C_t$, $\widetilde\bTheta_{t-1}\in C_{t-1}$, we
have,
$$
\Bigr|\Phi(\bx,a)^\top(\widetilde\bTheta_t - \bTheta)\Bigr| \le
\frac{\Delta(\bx)}{3},\quad \text{and} \ \
\Bigr|\Phi(\bx,a)^\top(\widetilde\bTheta_{t-1} - \bTheta)\Bigr|
\le \frac{\Delta(\bx)}{3},
$$
which implies
$\argmax_{\tilde a=1,2,\ldots,K,\widetilde\bTheta\in
  C_t}\Phi(\bX,\tilde a)^\top\widetilde\bTheta  = \argmax_{\tilde
  a=1,2,\ldots,K,\widetilde\bTheta\in C_{t-1}}\Phi(\bX,\tilde
a)^\top\widetilde\bTheta$.  Therefore,
\begin{align}
&\mathbb{P}\left(\arg\max_{\tilde a=1,2,\ldots,K,\widetilde\bTheta\in C_t}\Phi(\bX,\tilde a)^\top\widetilde\bTheta \not = \argmax_{\tilde a=1,2,\ldots,K,\widetilde\bTheta\in C_{t-1}}\Phi(\bX,\tilde a)^\top\widetilde\bTheta\right)\nonumber\\
 \le &\sum_{a=1}^K \mathbb{P}\left( \sup_{\widetilde\bTheta_t\in C_t}\Bigr|\Phi(\bX,a)^\top\bar V_t^{\frac{1}{3}}\bar V_t^{-\frac{1}{2}}(\widetilde\bTheta_t - \widehat \bTheta_t + \widehat \bTheta_t - \bTheta)\Bigr| \ge \frac{\Delta(\bX)}{3}\right) \nonumber\\
 &+ \sum_{a=1}^K \mathbb{P}\left( \sup_{\widetilde\bTheta_{t-1}\in C_{t-1}}\Bigr|\Phi(\bX,a)^\top\bar V_{t-1}^{\frac{1}{2}}\bar V_{t-1}^{-\frac{1}{2}}(\widetilde\bTheta_{t-1} - \widehat \bTheta_{t-1} + \widehat \bTheta_{t-1}- \bTheta)\Bigr| \ge \frac{\Delta(\bX)}{3}\right)\nonumber\\
 \le&\sum_{a=1}^K \mathbb{P}\left(\sup_{\widetilde\bTheta_t\in C_t} \Bigr|\Phi(\bX,a)^\top\bar V_t^{\frac{1}{2}}\bar V_t^{-\frac{1}{2}}(\widetilde\bTheta_t - \widehat \bTheta_t)\Bigr| \ge \frac{\Delta(\bX)}{6}\right) +\sum_{a=1}^K \mathbb{P}\left( \Bigr|\Phi(\bX,a)^\top\bar V_t^{\frac{1}{2}}\bar V_t^{-\frac{1}{2}}(\widehat \bTheta_t - \bTheta)\Bigr| \ge \frac{\Delta(\bX)}{6}\right) \nonumber\\
 &+ \sum_{a=1}^K \mathbb{P}\left( \sup_{\widetilde\bTheta_{t-1}\in C_{t-1}}\Bigr|\Phi(\bX,a)^\top\bar V_{t-1}^{\frac{1}{2}}\bar V_{t-1}^{-\frac{1}{2}}(\widetilde\bTheta_{t-1} - \widehat \bTheta_{t-1})\Bigr| \ge \frac{\Delta(\bX)}{6}\right) \nonumber\\
 &+\sum_{a=1}^K \mathbb{P}\left( \Bigr|\Phi(\bX,a)^\top\bar V_{t-1}^{\frac{1}{2}}\bar V_{t-1}^{-\frac{1}{2}}(\widehat \bTheta_{t-1} - \bTheta)\Bigr| \ge \frac{\Delta(\bX)}{6}\right). \nonumber
\end{align}
The previous results for the $\epsilon$-greedy algorithm have shown,
\begin{align}
&\sum_{a=1}^K \mathbb{P}\left(\sup_{\widetilde\bTheta_t\in C_t} \Bigr|\Phi(\bX,a)^\top\bar V_t^{\frac{1}{2}}\bar V_t^{-\frac{1}{2}}(\tilde\Theta_t - \widehat \bTheta_t)\Bigr| \ge \frac{\Delta(\bX)}{6}\right)+\sum_{a=1}^K \mathbb{P}\left( \Bigr|\Phi(\bX,a)^\top\bar V_{t-1}^{\frac{1}{2}}\bar V_{t-1}^{-\frac{1}{2}}(\widehat \bTheta_{t-1} - \bTheta)\Bigr| \ge \frac{\Delta(\bX)}{6}\right)\nonumber\\
=&O\left(\exp\left(-t^{\min\{1-2\zeta,\ 0.5\}}\right)\right).\nonumber
\end{align}
We can bound the first term by as follows,
\begin{align}
  &\mathbb{P}\left( \sup_{\widetilde\bTheta_t\in C_t}\Bigr|\Phi(\bX,a)^\top\bar V_t^{\frac{1}{2}}\bar V_t^{-\frac{1}{2}}(\widetilde\bTheta_t - \widehat \bTheta_t )\Bigr| \ge \frac{\Delta(\bX)}{6}\right) \nonumber\\
  \le& \mathbb{P}\left(\sup_{\widetilde\bTheta_t\in C_t} \Bigr|\Phi(\bX,a)^\top\bar V_t^{-1}\Phi(\bX,a)\Bigr|\cdot \Bigr|(\widetilde\bTheta_t - \widehat \bTheta_t)^\top\bar V_t(\tilde\Theta_t - \widehat \bTheta_t)\Bigr| \ge \frac{\Delta(\bX)^2}{36}\right)\nonumber\\
  \le& \mathbb{P}\left( \Bigr|\Phi(\bX,a)^\top\bar V_t^{-1}\Phi(\bX,a)\Bigr| \ge \frac{\Delta(\bX)}{6\sqrt{t}}\right) + \mathbb{P}\left(\sup_{\widetilde\bTheta_t\in C_t}\Bigr|(\hat\Theta_t - \widetilde\bTheta_t)^\top\bar V_t(\widehat\bTheta_t - \widetilde\bTheta_t)\Bigr| \ge \frac{\sqrt{t}\Delta(\bX)}{6}\right)\nonumber
\end{align}
Since by the definition of $\widetilde\bTheta_t$, 
\begin{align*}
  \sup_{\widetilde\bTheta_t\in C_t} \| \hat{\bTheta} - \tilde \bTheta \|_{\bar V_t} \leq U \sqrt{dK \log \left( \frac{1 + t L^2 / \lambda}{\delta} \right) + \lambda^{1/2} S},
\end{align*}
we can use the exact same proof as for the $\epsilon$-greedy algorithm to show that 
\begin{align*}
  & \mathbb{P}\left( \Bigr|\Phi(\bX,a)^\top\bar
  V_t^{-1}\Phi(\bX,a)\Bigr| \ge \frac{\Delta(\bX)}{6\sqrt{t}}\right) +
  \mathbb{P}\left(\Bigr|(\widehat\bTheta_t -
  \widetilde\bTheta_t)^\top\bar V_t(\widehat\bTheta_t -
  \widetilde\bTheta_t)\Bigr| \ge \frac{\sqrt{t}\Delta(\bX)}{6}\right)
  \\
  = & O\left(\exp\left(-t^{\min\{1-2\zeta,\ 0.5\}}\right)\right) \nonumber
\end{align*}
and 
\begin{align*}
  \mathbb{P}\left( \sup_{\widetilde\bTheta_t\in C_t}\Bigr|\Phi(\bX,a)^\top\bar V_t^{\frac{1}{2}}\bar V_t^{-\frac{1}{2}}(\widetilde\bTheta_t - \hat \bTheta_t )\Bigr| \ge \frac{\Delta(\bX)}{6}\right)= O\left(\exp\left(-t^{\min\{1-2\zeta,\ 0.5\}}\right)\right).
\end{align*}
Similarly,
\begin{align}
  &\mathbb{P}\left( \sup_{\widetilde\bTheta_{t-1}\in C_{t-1}}\Bigr|\Phi(\bX,a)^\top\bar V_{t-1}^{\frac{1}{2}}\bar V_{t-1}^{-\frac{1}{2}}(\widetilde\bTheta_{t-1} - \hat \bTheta_{t-1})\Bigr| \ge \frac{\Delta(\bX)}{6}\right) = O\left(\exp\left(-t^{\min\{1-2\zeta,\ 0.5\}}\right)\right).\nonumber
\end{align}
Thus, we have
$$\sup_{a=1,2,..,K}\mathbb{E}\left[|\hat\pi_t(\bX,a) -
  \hat\pi_{t-1}(\bX,a)|\right] = O\left(\exp\left(-t^{\min\{1-2\zeta,
      0.5\}}\right)\right) + |c_t-c_{t-1}|,$$ implying that the
stability condition is satisfied.

\subsection{Thompson Sampling}

Suppose the prior distribution of $\boldsymbol\bTheta$ is a Gaussian
distribution with mean $\bmu_0$ and covariance matrix
$\boldsymbol\Sigma_0$. For Thompson Sampling, after the $t$-th
iteration, the posterior distribution of $\boldsymbol\Theta$ is a
Gaussian distribution with mean $\bmu_t$ and covariance matrix
$\boldsymbol\Sigma_t$, where
\begin{align*}
\bmu_t &= \boldsymbol\Sigma_t\left(\boldsymbol\Sigma_0^{-1}\bmu_0 +
         \sum_{s=1}^t \Phi(\bX_s, A_s) R_s\right),\nonumber\\
\boldsymbol\Sigma_t &=
                      \left(\boldsymbol\Sigma_0^{-1}+\sum_{s=1}^t\Phi(\bX_s,
                      A_s)\Phi(\bX_s, A_s)^\top\right)^{-1}.
\end{align*} 
For simplicity, assume the prior $\bmu_0=0$ and
$\boldsymbol\Sigma_0^{-1} = \bV_0 = \lambda I$. Then, 
\begin{align*}
\bmu_t &= \widehat\Theta_t = \left(\lambda I + \sum_{s=1}^t \Phi(\bX_s,
         A_s)\Phi(\bX_s, A_s)^\top\right)^{-1}\sum_{s=1}^t
         \Phi(\bX_s, A_s) R_s,\nonumber\\
\boldsymbol\Sigma_t &= \left(\lambda I + \sum_{s=1}^t \Phi(\bX_s, A_s)\Phi(\bX_s, A_s)^\top\right)^{-1}.
\end{align*}
Therefore,  
\begin{align}
  \hat\pi_t(\bx,a) = \begin{cases}
    \frac{c_t}{K-1}, & \text{if } \mathbb{P}_{\bgamma_t}\left(\bgamma_t^\top\Phi(\bx,a) = \max_{1\le a\le K} \bgamma_t^\top\Phi(\bx,a)\right) < \frac{c_t}{K-1}\\
    \alpha_t\mathbb{P}_{\bgamma_t}\left(\bgamma_t^\top\Phi(\bx,a) = \max_{1\le a\le K} \bgamma_t^\top\Phi(\bx,a)\right) & \text{if } \frac{c_t}{K-1}\le\mathbb{P}_{\bgamma_t}\left(\bgamma_t^\top\Phi(\bx,a) = \max_{1\le a\le K} \bgamma_t^\top\Phi(\bx,a)\right)\le 1-c_t\\
    1-c_t, & \text{if } \mathbb{P}_{\bgamma_t}\left(\bgamma_t^\top\Phi(\bx,a) = \max_{1\le a\le K} \bgamma_t^\top\Phi(\bx,a)\right) > 1-c_t
  \end{cases} \nonumber
\end{align} 
where $\bgamma_t\sim N(\bmu_t,\Sigma_t)$.  Thus, for any
$a\not = a^*(\bx)$ where $a^*(\bx)$ is defined in
Assumption~\ref{ass:uniform_gap} and is the optimal arm under context
$\bx$, we have,
\begin{align*}
  & \hat\pi_t(\bx,a) > \frac{c_t}{K-1} \nonumber\\
\Longleftrightarrow \ &  \mathbb{P}_{\bgamma_t}\left(\bgamma_t^\top\Phi(\bx,a) = \max_{1\le a\le K} \bgamma_t^\top\Phi(\bx,a)\right) > \frac{c_t}{K-1} \nonumber\\
\Longleftrightarrow \ & \mathbb{P}_{\bgamma_t}\left(\bgamma_t^\top\Phi(\bx,a)> \bgamma_t^\top\Phi(\bx, a^*(\bx))\right) > \frac{c_t}{K-1} \nonumber\\
\Longleftrightarrow \ &   \mathbb{P}_{\bgamma_t}\left((\bgamma_t^\top - \bTheta^\top)\Phi(\bx,a)> (\bgamma_t^\top - \bTheta^\top)\Phi(\bx, a^*(\bx)) + \bTheta^\top(\Phi(\bx,a^*(\bx))-\Phi(\bx,a))\right) > \frac{c_t}{K-1} \nonumber\\
\Longleftrightarrow \ &  \mathbb{P}_{\bgamma_t}\left((\bgamma_t^\top - \bTheta^\top)\Phi(\bx,a)> (\bgamma_t^\top - \bTheta^\top)\Phi(\bx, a^*(\bx)) + \Delta(\bx)\right) > \frac{c_t}{K-1} \nonumber\\
\Longleftrightarrow \ &  \mathbb{P}_{\bgamma_t}\left((\bgamma_t^\top - \bTheta^\top)(\Phi(\bx,a)-\Phi(\bx, a^*(\bx)))> \inf_{\bx\in\mathcal{X}}\Delta(\bx)\right) > \frac{c_t}{K-1}. \nonumber
\end{align*}
On the other hand, by Markov inequality, we have,
\begin{align}
  &\mathbb{P}_{\bgamma_t}\left((\bgamma_t^\top - \bTheta^\top)(\Phi(\bx,a)-\Phi(\bx, a^*(\bx)))> \inf_{\bx\in\mathcal{X}}\Delta(\bx)\right)\nonumber\\
\le&\frac{\mathbb{E}_{\bgamma_t}\left[\left\{(\bgamma_t^\top - \bTheta^\top)(\Phi(\bx,a)-\Phi(\bx, a^*(\bx)))\right\}^2\right]}{|\inf_{\bx\in\mathcal{X}}\Delta(\bx)|^2}\nonumber\\
\le&\frac{\left\{(\mu_t^\top - \bTheta^\top)(\Phi(\bx,a)-\Phi(\bx, a^*(\bx)))\right\}^2+(\Phi(\bx,a)-\Phi(\bx, a^*(\bx)))^\top\boldsymbol\Sigma_t(\Phi(\bx,a)-\Phi(\bx, a^*(\bx)))}{|\inf_{\bx\in\mathcal{X}}\Delta(\bx)|^2}\nonumber\\
\le&\frac{L^2||\mu_t - \bTheta||^2+L^2\lambda_{\max}(\boldsymbol\Sigma_t)}{|\inf_{\bx\in\mathcal{X}}\Delta(\bx)|^2}\nonumber
\end{align}
Therefore, we have,
\begin{align}
  \hat\pi_t(\bx,a) > \frac{c_t}{K-1} \Rightarrow  \frac{L^2||\mu_t - \bTheta||^2+L^2\lambda_{\max}(\Sigma_t)}{|\inf_{\bx\in\mathcal{X}}\Delta(\bx)|^2} &> \frac{c_t}{K-1}. \nonumber
\end{align}
Utilizing this result, we obtain,
\begin{align*}
  &\sup_{a=1,2,..,K}\mathbb{E}\left[|\hat\pi_t(\bX,a) - \hat\pi_{t-1}(\bX,a)|\right]\nonumber\\
  \le &\mathbb{E}\left[\sup_{a=1,2,..,K}|\hat\pi_t(\bX,a) - \hat\pi_{t-1}(\bX,a)|\right] \nonumber\\
  \le& |c_t-c_{t-1}|\mathbb{P}\left(\forall a\not = a^*(\bX),\hat\pi_t(\bX,a) = \frac{c_t}{K-1},\hat\pi_{t-1}(\bX,a)=\frac{c_{t-1}}{K-1} \right)\nonumber\\
  &+\mathbb{P}\left(\exists a \not = a^*(\bX),\hat\pi_t(\bX,a) > \frac{c_t}{K-1}\ \ \text{or}\ \ \hat\pi_{t-1}(\bX,a)>\frac{c_{t-1}}{K-1} \right)\nonumber\\
  \le&|c_t-c_{t-1}| + \mathbb{P}\left(\exists a \not = a^*(\bX),\hat\pi_t(\bX,a) > \frac{c_t}{K-1}\ \ \text{or}\ \ \hat\pi_{t-1}(\bX,a)>\frac{c_{t-1}}{K-1} \right)\nonumber\\
  \le&|c_t-c_{t-1}| + K \mathbb{P}\left( \frac{L^2||\mu_t - \bTheta||^2+L^2\lambda_{\max}(\boldsymbol\Sigma_t)}{|\inf_{\bx\in\mathcal{X}}\Delta(\bx)|^2} > \frac{c_t}{K-1}\right) + K \mathbb{P}\left( \frac{L^2||\mu_{t-1} - \bTheta||^2+L^2\lambda_{\max}(\boldsymbol\Sigma_{t-1})}{|\inf_{\bx\in\mathcal{X}}\Delta(\bx)|^2} > \frac{c_{t-1}}{K-1}\right)
\end{align*}
Notice the definition of $\boldsymbol\mu_t$ is the same as that of
$\widehat\bTheta_t$ in the $\epsilon$-greedy algorithm.  Similarly,
the definition of $\boldsymbol\Sigma_t$ is the same as $\bar V_t^{-1}$
in the $\epsilon$-greedy algorithm.  Therefore, using the results in the $\epsilon$-greedy algorithm:
\begin{align}
  \mathbb{P}\left(\Bigr|(\widehat\bTheta_t - \bTheta)^\top\bar V_t(\widehat\bTheta_t - \bTheta)\Bigr| \ge U \sqrt{dK\log \left( \frac{t^2L/\lambda + 1}{\delta} \right) + \lambda^{1/2}S}
  \right)&\le \delta \nonumber\\
\Longrightarrow \  \mathbb{P}\left(\Bigr|(\widehat\bTheta_t - \bTheta)^\top(\widehat\bTheta_t - \bTheta)\Bigr|\cdot\lambda_{\min}(\bar V_t) \ge U \sqrt{dK\log \left( \frac{t^2L/\lambda + 1}{\delta} \right) + \lambda^{1/2}S}
  \right)&\le \delta \label{0202temp1}
\end{align}
and 
\begin{align*}
  \mathbb{P}\left(\lambda_{\max}(\bar V_t^{-1}) \ge \frac{1}{\lambda + \frac{1}{2}\lambda_{\min}\left(\mathbb{E}\left[\bX\bX^\top\right]\right)\sum_{s=1}^t c_{s-1}}\right) = O\left(\exp\left(-t^{1-2\zeta}\right)\right),
\end{align*}
we can bound the probability $\mathbb{P}\left( \frac{L^2||\mu_t - \bTheta||^2+L^2\lambda_{\max}(\boldsymbol\Sigma_t)}{|\inf_{\bx\in\mathcal{X}}\Delta(\bx)|^2} > \frac{c_t}{K-1}\right)$ as below:
\begin{align} 
&~\mathbb{P}\left( \frac{L^2||\mu_t - \bTheta||^2+L^2\lambda_{\max}(\boldsymbol\Sigma_t)}{|\inf_{\bx\in\mathcal{X}}\Delta(\bx)|^2} > \frac{c_t}{K-1}\right)\nonumber\\
=&~\mathbb{P}\left( \frac{L^2||\widehat\bTheta_t - \bTheta||^2+L^2\lambda_{\max}(\bar V_t^{-1})}{|\inf_{\bx\in\mathcal{X}}\Delta(\bx)|^2} > \frac{c_t}{K-1}\right)\nonumber\\
\le&~\mathbb{P}\left( \frac{L^2||\widehat\bTheta_t - \bTheta||^2+L^2\lambda_{\max}(\bar V_t^{-1})}{|\inf_{\bx\in\mathcal{X}}\Delta(\bx)|^2} > \frac{c_t}{(K-1)}, \lambda_{\max}(\bar V_t^{-1}) < \frac{1}{\lambda + \frac{1}{2}\lambda_{\min}\left(\mathbb{E}\left[\bX\bX^\top\right]\right)\sum_{s=1}^t c_{s-1}}\right)\nonumber\\
&~+\mathbb{P}\left(\lambda_{\max}(\bar V_t^{-1}) \ge \frac{1}{\lambda + \frac{1}{2}\lambda_{\min}\left(\mathbb{E}\left[\bX\bX^\top\right]\right)\sum_{s=1}^t c_{s-1}}\right)\nonumber\\
\le&~\mathbb{P}\left( \frac{L^2||\widehat\bTheta_t - \bTheta||^2\lambda_{\min}(\bar V_t)+L^2}{|\inf_{\bx\in\mathcal{X}}\Delta(\bx)|^2} > \frac{c_t\left(\lambda + \frac{1}{2}\lambda_{\min}\left(\mathbb{E}\left[\bX\bX^\top\right]\right)\sum_{s=1}^t c_{s-1}\right)}{(K-1)}\right)+O\left(\exp\left(-t^{1-2\zeta}\right)\right).\nonumber
\end{align}
Using Equation~\eqref{0202temp1}, we can show that
\begin{align*}
  \mathbb{P}\left(\Bigr|(\widehat\bTheta_t -
  \bTheta)^\top(\widehat\bTheta_t -
  \bTheta)\Bigr|\cdot\lambda_{\min}(\bar V_t) \ge U t^{2-4\zeta}\sqrt{dK\log\left(t^2L/\lambda +1\right)+\lambda^{\frac{1}{2}}S}
  \right)&= O\left(\exp\left(-t^{4-8\zeta}\right)\right)
\end{align*}
Since $c_t\left(\lambda + \frac{1}{2}\lambda_{\min}\left(\mathbb{E}\left[\bX\bX^\top\right]\right)\sum_{s=1}^t c_{s-1}\right) \sim t^{1-2\zeta}$ and $2-4\zeta > 1-2\zeta$, we have
\begin{align*}
  \mathbb{P}\left( \frac{L^2||\mu_t - \bTheta||^2+L^2\lambda_{\max}(\boldsymbol\Sigma_t)}{|\inf_{\bx\in\mathcal{X}}\Delta(\bx)|^2} > \frac{c_t}{K-1}\right) = O\left(\exp\left(-t^{4-8\zeta}\right)\right) + O\left(\exp\left(-t^{1-2\zeta}\right)\right).
\end{align*}
Thus, the stability condition holds.

\subsection{Proof of Lemma~\ref{lemma:infinite_sampling}}
\label{lemma:infinite_sampling_proof}

\begin{proof}
  By definition, we have,
  \begin{align*}
  \bar V_t &= \lambda I + \sum_{s=1}^t \Phi(\bX_s,
             A_s)\Phi(\bX_s, A_s)^\top \nonumber\\
  &= \begin{bmatrix}
    \lambda I + \sum_{s=1}^t \bX_s\bX_s^\top I(A_s = 1) &\bf{0} &\cdots &\bf{0} \\
    \bf{0}& \lambda I +\sum_{s=1}^t \bX_s\bX_s^\top I(A_s = 2) & \cdots &\bf{0} \\
    \vdots & \vdots & \ddots & \vdots\\
    \bf{0} & \bf{0} & \cdots & \lambda I +\sum_{s=1}^t \bX_s\bX_s^\top I(A_s = K)
  \end{bmatrix}.
  \end{align*}
For any $a\in\{1,2,\ldots,K\}$, 
\begin{align}
&\mathbb{P}\left(\lambda_{\min}\left(\lambda I + \sum_{s=1}^t \bX_s\bX_s^\top I(A_s = a)\right) \le \lambda + \frac{1}{2}\lambda_{\min}\left( \mathbb{E}\left[\bX\bX^\top\right]\right)\sum_{s=1}^t c_{s-1}\right) \nonumber\\
= \ &\mathbb{P}\left(\lambda_{\min}\left(\sum_{s=1}^t \bX_s\bX_s^\top I(A_s = a)\right) \le \frac{1}{2}\lambda_{\min}\left( \mathbb{E}\left[\bX\bX^\top\right]\right)\sum_{s=1}^t c_{s-1}\right)\nonumber\\
= \ &\mathbb{P}\left(\inf_{\balpha\in\mathbb{R}^d}\balpha^\top\left(\sum_{s=1}^t \bX_s\bX_s^\top I(A_s = a)\right)\balpha \le \frac{1}{2}\sum_{s=1}^t c_{s-1}\inf_{\bbeta\in\mathbb{R}^d}\bbeta^\top \mathbb{E}\left[\bX\bX^\top\right]\bbeta\right) \nonumber\\
= \ &\mathbb{P}\left(\inf_{\balpha\in\mathbb{R}^d}\balpha^\top\left(\sum_{s=1}^t \bX_s\bX_s^\top I(A_s = a)\right)\balpha - \frac{1}{2}\sum_{s=1}^t c_{s-1}\inf_{\bbeta\in\mathbb{R}^d}\bbeta^\top \mathbb{E}\left[\bX\bX^\top\right]\bbeta\le 0\right). \nonumber
\end{align}
Note that for any $\balpha\in\mathbb{R}^d$, we have,
\begin{align*}
  \frac{1}{2}\sum_{s=1}^t c_{s-1}\inf_{\bbeta\in\mathbb{R}^d}\bbeta^\top \mathbb{E}\left[\bX\bX^\top\right]\bbeta \le \frac{1}{2}\sum_{s=1}^t c_{s-1}\balpha^\top \mathbb{E}\left[\bX\bX^\top\right]\balpha.
\end{align*}
Therefore,
\begin{align*}
  &\mathbb{P}\left(\inf_{\balpha\in\mathbb{R}^d}\balpha^\top\left(\sum_{s=1}^t \bX_s\bX_s^\top I(A_s = a)\right)\balpha - \frac{1}{2}\sum_{s=1}^t c_{s-1}\inf_{\bbeta\in\mathbb{R}^d}\bbeta^\top \mathbb{E}\left[\bX\bX^\top\right]\bbeta\le 0\right) \nonumber\\
  \le&\mathbb{P}\left(\inf_{\balpha\in\mathbb{R}^d}\balpha^\top\left(\sum_{s=1}^t \bX_s\bX_s^\top I(A_s = a) - \frac{1}{2} \mathbb{E}\left[\bX\bX^\top\right]\sum_{s=1}^t c_{s-1}\right)\balpha\le 0\right).
\end{align*}
Furthermore,
\begin{align}
  &\sum_{s=1}^t \bX_s\bX_s^\top I(A_s = a) - \frac{1}{2} \mathbb{E}\left[\bX\bX^\top\right]\sum_{s=1}^t c_{s-1} \nonumber\\
  =&\sum_{s=1}^t \bX_s\bX_s^\top\left(I(A_s = a)-\hat\pi_{s-1}(\bX_s,a)\right) + \sum_{s=1}^t \bX_s\bX_s^\top\left(\hat\pi_{s-1}(\bX_s,a) - c_{s-1}\right)\nonumber\\
  &+ \sum_{s=1}^t \left(\bX_s \bX_s^\top - \mathbb{E}\left[\bX\bX^\top\right]\right)c_{s-1} + \frac{1}{2} \mathbb{E}\left[\bX\bX^\top\right]\sum_{s=1}^t c_{s-1}\nonumber.
\end{align}
Since $\hat\pi_{s-1}(\bX_s,a)\ge c_{s-1}$ for all $\bX_{s},s,a$, and $\bX_s\bX_s^\top$ is a semi-positive definite matrix, $\forall \balpha\in\mathbb{R}^d$,
$$
\balpha^\top\left(\sum_{s=1}^t \bX_s\bX_s^\top\left(\hat\pi_{s-1}(\bX_s,a) - c_{s-1}\right)\right)\balpha \ge 0.
$$
Therefore,
\begin{align}
&\mathbb{P}\left(\inf_{\balpha\in\mathbb{R}^d}\balpha^\top\left(\sum_{s=1}^t \bX_s\bX_s^\top I(A_s = a) - \frac{1}{2} \mathbb{E}\left[\bX\bX^\top\right]\sum_{s=1}^t c_{s-1}\right)\balpha\le 0\right)\nonumber\\
\le &\mathbb{P}\left(\inf_{\balpha\in\mathbb{R}^d}\balpha^\top\left(\sum_{s=1}^t \bX_s\bX_s^\top\left(I(A_s = a)-\hat\pi_{s-1}(\bX_s,a)\right) + \sum_{s=1}^t \left(\bX_s \bX_s^\top - \mathbb{E}\left[\bX\bX^\top\right]\right)c_{s-1}\right.\right.\nonumber\\
 &\quad \left.\left.+\frac{1}{2} \mathbb{E}\left[\bX\bX^\top\right]\sum_{s=1}^t c_{s-1}\right)\balpha \le 0\right)\nonumber\\
\le & \mathbb{P}\left(\inf_{\balpha\in\mathbb{R}^d}\balpha^\top\left(\sum_{s=1}^t \bX_s\bX_s^\top\left(I(A_s = a)-\hat\pi_{s-1}(\bX_s,a)\right)+ \frac{1}{4}\mathbb{E}\left[\bX\bX^\top\right]\sum_{s=1}^t c_{s-1}\right)\balpha\le 0\right)\nonumber\\
&+ \mathbb{P}\left(\inf_{\balpha\in\mathbb{R}^d}\balpha^\top\left(\sum_{s=1}^t \left(\bX_s \bX_s^\top - \mathbb{E}\left[\bX\bX^\top\right]\right)c_{s-1} + \frac{1}{4}\mathbb{E}\left[\bX\bX^\top\right]\sum_{s=1}^t c_{s-1}\right)\balpha\le 0\right)\nonumber\\
\le&\mathbb{P}\left(\lambda_{\min}\left(\sum_{s=1}^t \bX_s\bX_s^\top\left(I(A_s = a)-\hat\pi_{s-1}(\bX_s,a)\right)\right) \le -\frac{\sum_{s=1}^t c_{s-1}}{4}\lambda_{\min}\left(\mathbb{E}\left[\bX\bX^\top\right]\right)\right)\label{01301}\\
&+\mathbb{P}\left(\lambda_{\min}\left(\sum_{s=1}^t \left(\bX_s \bX_s^\top - \mathbb{E}\left[\bX\bX^\top\right]\right)c_{s-1}\right) \le -\frac{\sum_{s=1}^t c_{s-1}}{4}\lambda_{\min}\left(\mathbb{E}\left[\bX\bX^\top\right]\right)\right)\label{01302}
\end{align}

We now bound these two terms.  For the term in Equation~\eqref{01301},
notice $\sum_{s=1}^t \bX_s\bX_s^\top(I(A_s=a)-\hat\pi_{t-1}(\bX_s,a))$
is a matrix martingale with respect to the filtration
$\mathcal{F}_t =
\sigma(\bX_1,A_1,R_1,\ldots,\bX_t,A_t,R_t)$. Furthermore, for all $t$,
$\norm{(\bX_s\bX_s^\top(I(A_s=a)-\hat\pi_{t-1}(\bX_s,a)))^2}$ is
bounded by $2L^4$. Therefore, by the matrix Azuma-Hoeffding inequality,
we have
\begin{align}
  &\mathbb{P}\left(\lambda_{\min}\left(\sum_{s=1}^t \bX_s\bX_s^\top\left(I(A_s = a)-\hat\pi_{s-1}(\bX_s,a)\right)\right) \le -\frac{\sum_{s=1}^t c_{s-1}}{4}\lambda_{\min}\left(\mathbb{E}\left[\bX\bX^\top\right]\right)\right)\nonumber\\
  =&\mathbb{P}\left(\lambda_{\max}\left(\sum_{s=1}^t \bX_s\bX_s^\top\left(-I(A_s = a)+\hat\pi_{s-1}(\bX_s,a)\right)\right) \ge \frac{\sum_{s=1}^t c_{s-1}}{4}\lambda_{\min}\left(\mathbb{E}\left[\bX\bX^\top\right]\right)\right)\nonumber\\
  \le& d \cdot
       \exp\left(-\frac{\frac{1}{16}\left(\sum_{s=1}^{t}c_{s-1}\right)^2\lambda_{\min}\left(\mathbb{E}\left[\bX\bX^\top\right]\right)^2}{8\times
       2L^4t}\right) \nonumber \\
  = & d \cdot \exp\left(-\frac{\left(\sum_{s=1}^{t}c_{s-1}\right)^2\lambda_{\min}\left(\mathbb{E}\left[\bX\bX^\top\right]\right)^2}{256L^4t}\right).\nonumber
\end{align}

For the term in Equation~\eqref{01302}, notice
$\sum_{s=1}^t (\bX_s\bX_s^\top -
\mathbb{E}\left[\bX\bX^\top\right])c_{s-1}$ is the sum of $t$
independent random variables with mean $\bf{0}$, and
$\norm{(\bX_s\bX_s^\top -
  \mathbb{E}\left[\bX\bX^\top\right])^2c_{s-1}^2}$ is bounded by
$4c_{s-1}^2L^4$. Therefore, we can apply the Azuma-Hoeffding
inequality as done above,
\begin{align}
  &\mathbb{P}\left(\lambda_{\min}\left(\sum_{s=1}^t \left(\bX_s \bX_s^\top - \mathbb{E}\left[\bX\bX^\top\right]\right)c_{s-1}\right) \le -\frac{\sum_{s=1}^t c_{s-1}}{4}\lambda_{\min}\left(\mathbb{E}\left[\bX\bX^\top\right]\right)\right)\nonumber\\
  =&\mathbb{P}\left(\lambda_{\max}\left(\sum_{s=1}^t \left(-\bX_s \bX_s^\top + \mathbb{E}\left[\bX\bX^\top\right]\right)c_{s-1}\right) \ge \frac{\sum_{s=1}^t c_{s-1}}{4}\lambda_{\min}\left(\mathbb{E}\left[\bX\bX^\top\right]\right)\right)\nonumber\\
  \le& d \cdot
       \exp\left(-\frac{\frac{1}{16}\left(\sum_{s=1}^{t}c_{s-1}\right)^2\lambda_{\min}\left(\mathbb{E}\left[\bX\bX^\top\right]\right)^2}{8\sum_{s=1}^t
       4c_{s-1}^2L^4}\right) \nonumber \\
  = & d \cdot \exp\left(-\frac{\left(\sum_{s=1}^{t}c_{s-1}\right)^2\lambda_{\min}\left(\mathbb{E}\left[\bX\bX^\top\right]\right)^2}{512L^4\sum_{s=1}^t c_{s-1}^2}\right)\nonumber
\end{align}

Therefore, we have
\begin{align}
&\mathbb{P}\left(\lambda_{\min}\left(\bar V_t\right) \le \lambda + \frac{1}{2}\lambda_{\min}\left(\mathbb{E}\left[\bX\bX^\top\right]\right)\sum_{s=1}^t c_{s-1}\right) \nonumber\\
\le & d\cdot \exp\left(-\frac{\left(\sum_{s=1}^{t}c_{s-1}\right)^2\lambda_{\min}\left(\mathbb{E}\left[\bX\bX^\top\right]\right)^2}{256L^4t}\right) + d\cdot \exp\left(-\frac{\left(\sum_{s=1}^{t}c_{s-1}\right)^2\lambda_{\min}\left(\mathbb{E}\left[\bX\bX^\top\right]\right)^2}{512L^4\sum_{s=1}^t c_{s-1}^2}\right) \nonumber.
\end{align}
By Assumption~\ref{ass:clip_bandit}, $c_t\le Const\cdot t^{-\zeta}$
where $0\le \zeta<\frac{1}{2}$. Therefore, we have the desired result,
$$
\mathbb{P}\left(\lambda_{\min}\left(\bar V_t\right) \le \lambda + \frac{1}{2}\lambda_{\min}\left(\mathbb{E}\left[\bX\bX^\top\right]\right)\sum_{s=1}^t c_{s-1} \right) = O\left(\exp\left(-t^{1-2\zeta}\right)\right).
$$
\end{proof}

\section{Additional results with synthetic data experiments}
\label{app:additional_simulation}

In this Appendix, we present additional results from our synthetic
data experiments.
We compare the performance of the Cram with the Sample Splitting method with \cite{zhan2021off}'s off-policy evaluation. For a clearer comparison on the effect of different simulation parameters, we use IPW estimator for both off-policy evaluation and Cram to remove the effect of simulation parameters on the augmentation.
We note that for sample splitting
evaluation, the performance of the different adaptive weights of
\cite{zhan2021off} is close to each other. This is because, in these
experimental settings, we use the last 20\% of bandit at the end of a
bandit sequence for evaluation, where the learned policy is relatively stable and different weights in \cite{zhan2021off} are similar. Therefore, we only present the result with their AWAIPW-NS weight. 

\subsection{Results for different length of Bandit rounds}

\begin{figure}[t!]
  \centering
  \begin{subfigure}[b]{0.49\textwidth}
      \centering
      \includegraphics[width=\textwidth]{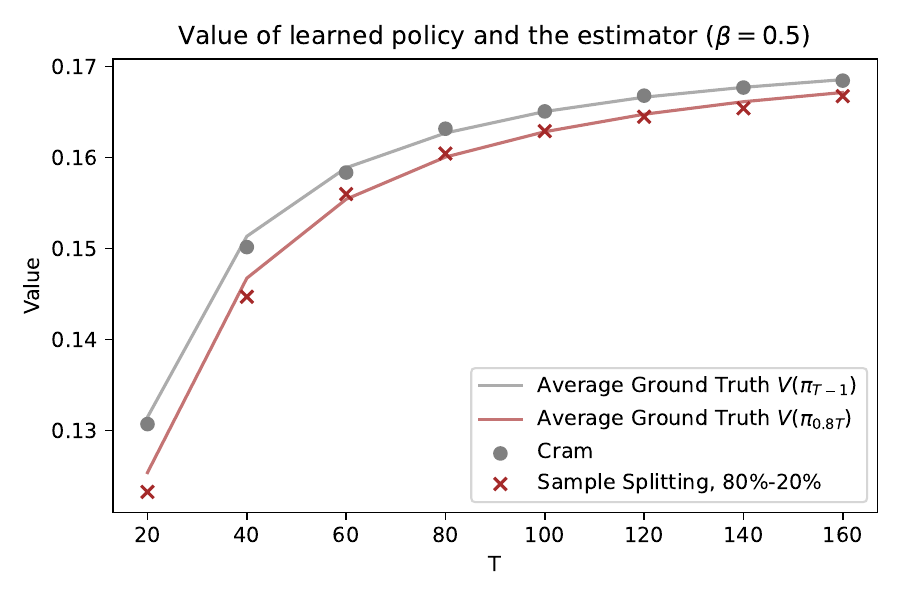}
      \caption{Value, $\beta=0.5$}
      \label{fig:TS-Bias-T}
  \end{subfigure}
  \begin{subfigure}[b]{0.49\textwidth}
      \centering
      \includegraphics[width=\textwidth]{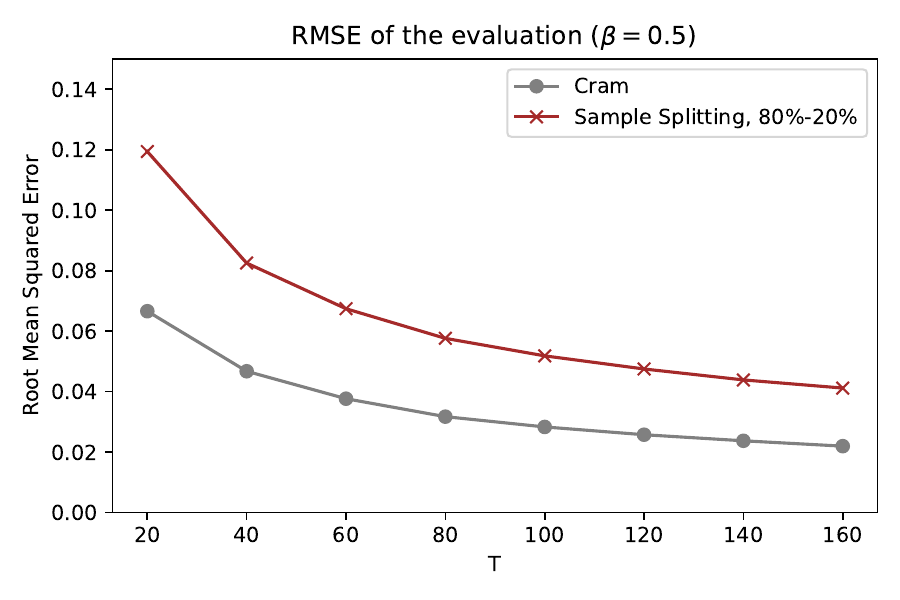}
      \caption{Root MSE, $\beta=0.5$}
      \label{fig:TS-MSE-T}
  \end{subfigure}
  \begin{subfigure}[b]{0.49\textwidth}
      \centering
      \includegraphics[width=\textwidth]{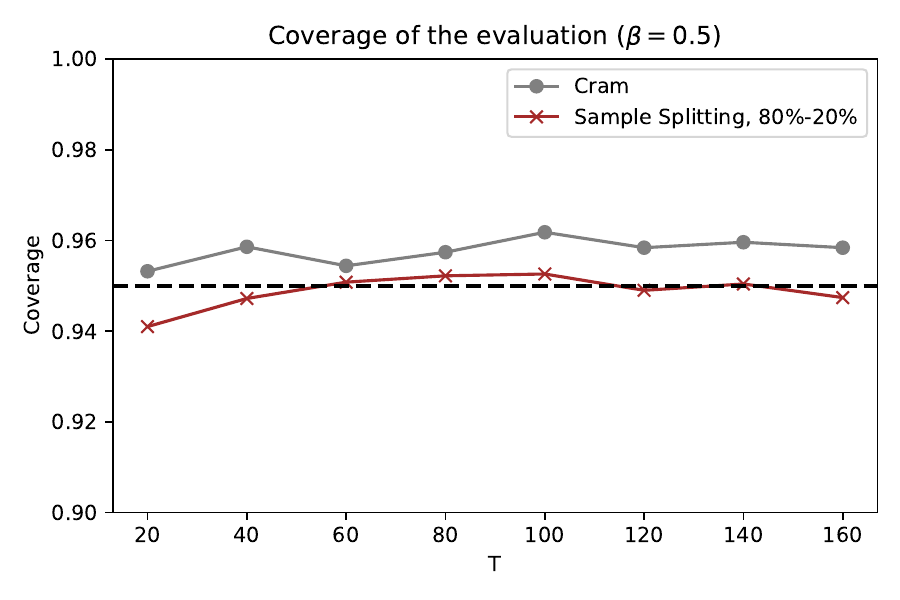}
      \caption{Coverage of 95\% CI, $\beta=0.5$}
      \label{fig:TS-Coverage-T}
  \end{subfigure}
  \begin{subfigure}[b]{0.49\textwidth}
    \centering
    \includegraphics[width=\textwidth]{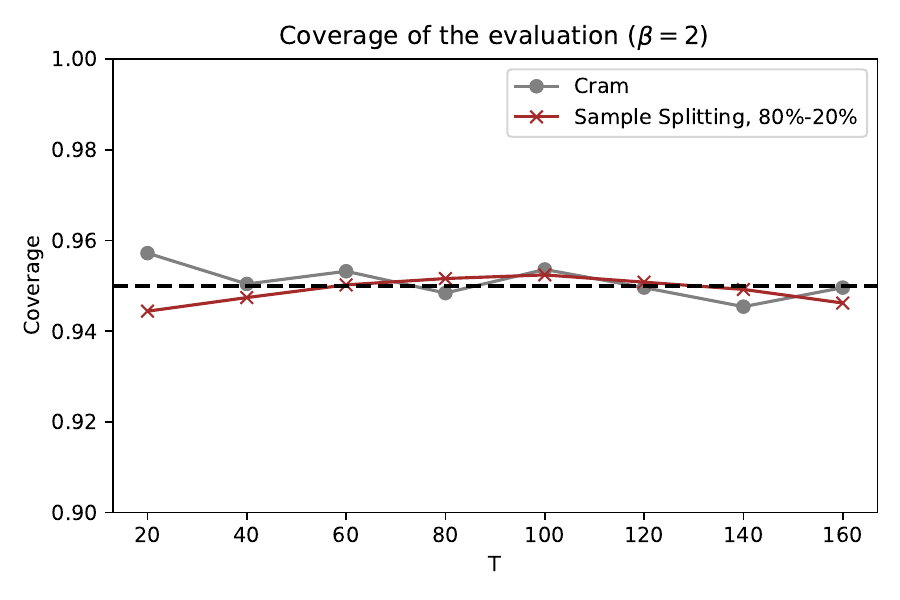}
    \caption{Coverage of 95\% CI, $\beta=2$}
    \label{fig:TS-Coverage-T2}
\end{subfigure}
\caption{Evaluation performance as a function of bandit sequence
  length $T$. The data are collected with Thompson Sampling
  Algorithm. We compare cram evaluation v.s. 80--20\% Sample
  Splitting.}
  \label{fig:TS-T}
\end{figure}

In Figure~\ref{fig:TS-T}, we show the performance of the cram
evaluation and the 80--20\% sample splitting evaluation as a function
of bandit length $T$.  Since cram evaluates the policy $\hat\pi_{T-1}$
rather than the policy trained on the training set, i.e.,
$\hat\pi_{0.8T}$, as shown in Figure~\ref{fig:TS-Bias-T}, the ground
truth of the crammed policy value is always greater than the ground
truth of the trained on sample splitting.  For both cram and sample
splitting methods, the evaluation is unbiased and the evaluation RMSE
decrease as $T$ increases. For different $T$, the cram evaluation
consistently have 40\% improvement in RMSE compared to the 80--20\%
sample splitting evaluation. 

For empirical coverage of the 95\% confidence interval (CI), both cram
and sample splitting methods have close to nominal level coverage. The
cram's coverage is slightly above the nominal level, which is mainly
due to the over-estimation of the cram variance. The consistency
results in Theorem~\ref{thm:variance_estimator} relies on the policy
stability. In finite sample, if the signal is low and the policy is
not stabilizing fast enough, the estimator~${\hat v_{Tj}^2}$ in
Definition~\ref{def:crammed_variance} may slightly overestimate the
variance as we are approximately using all data after the $j$th
iteration as if they are collected using policy $\pi_{j-1}$.  As
expected, in Figure~\ref{fig:TS-Coverage-T2}, when the signal is
stronger, the empirical coverage of the cram CI becomes close to the
nominal level.

\subsection{Results for different signal strengths}
 
Next, we fix the $T$ to 100 and summarize the performance of the cram
and sample splitting evaluation under different signal strengths
$\beta$. Figure~\ref{fig:TS-Beta} shows the RMSE (left axis) and the
coverage of the 95\% confidence interval (right axis) for Thompson
Sampling, UCB, and $\epsilon$-greedy bandit algorithms. For
readability, we only show the sample splitting evaluation results with
NS weights as different weights yield similar results in our
simulation. 

As shown in Figure~\ref{fig:TS-Beta}, for different bandit algorithms
and different signal strengths, the cram method consistently
outperforms sample splitting in terms of evaluation RMSE. The cram
method performs especially well for a greater signal strength, for
which the bandit algorithm stabilizes faster. In all setups, the 95\%
confidence interval based on cramming has a valid empirical coverage
for, though we find slight over-coverage due to the over-estimation of
variance.  This over-coverage is partially alleviated when the signal
strength is greater.
 
\begin{figure}[t]
  \centering
  \begin{subfigure}[b]{0.49\textwidth}
      \centering
      \includegraphics[width=\textwidth]{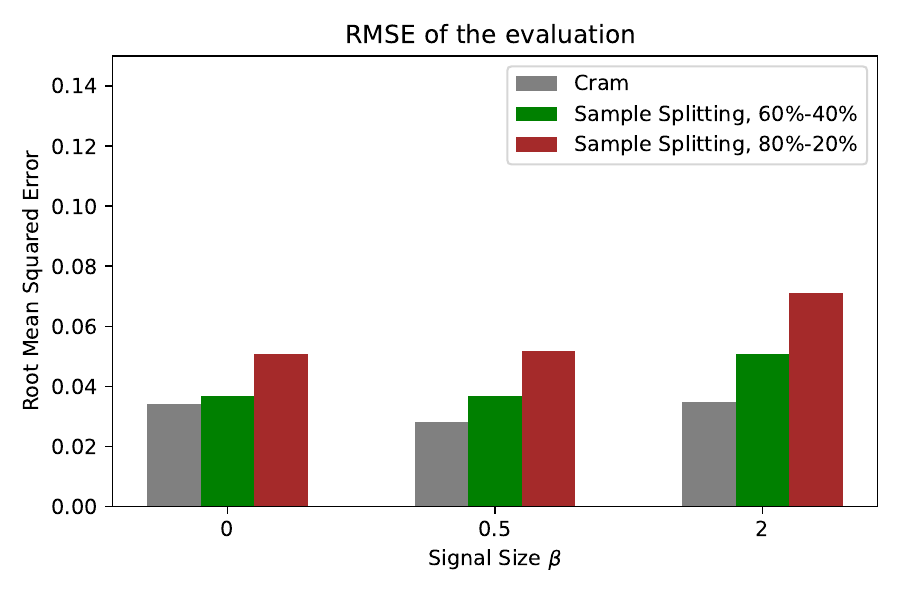}
      \caption{Thompson Sampling RMSE, $T=100$}
      \label{fig:TS-Beta-1}
  \end{subfigure}
  \begin{subfigure}[b]{0.49\textwidth}
      \centering
      \includegraphics[width=\textwidth]{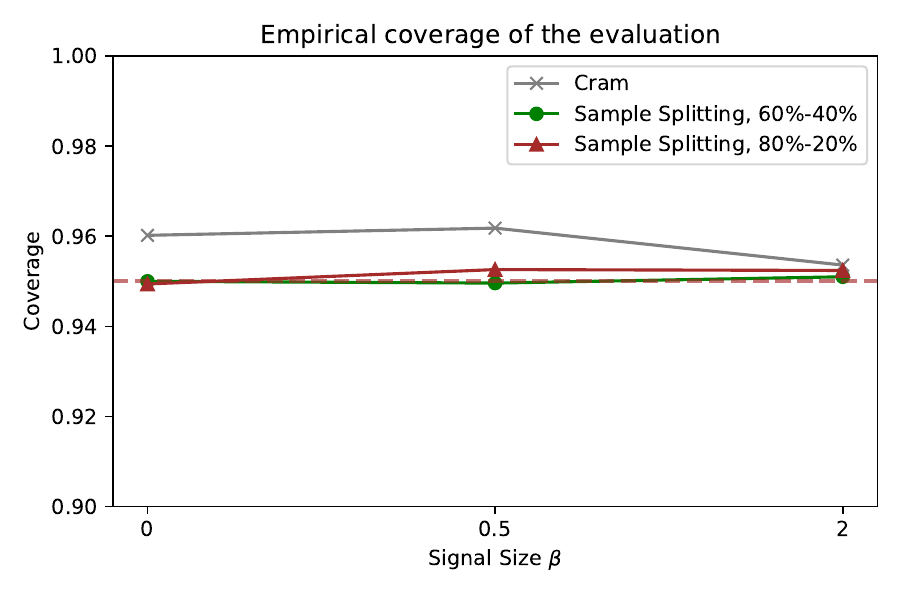}
      \caption{Thompson Sampling coverage, $T=100$}
      \label{fig:TS-Beta-2}
  \end{subfigure}
  \begin{subfigure}[b]{0.49\textwidth}
      \centering
      \includegraphics[width=\textwidth]{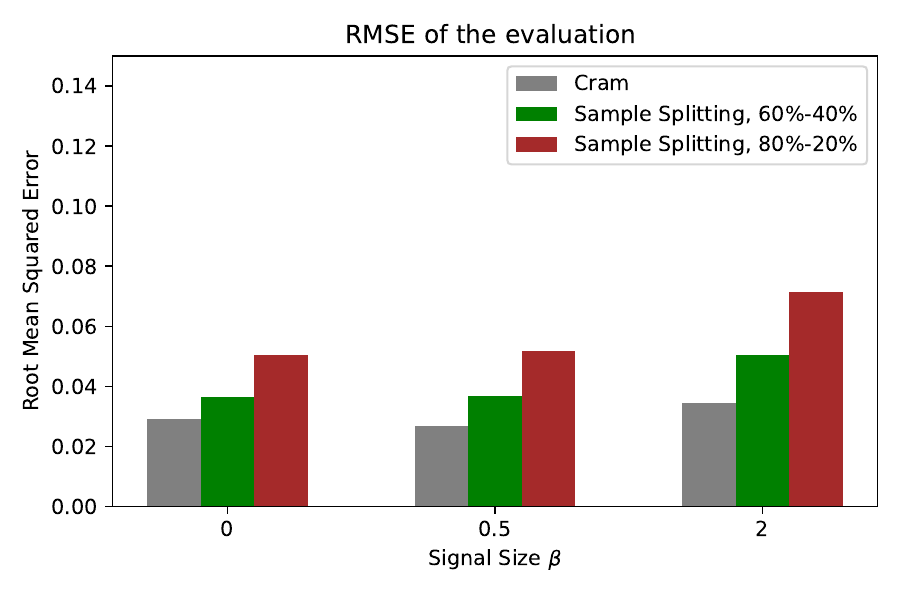}
      \caption{LinUCB RMSE, $T=100$}
      \label{fig:UCB-Beta-1}
  \end{subfigure}
  \begin{subfigure}[b]{0.49\textwidth}
      \centering
      \includegraphics[width=\textwidth]{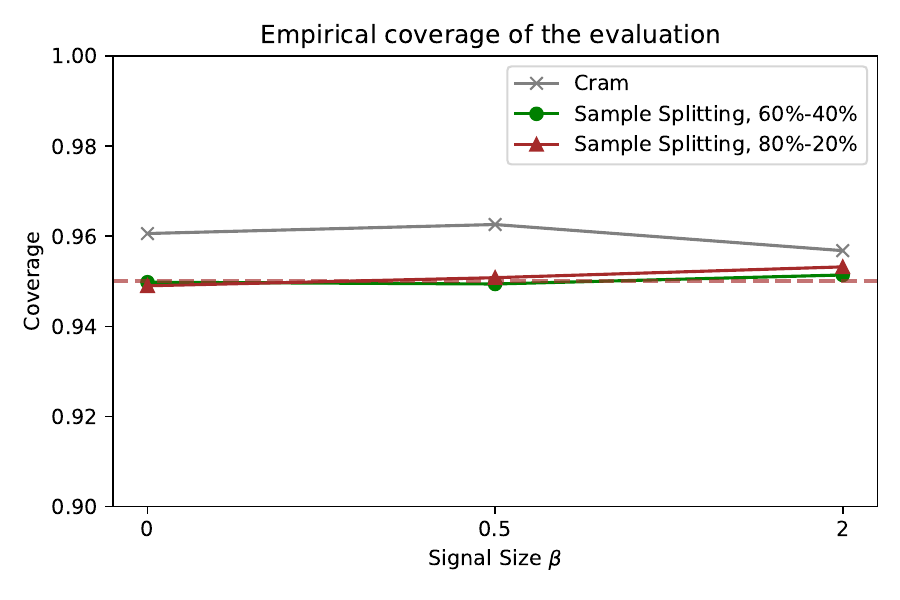}
      \caption{LinUCB coverage, $T=100$}
      \label{fig:UCB-Beta-2}
  \end{subfigure}
  \begin{subfigure}[b]{0.49\textwidth}
      \centering
      \includegraphics[width=\textwidth]{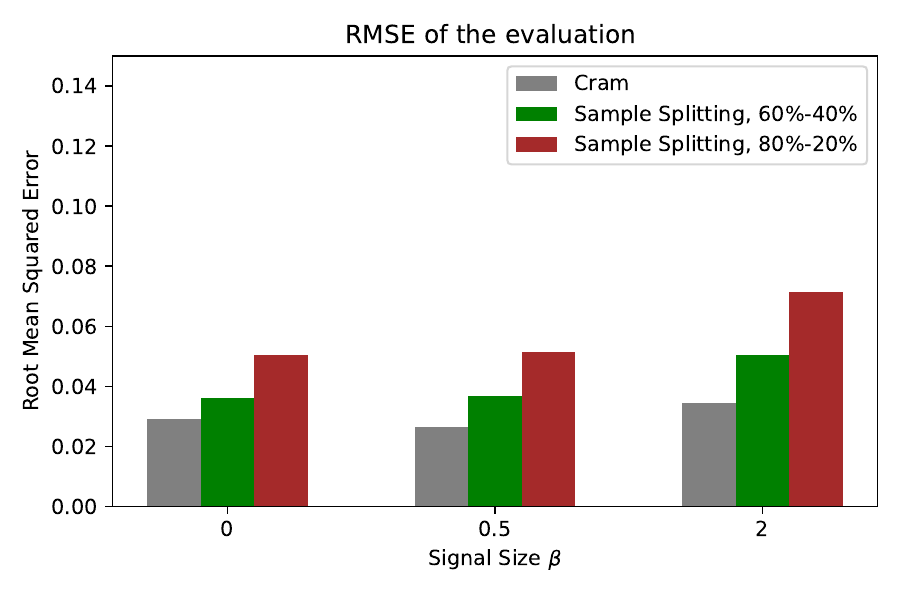}
      \caption{$\epsilon$-greedy RMSE, $T=100$}
      \label{fig:Epsilon-Beta-1}
  \end{subfigure}
  \begin{subfigure}[b]{0.49\textwidth}
      \centering
      \includegraphics[width=\textwidth]{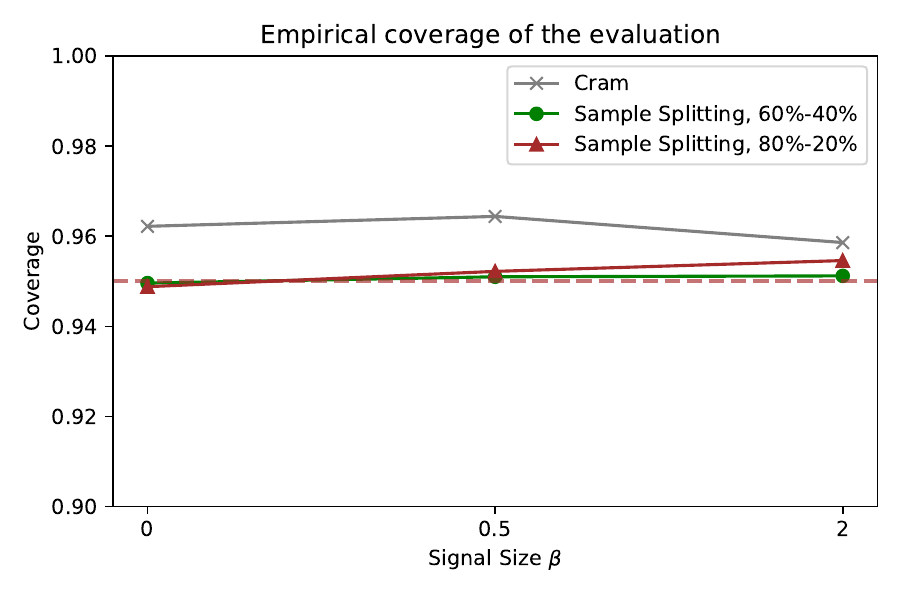}
      \caption{$\epsilon$-greedy coverage, $T=100$}
      \label{fig:Epsilon-Beta-2}
  \end{subfigure}
  \caption{Performance of the evaluation for different bandit
  algorithm and  signal size. The bar plot (left y-axis) shows the
  RMSE, and the line plot (right y-axis) shows the coverage of the
  95\% confidence interval. The decay rate $\eta$ is set to 0.}
\label{fig:TS-Beta}
\end{figure}

\subsection{Results for different clipping decay rates}

In Figure~\ref{fig:Decay}, we examine the performance of the cram
evaluation under different clipping decay rates with the three bandit
algorithms. We find that the cram method still outperforms 80--20\%
sampling splitting, though the 95\% CI empirical coverage is slightly
more conservative than sample splitting. The cram method works better
when the clipping rate decays slower, reducing the instances of
extreme propensity scores.
 
\begin{figure}[t]
  \centering
  \begin{subfigure}[b]{0.49\textwidth}
      \centering
      \includegraphics[width=\textwidth]{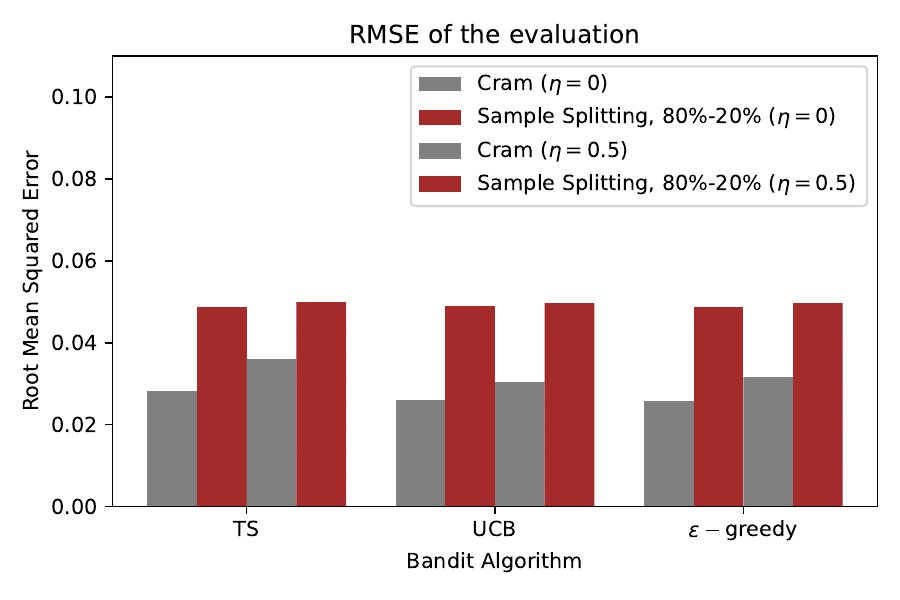}
      \caption{Root MSE, $T=100,\beta=0.5$}
      \label{fig:Decay1}
  \end{subfigure}
  \begin{subfigure}[b]{0.49\textwidth}
      \centering
      \includegraphics[width=\textwidth]{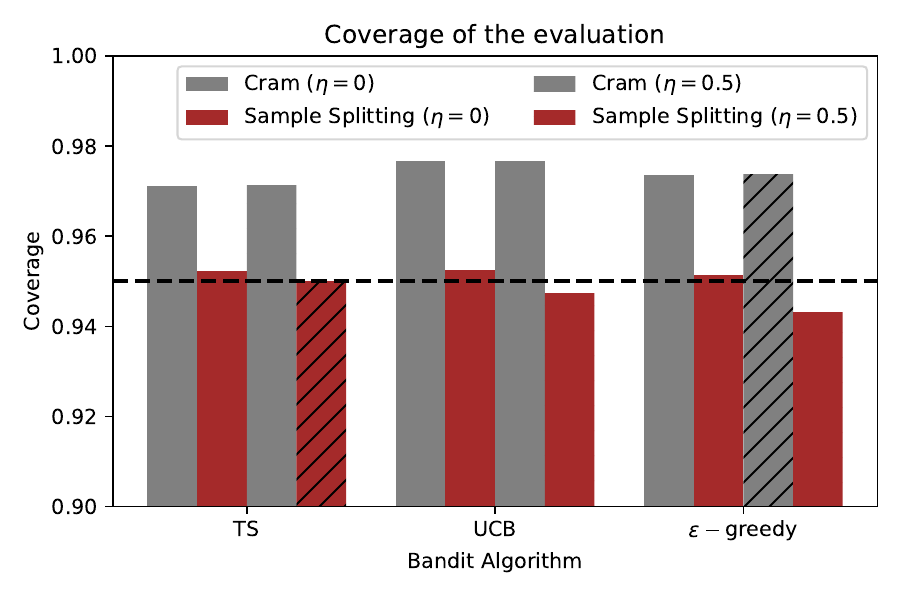}
      \caption{Coverage, $T=100, \beta=0.5$}
      \label{fig:Decay2}
  \end{subfigure}
  \caption{Performance of the evaluation for different bandit
    algorithms and clipping decaying rates. The left plot shows the
    RMSE, and the right plot shows the coverage of the 95\% confidence
    interval.} 
  \label{fig:Decay}
\end{figure}

\end{document}